\documentclass[runningheads]{llncs}
\pdfoutput=1

\usepackage{eccv}
\usepackage{eccvabbrv}
\usepackage{graphicx}
\usepackage{booktabs}
\usepackage{mathtools}
\usepackage{pifont}
\usepackage{multirow}
\usepackage{colortbl}
\usepackage{tabularx}
\newcolumntype{C}{>{\centering\arraybackslash}X}
\usepackage{float}
\usepackage{placeins}
\usepackage{tcolorbox}
\tcbuselibrary{skins}
\newcommand{\cN}{\mathcal{N}}
\newcommand{\cS}{\mathcal{S}}
\newcommand{\cU}{\mathcal{U}}
\newcommand{\tA}{\tilde{A}}
\newcommand{\nidx}{\nu}
\newcommand{\sidx}{\sigma}
\newcommand{\vphi}{\varphi}

\newtcolorbox{desideratum}[2][]{%
  colback=eccvblue!3,
  colframe=eccvblue!30,
  boxrule=0.4pt,
  arc=0pt,
  left=6pt, right=6pt, top=4pt, bottom=4pt,
  fonttitle=\bfseries,
  coltitle=black,
  colbacktitle=eccvblue!7,
  toptitle=2pt, bottomtitle=2pt,
  title={Desideratum #2},
  #1
}

\newtcolorbox{theoremframe}[1][]{%
  enhanced,
  colback=white,
  colframe=white,
  boxrule=0pt,
  borderline west={2.5pt}{0pt}{eccvblue!50},
  arc=0pt,
  left=10pt, right=4pt, top=4pt, bottom=4pt,
  #1
}

\newtcolorbox{remarkbox}[2][]{%
  enhanced,
  colback=white,
  colframe=white,
  boxrule=0pt,
  borderline west={2pt}{0pt}{black!18},
  arc=0pt,
  left=10pt, right=4pt, top=4pt, bottom=4pt,
  fonttitle=\bfseries\itshape,
  coltitle=black!60,
  title={#2},
  #1
}

\newtcolorbox{proofstep}[2][]{%
  enhanced,
  colback=white,
  colframe=white,
  boxrule=0pt,
  borderline west={1.5pt}{0pt}{black!25},
  arc=0pt,
  left=8pt, right=2pt, top=2pt, bottom=2pt,
  before skip=4pt, after skip=4pt,
  before upper={\noindent\textbf{#2}\ },
  #1
}

\newtcolorbox{operatorbox}[2][]{%
  colback=black!2,
  colframe=black!20,
  boxrule=0.3pt,
  arc=0pt,
  left=6pt, right=6pt, top=4pt, bottom=4pt,
  fonttitle=\bfseries,
  coltitle=black!70,
  colbacktitle=black!4,
  toptitle=2pt, bottomtitle=2pt,
  title={#2},
  #1
}

\newtcolorbox{structframe}[2][]{%
  colback=white,
  colframe=black,
  boxrule=0.4pt,
  arc=0pt,
  left=8pt, right=8pt, top=6pt, bottom=6pt,
  before upper={\noindent\textbf{\textit{#2}}\enspace},
  #1
}

\newtcolorbox{consequenceframe}[2][]{%
  enhanced,
  colback=white,
  colframe=black,
  boxrule=0.4pt,
  arc=0pt,
  left=8pt, right=8pt, top=6pt, bottom=6pt,
  attach boxed title to top left={yshift=-\tcboxedtitleheight/2, xshift=8pt},
  boxed title style={%
    colback=white, colframe=white, boxrule=0pt,
    left=3pt, right=3pt, top=0pt, bottom=0pt},
  fonttitle=\bfseries\itshape,
  coltitle=black,
  title={#2},
  #1
}

\newtcolorbox{synthesisframe}[2][]{%
  enhanced,
  colback=white,
  colframe=white,
  boxrule=0pt,
  borderline north={0.4pt}{0pt}{black},
  borderline south={0.4pt}{0pt}{black},
  arc=0pt,
  left=4pt, right=4pt, top=6pt, bottom=6pt,
  before upper={\noindent\ifx\relax#2\relax\else\textbf{\textit{#2}}\enspace\fi},
  #1
}

\usepackage[accsupp]{axessibility}
\graphicspath{{figures/}}
\usepackage[pagebackref,breaklinks,colorlinks,citecolor=eccvblue]{hyperref}
\usepackage{orcidlink}
\usepackage[capitalize,noabbrev]{cleveref}

\begin{document}

\title{Attribution Upsampling should Redistribute, Not Interpolate}

\titlerunning{Attribution Upsampling should Redistribute, Not Interpolate}

\author{Vincenzo Buono\inst{1} \and
Peyman Sheikholharam Mashhadi\inst{1} \and
Mahmoud Rahat\inst{1} \and
Prayag Tiwari\inst{1} \and
Stefan Byttner\inst{1}}

\authorrunning{V.~Buono et al.}

\institute{Halmstad University, Halmstad, Sweden\\
\email{\{vincenzo.buono, peyman.mashhadi, mahmoud.rahat, prayag.tiwari, stefan.byttner\}@hh.se}}

\maketitle

\begin{abstract}
  Attribution methods in explainable AI rely on upsampling techniques that were designed for natural images, \emph{not} saliency maps. Standard bilinear and bicubic interpolation systematically corrupts attribution signals through aliasing, ringing, and boundary bleeding, producing spurious high-importance regions that misrepresent model reasoning. We identify that the core issue is treating attribution upsampling as an \emph{interpolation} problem that operates in isolation from the model's reasoning, rather than a \emph{mass redistribution} problem where model-derived semantic boundaries must govern how importance flows. We present \textbf{Universal Semantic-Aware Upsampling (USU)}, a principled method that reformulates upsampling through ratio-form mass redistribution operators, provably preserving attribution mass and relative importance ordering. Extending the axiomatic tradition of feature attribution to upsampling, we formalize four desiderata for faithful upsampling and prove that interpolation structurally violates three of them. These same three force any redistribution operator into a ratio form; the fourth selects the unique potential within this family, yielding USU. Controlled experiments on models with known attribution priors verify USU's formal guarantees; evaluation across ImageNet, CIFAR-10, and CUB-200 confirms consistent faithfulness improvements and qualitatively superior, semantically coherent explanations.
	\keywords{Explainable AI \and Feature Attribution \and Saliency Map Upsampling}
\end{abstract}

\begin{figure}[t]
  \centering
  \includegraphics[width=\linewidth]{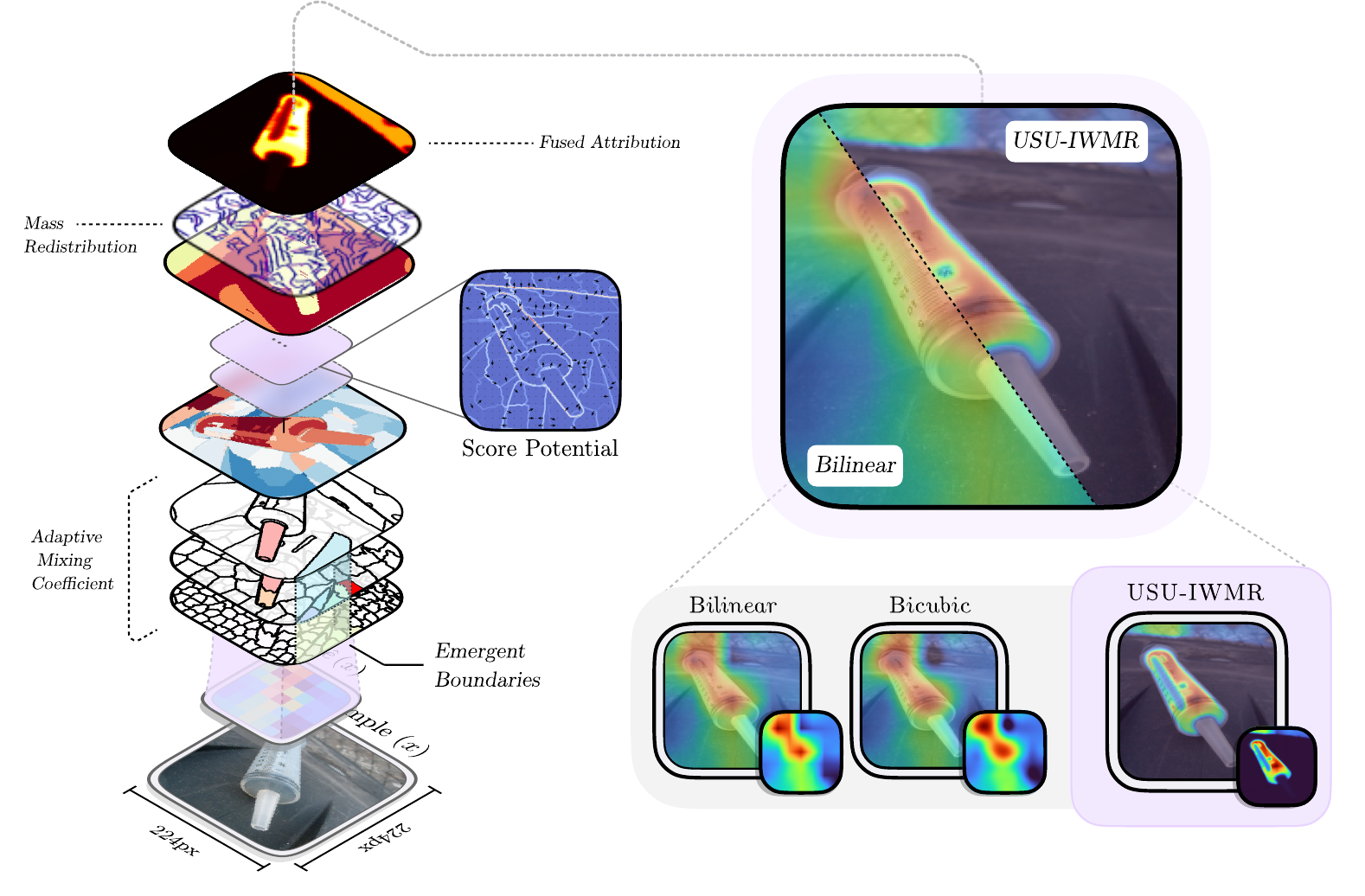}
  \vspace{-4pt}
  \caption{\textbf{USU replaces interpolation with semantically guided mass redistribution.} \emph{Left:} The pipeline decomposes upsampling into score-potential computation, emergent boundary detection, and ratio-form mass redistribution. \emph{Right:} Standard interpolation (bilinear, bicubic) bleeds attribution across object boundaries; USU-IWMR produces sharp, semantically coherent saliency maps that faithfully reflect model reasoning.}
  \label{fig:hero}
\end{figure}

\begin{synthesisframe}{}
Interpolation reconstructs a continuous signal. The resulting artifacts distort what practitioners \emph{see} and corrupt what faithfulness metrics \emph{measure}. USU \emph{redistributes mass} according to prediction relevance: the unique operator satisfying conservation, monotonicity, score-invariance, and locality.
\end{synthesisframe}

\section{Introduction}
\label{sec:intro}

\begin{figure}[t]
  \centering
  \includegraphics[width=\linewidth]{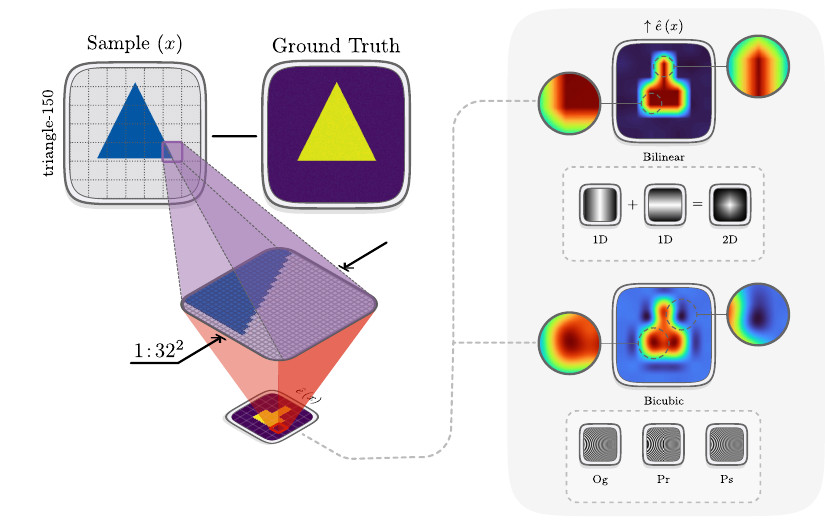}
  \caption{\textbf{Upsampling corrupts saliency maps through reconstruction errors.} Classical saliency upsampling kernels can introduce reconstruction errors, leading to incorrect visual explanations. RRL-constrained~\cite{7db2afdc5eb5db46cc64185d0a51ed079b0976e8} model on synthetic shapes; see the supplemental material for setup and extended results.}
  \label{fig:upsampling-errors}
\end{figure}

Neural networks are increasingly deployed in safety-critical vision applications, from medical imaging to autonomous driving. Understanding their decisions has become a prerequisite for deployment. Feature attribution methods such as Grad-CAM \cite{5582bebed97947a41e3ddd9bd1f284b73f1648c2}, Integrated Gradients \cite{f302e136c41db5de1d624412f68c9174cf7ae8be}, SHAP \cite{442e10a3c6640ded9408622005e3c2a8906ce4c2}, Layer-wise Relevance Propagation \cite{a002e71561c90767240672f357b7d9e6d4d95186}, and RISE \cite{d00c7fc5201405d5411b5ad3da93c5575ce8f10e} have become the standard tools for this purpose, producing importance maps that highlight which input regions drive a prediction. These methods universally produce coarse-resolution maps that must be upsampled to input resolution before interpretation. This ubiquitous upsampling step has been treated as a solved problem. We show it is not: standard upsampling techniques \emph{systematically corrupt} the explanations they produce.

Standard interpolation methods such as bilinear and bicubic upsampling were designed to reconstruct natural images \cite{de5ca64429d18710232ecc0da54998e1a401fcbe,6731642027d82aa37289331a77f06c0871689c39}. They were never intended for explainability. Attribution maps demand fundamentally different properties: \emph{importance mass} must be conserved so the explanation accounts for the full prediction, redistribution must follow the model's own \emph{semantic structure} rather than spatial coordinates, and upsampled attributions must remain \emph{spatially local}, ensuring the fine-grained explanation remains faithful to the original coarse attribution. Standard interpolation violates all three.

This mismatch manifests as concrete artifacts that corrupt explanation fidelity (\Cref{fig:upsampling-errors}). Aliasing causes semantic boundaries to be incorrectly smoothed; ringing artifacts from bicubic interpolation's negative weights create spurious high-attribution regions; boundary bleeding allows attribution to leak across semantic boundaries. These artifacts \emph{distort} what practitioners see and \emph{corrupt} what faithfulness metrics measure. The fundamental issue is that interpolation operates in isolation from the model's reasoning, treating all spatial locations as interchangeable regardless of their importance to the prediction. The resulting explanation is faithful to neither the model's reasoning nor the coarse attribution it was meant to refine. As \Cref{fig:identical-saliencies} shows, three samples with entirely different shapes and ground-truth importance regions yield identical upsampled explanations: interpolation cannot distinguish what the model attends to from what it ignores.

We formalize four desiderata that any \emph{faithful} upsampling operator must satisfy: neighbourhood completeness (D1), segment monotonicity (D2), constant conditioning strength (D3), and locality (D4). These extend the axiomatic foundations of feature attribution theory \cite{f302e136c41db5de1d624412f68c9174cf7ae8be,4f5e178d2cecc2c10fcb82ddd9c20a16d210bdbc,8102cad29ec8e808c7395ac6ee668da495f07206} to the upsampling domain.

We prove that widely adopted interpolation methods such as bilinear and bicubic upsampling violate (D1), (D2), and (D4), corrupting attribution signals and breaking faithfulness. Their kernel support crosses neighbourhood boundaries, leaking mass; their weights depend on geometry rather than semantics; and boundary pixels depend on adjacent neighbourhoods. We derive the \textbf{Universal Semantic-Aware Upsampling (USU)} operator as the unique ratio-form solution satisfying all four desiderata, with uniqueness following from Luce's Choice Axiom \cite{64cce5f2fb89e06be57b32294df1078dd8227bad}: the Independence of Irrelevant Alternatives forces any mass-conserving, monotone, local redistribution operator into the ratio form. A temperature parameter controls redistribution sharpness. We also introduce \textbf{Soft IWMR}, a variant that intentionally relaxes (D1) to allow mass flow between neighbourhoods based on global importance, useful when coarse attributions contain systematic biases.

Whereas prior work has treated attribution upsampling as a solved implementation detail, we provide the first formal treatment, from axiomatic characterization through structural incompatibility results to a provably unique solution. Specifically:
\begin{enumerate}
    \item \textbf{Axiomatic Framework.} We formalize four desiderata (D1)--(D4) for faithful attribution upsampling, extending the axiomatic tradition of feature attribution theory. We prove that bilinear and bicubic interpolation structurally violate three of these desiderata (D1), (D2), (D4), establishing that interpolation-based upsampling is fundamentally unsuitable for attribution maps.

    \item \textbf{USU Operator.} We derive Universal Semantic-Aware Upsampling as the \emph{unique} ratio-form operator satisfying all four desiderata, with uniqueness following from Luce's Independence of Irrelevant Alternatives. A temperature parameter controls redistribution sharpness, and the Soft IWMR variant relaxes strict mass conservation when coarse attributions contain systematic biases.

    \item \textbf{Empirical Validation.} We first verify USU's formal guarantees on constrained models where ground-truth attributions are known, achieving 2--4 orders of magnitude improvement in Infidelity. We then confirm these guarantees transfer to real-world settings: evaluations across ImageNet, CIFAR-10, and CUB-200 with three architectures (VGG16, ResNet50, ViT-B-16) demonstrate consistent faithfulness improvements across diverse quantitative metrics and qualitatively superior, semantically coherent explanations.
\end{enumerate}

\section{Related Work}
\label{sec:related}

\subsection{Spatial Resolution in Feature Attribution}
\label{sec:related-attribution}

The attribution ecosystem spans diverse methodological families with markedly different native resolutions \cite{93f566bda2b8a419f83d0b532e7d048fc5ce4ba5}. The most widely deployed family, CAM and its variants \cite{31f9eb39d840821979e5df9f34a6e92dd9c879f2,5582bebed97947a41e3ddd9bd1f284b73f1648c2,2c1b79a13087a8e9bc2a4446384145e6f85d4820,c2ecbcd66b46b03e44cbc51bc0be7f7d3a09f4b9,2e257b5f0a0148076e9161202880b022e521ac59}, operates on final convolutional feature maps (typically $7{\times}7$ or $14{\times}14$) and upsamples via bilinear interpolation. Vision Transformer interpretability methods \cite{76a9f336481b39515d6cea2920696f11fb686451,0acd7ff5817d29839b40197f7a4b600b7fba24e4,7ec5f207263100ea2d45db595712f611a74bafd9} face an analogous constraint, producing patch-level attributions that must be tiled or interpolated to pixel resolution. Black-box approaches such as RISE \cite{d00c7fc5201405d5411b5ad3da93c5575ce8f10e} use coarse random masks, while LIME \cite{c0883f5930a232a9c1ad601c978caede29155979} and SHAP \cite{442e10a3c6640ded9408622005e3c2a8906ce4c2} typically operate on superpixel features. Even methods that nominally produce full-resolution maps encounter the gap in practice: Integrated Gradients \cite{f302e136c41db5de1d624412f68c9174cf7ae8be}, DeepLIFT \cite{1a2118bed729579528deb51e745d58dd3629baf6}, and Layer-wise Relevance Propagation \cite{a002e71561c90767240672f357b7d9e6d4d95186} are noisy and often applied to intermediate layers, while FullGrad \cite{0e09d47723cccf0a9bad7cb1badaa367ea37fe14} aggregates multi-layer maps that are individually upsampled.
Whether structurally coarse or nominally pixel-level, these methods rely on interpolation designed for image reconstruction when mapping attributions to input resolution. The faithfulness of this upsampling step itself remains unexamined: no prior work formalizes what properties attribution upsampling must satisfy or whether interpolation can guarantee them.

\subsection{Saliency Map Refinement}
\label{sec:related-refinement}

The community has recognized the resolution problem and proposed various remedies, though all remain heuristic. \emph{Multi-layer aggregation} methods (LayerCAM \cite{9f82a887ebfb7e69705feff355012f26549d6a0b}, CAMERAS \cite{9fdfc2df42a159dbf1134ad9b367c2afd904ecd0}, Poly-CAM \cite{a3cc917b7e0607c1dbc8ac511a7e2655188b7855}) combine maps from multiple network depths but still upsample each layer via interpolation. \emph{Augmentation-based} approaches (Augmented Grad-CAM \cite{3e79a70159cccec297a6b13872702518a4d8a116}, Jitter-CAM \cite{e4071bdfd6dd2c330fbe4e3a0f934cb0d7aa7a4d}, EVET \cite{20d1d32dd959ec5b935c2131cbee3b8d38df4d6f}, Gradual Extrapolation \cite{32d1e2ae372a194e0c91d96cf3c6b9cdf34d0625}) break the coarse grid via geometric or network-layer transformations but provide no formal mass guarantees. \emph{Segment-based} methods are closest to USU: XRAI \cite{faf3a22627f198ce56671b6f8a9b3d5cc3164a91} aggregates Integrated Gradients over image segments and SATs \cite{58eb3c25d372df30c4a125d7fc71efa8073773cc} leverage SAM for semi-global explanations, but neither formally redistributes mass. Taimeskhanov~\etal~\cite{3d1b5bce71c0bc02e35e0c3b0f82a6e3fb6d6c63} demonstrate that CAMs can ``see through walls,'' showing the problem extends deeper than resolution alone. None of these methods isolate upsampling as a principled operation; USU fills this gap as a standalone, axiomatically derived operator.

\subsection{Axiomatic Foundations in Explainability}
\label{sec:related-axiomatics}

The axiomatic approach has been remarkably successful in uniquely determining attribution methods from natural properties. Shapley values \cite{8fd17bf36bc22477bb2237c2be6e3212b753969d,442e10a3c6640ded9408622005e3c2a8906ce4c2} are uniquely characterized by efficiency, symmetry, linearity, and the null player axiom, the gold standard of axiomatic explainability. Integrated Gradients \cite{f302e136c41db5de1d624412f68c9174cf7ae8be} admit multiple independent axiomatic characterizations via completeness, sensitivity, implementation invariance, and related properties \cite{b0ceba96db8b7b71beb2f0906568ffc143fddbf9,c5f34ad0e0dfb818f3304f318512271b77f1b954}. Unified decomposition-and-allocation frameworks \cite{a3272b24b42da1cbcdc60c02bcaef8ceca1cf2b1,a84c7a715e294c4536651adf9f848f8b30ea1a2d,508aa9f057845d5b22c8d7e1ce2034b4131c9fdd,0e6384b81172e3e3c69518b793d131fc5af356b8,a05041a2ce7b69bbb85f9c9944d736be9fd44257,4e2e782126fb4015c25897ad631e7ba2b893f54f} reveal Shapley and IG as special cases of a common paradigm, and interaction extensions \cite{3da37f966df6b3117be7070e2fc90e9bdeccb35e,798ea191aad9401462b405fde1a6cefb4fe53fd5,ec73937dddc1d5573072d39cc5131178c796dc03} lift these axioms to higher-order effects. Yet impossibility results \cite{8102cad29ec8e808c7395ac6ee668da495f07206}, conflicting operationalizations \cite{4f5e178d2cecc2c10fcb82ddd9c20a16d210bdbc}, and axiom-level critiques \cite{364f02eff4f10ea602d86fd8c98c8694b76f46fd,ae0d489f14ec350131f11d9a0f874303f312bea5,a9a8464ea537d13b1deca7532f57d97383b7a154} highlight that axioms constrain but context matters, motivating domain-specific axiomatizations.

A key theoretical tool from outside explainability is Luce's Choice Axiom \cite{64cce5f2fb89e06be57b32294df1078dd8227bad}, specifically the Independence of Irrelevant Alternatives (IIA), which uniquely characterizes the exponential/softmax form in discrete choice theory \cite{251646ea79c2a3973311852975f70a64878fffec}. Yet this powerful axiomatic program has never been applied to attribution upsampling. We extend it, showing that natural desiderata for faithful attribution refinement uniquely determine a ratio-form operator.

\subsection{Faithfulness Evaluation}
\label{sec:related-evaluation}

Numerous perturbation-based metrics \cite{d00c7fc5201405d5411b5ad3da93c5575ce8f10e,6df11b0bb0244d4d36e8955436067cc5d19734fa,5f614777d25efd14b7426e99cb2544f2d6be133e,04e4183fd18093ab34e8c82bc4403b75901e7cec} and standardized toolkits \cite{30e776268268e84becd2863b0632247da61238b9,8c3babcb113081d0c4cfdfbd6fb3518a595892c9,868e35374cb9c0fc6e4cfb17f96835aefcf520cc} evaluate attribution quality, yet no consensus has emerged: metrics frequently disagree \cite{1f69f1b02aee5d0d0ad94889c54b4de9d34d2405,e46992da8b56028df35a2d350bf285bc03e7a24d}, are sensitive to design choices \cite{eacdc609f79804d5067616f1b538a127c58c069a}, and can fail even with known ground truth \cite{c8b557903a87b0feaca37003b104a3ba22dda4cc}. A complementary strategy constructs models with known attribution priors \cite{7db2afdc5eb5db46cc64185d0a51ed079b0976e8,96d1b48417113b41a67557c50e3e5405c3c86a55,e0714f730d557d68e552284b3c9bd6567f116ca5,7d7a17da63a107424ddeec91612341100d8de42e} so that ground-truth explanations are available by design. Crucially, existing metrics evaluate the full attribution pipeline end-to-end, conflating attribution-method quality with upsampling fidelity. We adopt both strategies: controlled experiments isolate the upsampling contribution, while standard benchmarks confirm that better upsampling translates to better explanations.

\begin{figure}[t]
  \centering
  \includegraphics[width=\linewidth]{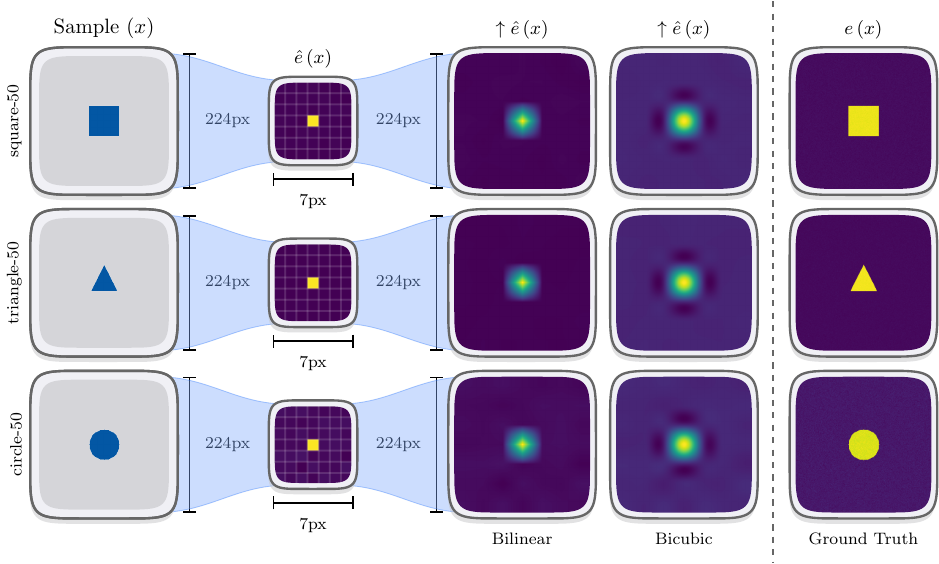}
  \caption{\textbf{Identical saliency for distinct inputs.} Each row shows (left) the input, (middle) the raw heatmap, (right) the upsampled saliency map, and (rightmost) the ground-truth attribution priors. Despite distinct ground truths, the upsampled explanations converge to a single pattern: interpolation cannot distinguish what the model attends to from what it ignores. RRL-constrained~\cite{7db2afdc5eb5db46cc64185d0a51ed079b0976e8} model on synthetic shapes; see the supplemental material for setup and extended results.}
  \label{fig:identical-saliencies}
\end{figure}

\section{From Interpolation to Redistribution}
\label{sec:method}

Standard upsampling treats a coarse attribution map as a signal to be smoothly reconstructed. But \textbf{attribution maps are not images}. They carry \emph{mass}: each neighbourhood's total attribution accounts for the model's reasoning about that region. Any upsampling that \emph{creates or destroys} that mass \emph{distorts} the explanation it was meant to refine. The question is not \emph{``what value belongs at this pixel?''}\ but \emph{``how should this neighbourhood's mass be allocated among its pixels?''} Formalizing this yields a complete theory: interpolation provably fails three necessary requirements (\Cref{sec:interpolation-failure}), four desiderata characterize faithful redistribution (\Cref{sec:desiderata}), and a uniqueness argument forces the solution (\Cref{sec:uniqueness}). \Cref{fig:identical-saliencies} illustrates this failure: three distinct inputs yield identical upsampled explanations.

\subsection{Problem Setup}
\label{sec:setup}

\paragraph{Lattice and partitions.}
Let $\Omega = \{0,\ldots,H{-}1\}\times\{0,\ldots,W{-}1\}$ denote the \textbf{pixel lattice}. An \textbf{upsampling neighbourhood system} $\cN = \{N_0,\ldots,N_{K-1}\}$ partitions $\Omega$ into $K$ disjoint sets, each the receptive field of one coarse attribution value; we write $\nidx\colon\Omega\to\{0,\ldots,K{-}1\}$ for the index function ($\nidx(x)=k$ iff $x\in N_k$). A \textbf{segmentation partition} $\cS = \{S_0,\ldots,S_{P-1}\}$ independently partitions $\Omega$ into $P$ semantic segments with index function $\sidx\colon\Omega\to\{0,\ldots,P{-}1\}$. The partition $\cN$ encodes the \emph{spatial} structure of the coarse attribution; $\cS$ provides the model's \emph{semantic} structure that guides how attribution mass is allocated among pixels.

\paragraph{Attribution and mass.}
A \textbf{coarse attribution} $A\colon\Omega\to\mathbb{R}$ is piecewise constant on neighbourhoods, $A(x)=a_{\nidx(x)}$. The \textbf{neighbourhood mass} aggregates attribution within each neighbourhood:
\begin{equation}
  M_k = \sum_{x \in N_k} A(x).
  \label{eq:neighbourhood-mass}
\end{equation}
\textbf{Segment scores} $s\colon\{0,\ldots,P{-}1\}\to[0,1]$ assign normalized importance to each segment. Given the tuple $(A,s,\cN,\cS)$, we seek an upsampling operator producing a \textbf{fused attribution} $\tA\colon\Omega\to\mathbb{R}$:
\begin{equation}
  \tA = \cU(A,s,\cN,\cS)\colon\Omega\to\mathbb{R}.
  \label{eq:upsampling-operator}
\end{equation}

\paragraph{Error model.}
Upsampling quality is captured by two complementary error types. The \textbf{$\alpha$-error} (spurious attribution) and \textbf{$\beta$-error} (signal loss) at pixel $x$ relative to ground-truth $A^*$ are:
\begin{equation}
  \alpha(x)=\max\bigl(0,|\tA(x)|-|A^*(x)|\bigr),\quad
  \beta(x)=\max\bigl(0,|A^*(x)|-|\tA(x)|\bigr).
  \label{eq:errors}
\end{equation}
At each pixel exactly one can be nonzero.

\subsection{Why Interpolation Fails}
\label{sec:interpolation-failure}

Before formalizing what faithful redistribution requires, we establish what it cannot be: \textbf{interpolation}. Classical kernels~\cite{de5ca64429d18710232ecc0da54998e1a401fcbe,6731642027d82aa37289331a77f06c0871689c39} optimize for smooth signal reconstruction, ignoring both the \emph{attribution mass} that must be conserved and the \emph{semantic structure} that encodes prediction relevance. \textbf{Three structural violations} make this incompatibility precise.

\begin{theorem}[Interpolation Violates Mass Conservation]
\label{thm:interp-d1}
Any interpolation kernel whose support extends across neighbourhood boundaries with non-zero weight violates mass conservation. Specifically, there exists a coarse attribution $A$ such that $\sum_{x \in N_k} \tA(x) \neq M_k$.
\end{theorem}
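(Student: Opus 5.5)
I will model interpolation as a linear operator: $\tA(x)=\sum_{j} w(x,j)\,a_j$, where $a_j$ are the coarse values and the weights $w(x,j)$ come from the kernel evaluated at the offset between $x$ and the centre of coarse cell $j$. The hypothesis that the kernel support crosses neighbourhood boundaries with non-zero weight means the following. There exist a neighbourhood $N_k$, a pixel $x_0\in N_k$ and an index $j\neq k$ with $w(x_0,j)\neq 0$. Mass conservation for every $A$ then becomes a linear identity in the coarse values. Since $M_k=|N_k|\,a_k$ for piecewise-constant $A$, the identity $\sum_{x\in N_k}\tA(x)=M_k$ for all $A$ holds exactly when
\[
\sum_{x\in N_k} w(x,j)=|N_k|\,\delta_{jk}\quad\text{for all } j.
\]
My plan is to exhibit a coarse attribution that breaks this identity.

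The natural witness is a delta attribution: set $a_j=1$ and $a_i=0$ for all $i\neq j$. Then $M_k=0$, while $\sum_{x\in N_k}\tA(x)=\sum_{x\in N_k} w(x,j)$. The only thing to rule out is cancellation, since a single nonzero $w(x_0,j)$ does not by itself make the column sum over $N_k$ nonzero. I handle this in two cases.

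\emph{Case 1: nonnegative kernel} (bilinear, nearest-type). All $w(x,j)\ge 0$ and $w(x_0,j)>0$, so the sum is strictly positive and differs from $M_k=0$.

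\emph{Case 2: signed kernel} (bicubic). The column sum over $N_k$ could vanish by accident. If it does not vanish for some cross-boundary pair $(k,j)$, the delta witness already works. If it vanishes for every $j\neq k$, I switch to the diagonal condition: I take $a_k=1$ with all other $a_i=0$ and check whether $\sum_{x\in N_k} w(x,k)=|N_k|$. Part of the mass placed in cell $k$ must land outside $N_k$, because $w(\cdot,k)$ is nonzero somewhere outside $N_k$ by the support-crossing hypothesis applied in the symmetric direction. For shift-invariant kernels with partition of unity, this spillover is what breaks the diagonal condition. The cleanest route is to fix a specific neighbourhood and compute directly, for example a corner or edge cell where the kernel's negative lobe is truncated.

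Main obstacle: the statement says ``any'' kernel, and an adversarially designed signed kernel could in principle satisfy all the column-sum identities while still having cross-boundary support. That is the one place where a fully general proof could fail. I would first try to dispose of it via the nonnegativity and partition-of-unity assumptions implicit in ``interpolation kernel''. Failing that, I would restrict the claim to the concrete bilinear and bicubic kernels and verify nonvanishing by explicit computation at the upsampling factor. I would make the mild assumption explicit in the written proof rather than hide it.
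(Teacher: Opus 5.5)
Your witness is the paper's: an indicator attribution supported outside $N_k$, so $M_k=0$ while $\sum_{x\in N_k}\tA(x)$ picks up cross-boundary kernel weight. The two arguments differ in how they rule out cancellation. The paper puts the indicator on a single pixel $y$ and adds, inside its proof, the hypothesis that $x_0$ is the only pixel of $N_k$ whose support contains $y$. The sum then collapses to the single term $K(x_0,y)\neq 0$. You put the delta on a whole coarse cell, $A=\mathbf{1}_{N_j}$, which, unlike $\mathbf{1}_{\{y\}}$, is piecewise constant on $\cN$ as the setup requires. You then exclude cancellation by nonnegativity. Your Case~1 is therefore a complete proof for every nonnegative kernel, including bilinear. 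It does not need the single-receiver condition, which bilinear upsampling at factor $r\ge 2$ does not satisfy anyway, since several pixels of $N_k$ draw on the same adjacent cell.

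Case~2 fails as written. Under the partition of unity you invoke, $\sum_j w(x,j)=1$ for every $x$, so summing over $x\in N_k$ gives $\sum_j\sum_{x\in N_k}w(x,j)=|N_k|$. If every off-diagonal block sum over $N_k$ vanishes, the diagonal condition $\sum_{x\in N_k}w(x,k)=|N_k|$ then holds automatically, so your fallback can never detect a violation. The ``spillover'' of cell $k$'s mass outside $N_k$ concerns the conditions of the \emph{other} neighbourhoods, which is just the off-diagonal case again at some $N_{k'}$. Also, for a signed kernel, nonzero entries do not imply nonzero net spillover.

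For bicubic, do the direct computation instead. Keys' kernel with $a<0$ equals $a(|t|-1)(|t|-2)^2<0$ on $1<|t|<2$ and is positive on $|t|<1$. Take a delta on the cell two steps from $N_k$ in one coordinate direction (for $r\ge 2$, whenever that cell exists). Its block sum over $N_k$ is a strictly negative one-dimensional sum times a strictly positive one, hence nonzero with no cancellation possible.

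Your suspicion about ``any'' kernel is justified, and partition of unity cannot remove it. Work at factor $2$, with output pixels at $k\pm\tfrac14$ in coarse coordinates. Take the continuous, symmetric, piecewise-linear kernel through $\phi(0)=\phi(\tfrac14)=1$, $\phi(\tfrac34)=c$, $\phi(1)=0$, $\phi(\tfrac54)=-c$, vanishing for $|t|\ge\tfrac74$. It has cross-boundary weight $c\neq 0$ and reproduces constants at those sample points. Its block sums are $\sum_{x\in N_k}w(x,k+d)=2\delta_{d0}$, so it conserves every $M_k$ exactly, and its tensor product does the same in 2D. Some extra hypothesis is therefore genuinely required: nonnegativity, the paper's single-receiver condition, or a kernel-specific computation. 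Stating it explicitly, as you plan, is the right move.
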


Mass leaks across neighbourhood boundaries whenever a kernel's support crosses them. Full proof in the supplemental material.

\begin{theorem}[Interpolation Ignores Semantics]
\label{thm:interp-d2}
Interpolation is score-independent: for any two segment score functions $s_1, s_2$, the interpolated attributions are identical. Consequently, interpolation violates segment monotonicity.
\end{theorem}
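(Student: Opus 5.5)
The plan is to make the notion of interpolation precise and then read off score-independence directly from the definition. An interpolation operator of the kind considered here (bilinear, bicubic, or any fixed linear kernel) produces the fused attribution as
\[
\tA(x) \;=\; \sum_{k=0}^{K-1} w_k(x)\, a_k ,
\]
where the weights $w_k(x)$ depend only on the geometry: the position of $x$ and the centres and shapes of the neighbourhoods in $\cN$, together with the fixed kernel. So formally $\cU_{\mathrm{interp}}(A,s,\cN,\cS)=\Phi_{\cN}(A)$ for a map $\Phi_{\cN}$ whose arguments do not include $s$ (or $\cS$). Given this, the first claim is immediate. For any two score functions $s_1,s_2$, and the same $(A,\cN,\cS)$, we get $\cU(A,s_1,\cN,\cS)=\Phi_{\cN}(A)=\cU(A,s_2,\cN,\cS)$ pixelwise.

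For the consequence, I would use segment monotonicity (D2) in its natural form: if $s_1$ and $s_2$ agree except that $s_2(p)>s_1(p)$ for some segment $S_p$, then the mass a neighbourhood $N_k$ assigns to $N_k\cap S_p$ must \emph{strictly} increase, provided $N_k$ meets $S_p$, does not lie entirely inside it, and $M_k\neq 0$. Equivalently, within a neighbourhood, pixels in higher-scored segments receive at least as much attribution as pixels in lower-scored ones, with strict sensitivity.

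To exhibit the violation, I would build an explicit small instance:
\begin{itemize}
\item Take a neighbourhood $N_0$ split by $\cS$ into two nonempty parts $N_0\cap S_0$ and $N_0\cap S_1$.
\item Take $a_0>0$.
\item Compare $s_1=(1,0)$ with $s_2=(0,1)$.
\end{itemize}
By score-independence, $\sum_{x\in N_0\cap S_0}\tA(x)$ is identical in both cases. Strict monotonicity, however, demands that it decrease when the score of $S_0$ drops, which is a contradiction. Alternatively, choose a segmentation so that the pixel with the larger interpolated value lies in the lower-scored segment, which violates the ordering form of (D2) directly. Because interpolation values do not depend on the scores, such a segmentation can always be chosen after computing $\Phi_{\cN}(A)$, unless $\tA$ is constant on $N_0$. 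That degenerate case is handled by the strict-increase form.

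The main obstacle is definitional rather than technical. The paper has not yet stated (D2) precisely at this point, so the argument must fix a reading of ``segment monotonicity'' that is strict enough to be violated. A weak monotonicity would be trivially satisfied by a score-ignoring operator. I would therefore state explicitly that (D2) requires strict responsiveness to score changes on neighbourhoods that intersect more than one segment with nonzero mass, and note that both bilinear and bicubic kernels fall under the geometry-only weight form above.
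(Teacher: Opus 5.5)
The score-independence half is fine and matches the paper: once $\tA(x)=\sum_k w_k(x)a_k$ with geometry-only weights, neither $s$ nor $\cS$ enters. The gap is in the monotonicity half. You replace (D2) by a strict responsiveness axiom (raising $s(p)$ must strictly increase the mass on $N_k\cap S_p$) and call it equivalent to the ordering form. It is not equivalent. The paper's (D2), stated later in the desiderata section, is exactly the pointwise weak ordering: for $x,y\in N_k$ with $M_k\ge 0$, $s(\sidx(x))\ge s(\sidx(y))\Rightarrow \tA(x)\ge\tA(y)$. Showing that interpolation fails your stronger axiom does not show that it fails this one.

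Your remark that a weak version is ``trivially satisfied by a score-ignoring operator'' is also wrong. (D2) must hold for every score assignment. Pixels of a common segment are therefore forced to have equal values, and swapping scores between two segments forces equality across them. So a score-independent operator satisfies (D2) only if $\tA$ is constant on every neighbourhood with $M_k\ge 0$.

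It follows that your ``degenerate case'' is not a violation at all under the paper's definition. Nearest-neighbour upsampling is score-independent and constant on each $N_k$, and it satisfies (D2) as stated. The violation therefore cannot come from score-independence alone. What is needed, and what the paper's proof assumes (``any kernel whose weights vary with position''), is that the kernel produces non-uniform values inside some neighbourhood with $M_k\ge 0$.

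Your ``alternative'' argument is precisely the paper's proof. Take $x,y\in N_k$ in distinct segments with $\tA(x)>\tA(y)$ and assign $s(\sidx(y))>s(\sidx(x))$. This leaves $\tA$ unchanged, yet (D2) now demands $\tA(y)\ge\tA(x)$. Make that the proof and drop the strict-increase route and the degenerate-case patch.

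Then close the remaining hole by exhibiting a non-uniform input explicitly for bilinear and bicubic. For example, set $a_k=0$ and $a_{k'}=1$ for one adjacent neighbourhood, with all other coarse values zero. This gives $M_k=0\ge 0$ and a nonconstant ramp inside $N_k$ along the shared boundary. Choosing $\cS$ afterwards so that the two chosen pixels lie in different segments is legitimate, because interpolation also ignores $\cS$.
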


Interpolation weights are determined entirely by spatial coordinates; the operator is independent of the score function $s$. The model's semantic structure is invisible to the operator: within any neighbourhood where the kernel assigns unequal weights to two pixels, one can always choose scores so that the higher-scored pixel receives less attribution, directly violating (D2).

\begin{theorem}[Interpolation Violates Locality]
\label{thm:interp-d4}
When kernel support extends outside a neighbourhood, boundary pixel attributions depend on external data. Specifically, there exist $A_1, A_2$ agreeing on $N_k$ with $\tA_1(x_0) \neq \tA_2(x_0)$ for boundary pixel $x_0 \in N_k$.
\end{theorem}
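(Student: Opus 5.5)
Since the statement is existential, I would prove it with an explicit pair of inputs. First I would make the setup concrete. Model an interpolation operator as a linear map $\tA(x)=\sum_{y\in\Omega} w(x,y)\,A(y)$, or equivalently a sum over coarse sample locations with coarse values $a_j$. Here the weights $w(x,\cdot)$ depend only on spatial geometry, as established in the discussion around \Cref{thm:interp-d2}. The hypothesis that the kernel support extends outside a neighbourhood means the following. There is a neighbourhood $N_k$ and a boundary pixel $x_0\in N_k$, adjacent to some $N_j$ with $j\neq k$, such that $w(x_0,y_0)\neq 0$ for some $y_0\in N_j$. Equivalently, in coarse terms, the weight attached to $a_j$ at $x_0$ is nonzero. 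Bilinear and bicubic interpolation with pixel-centre alignment supply such a pair: near a cell edge, $x_0$ receives nonzero weight from the adjacent coarse sample. I would check this once for the bilinear tent kernel to show the hypothesis is not vacuous.

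The construction is then direct. Take any $A_1$ that is piecewise constant on $\cN$. Define $A_2$ to agree with $A_1$ on every neighbourhood except $N_j$, where $a_j^{(2)}=a_j^{(1)}+\delta$ for some $\delta\neq 0$. Then $A_1$ and $A_2$ agree on $N_k$, because $j\neq k$, and both remain valid coarse attributions. By linearity,
\[
\tA_2(x_0)-\tA_1(x_0)=\delta\sum_{y\in N_j} w(x_0,y).
\]
This is nonzero provided the aggregate weight of $N_j$ at $x_0$ is nonzero. In the coarse-sample formulation, it is exactly $\delta$ times the single nonzero kernel weight on $a_j$, so the conclusion $\tA_1(x_0)\neq\tA_2(x_0)$ follows immediately.

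The main obstacle is this last point in the pixel-level formulation. A nonzero weight on one pixel $y_0\in N_j$ does not by itself rule out cancellation across the rest of $N_j$, and bicubic weights can be negative. I would handle this in one of two ways. The first is to state the hypothesis in terms of coarse samples: the interpolant reads the single value $a_j$, and this is how bilinear and bicubic upsampling are actually implemented. The second is to keep the pixel view and perturb only the coarse value feeding the kernel, again avoiding cancellation. If needed, I would add an explicit check for the tent kernel, whose weights are nonnegative so no cancellation is possible. For bicubic, the weight on the nearest neighbouring sample is positive at pixels near the shared edge.
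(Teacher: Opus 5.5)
Your argument is correct and is essentially the paper's: the paper takes the same boundary pixel $x_0$ with $K(x_0,y)\neq 0$ for some $y\notin N_k$ and uses the witnesses $A_1\equiv 0$, $A_2=\mathbf{1}_{\{y\}}$, so that by linearity $\tA_2(x_0)-\tA_1(x_0)=K(x_0,y)\neq 0$. Your version perturbs the whole coarse value $a_j$ rather than a single pixel. This keeps $A_2$ piecewise constant on $\cN$ as the setup requires (the paper's $\mathbf{1}_{\{y\}}$ is not), at the cost of needing the aggregate weight $\sum_{y\in N_j}w(x_0,y)$ to be nonzero, which you handle correctly through the coarse-sample formulation.
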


Adjacent neighbourhoods influence boundary attributions whenever a kernel spans the boundary. Full proof in the supplemental material.

\begin{synthesisframe}{}
These failures reflect a fundamental mismatch: interpolation reconstructs a continuous signal; attribution upsampling must allocate mass according to prediction relevance.
\end{synthesisframe}

\subsection{Desiderata for Faithful Upsampling}
\label{sec:desiderata}

Having established what faithful redistribution \emph{cannot} be, we now formalize what it \emph{demands}. In the axiomatic tradition of feature attribution~\cite{f302e136c41db5de1d624412f68c9174cf7ae8be,442e10a3c6640ded9408622005e3c2a8906ce4c2}, we identify four desiderata that any faithful upsampling operator must satisfy.

\begin{desideratum}{(D1) Neighbourhood Completeness}
\label{def:d1}
For all $k$:
\begin{equation}
\sum_{x \in N_k} \tA(x) = M_k
\label{eq:d1}
\end{equation}
\end{desideratum}

Attribution mass is conserved within each neighbourhood. This directly extends the \emph{completeness} axiom that characterizes both Shapley values~\cite{442e10a3c6640ded9408622005e3c2a8906ce4c2} and Integrated Gradients~\cite{f302e136c41db5de1d624412f68c9174cf7ae8be}: just as Shapley values ensure player contributions sum to the coalition value, (D1) ensures pixel attributions sum to the neighbourhood mass.

\begin{desideratum}{(D2) Segment Monotonicity}
\label{def:d2}
For $x, y \in N_k$ with $M_k \geq 0$:
\begin{equation}
s(\sidx(x)) \geq s(\sidx(y)) \implies \tA(x) \geq \tA(y)
\label{eq:d2}
\end{equation}
\end{desideratum}

Higher segment scores yield higher attributions, reflecting the \emph{sensitivity} principle from Integrated Gradients: if the model considers a segment more important, its pixels should receive proportionally more attribution.

\begin{desideratum}{(D3) Constant Conditioning Strength}
\label{def:d3}
For potential $\vphi\colon [0,1] \to \mathbb{R}_{>0}$ and temperature $\varepsilon > 0$:
\begin{equation}
\frac{\vphi(s + \Delta)}{\vphi(s)} = \exp(\Delta / \varepsilon), \quad \text{independent of } s
\label{eq:d3}
\end{equation}
\end{desideratum}

The relative effect of score changes is uniform across the score range, ensuring that a fixed score difference $\Delta$ has the same redistributive effect whether it occurs among low-score or high-score segments. Alternative potentials lack this: a power-law $\vphi(s)=s^\gamma$ compresses ratios at high scores and amplifies them near zero (see the supplemental material). (D3) demands score-invariant sensitivity, a uniformity property related to Luce's constant-ratio rule~\cite{64cce5f2fb89e06be57b32294df1078dd8227bad}: the multiplicative effect of a score difference $\Delta$ on weights must be the same at every base score. Without this, identical score gaps produce different redistributive effects depending on the score range (see the supplemental material for concrete examples).

\begin{desideratum}{(D4) Locality}
\label{def:d4}
\begin{equation}
\tA(x) = f\bigl(M_{\nidx(x)},\, \{s(\sidx(y))\}_{y \in N_{\nidx(x)}}\bigr)
\label{eq:d4}
\end{equation}
for some function $f$. That is, $\tA(x)$ depends only on the mass and segment scores within its own neighbourhood.
\end{desideratum}

Changes outside the neighbourhood do not affect attributions inside it, ensuring computational tractability and preventing error propagation from distant regions.

\subsection{Uniqueness and the USU Operator}
\label{sec:uniqueness}

We now show that \emph{any} operator satisfying (D1), (D2), and (D4) must take a specific ratio form, paralleling how completeness, symmetry, and linearity uniquely determine Shapley values. Redistribution requires that scaling a neighbourhood's mass scales every pixel's share identically; otherwise the allocation rule itself depends on the total, conflating \emph{how much} to distribute with \emph{where}. This \textbf{linearity in mass} gives $\tA(x) = M_k \cdot w_k(x)$ for weight function $w_k$ depending only on segment scores, paralleling the linearity axiom for Shapley values~\cite{442e10a3c6640ded9408622005e3c2a8906ce4c2}.

\begin{theoremframe}
\begin{theorem}[Uniqueness of the Ratio Form]
\label{thm:uniqueness}
Any linear-in-mass operator satisfying (D1) conservation, (D4) locality, and (D2) monotonicity must have the ratio form:
\begin{equation}
\cU(M_k, s)(x) = M_k \cdot \frac{\vphi(s(\sidx(x)))}{\sum_{y \in N_k} \vphi(s(\sidx(y)))}
\label{eq:ratio-form}
\end{equation}
for some strictly positive, strictly monotone potential $\vphi\colon [0,1] \to \mathbb{R}_{>0}$.
\end{theorem}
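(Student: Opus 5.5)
We first pin down the weight function implied by linearity in mass together with (D4), then use (D1) to normalize and (D2) to extract a monotone potential. By linearity in mass and (D4), $\tA(x) = M_k\, w_k(x)$, where $w_k(x)$ depends only on the multiset of scores $\{s(\sidx(y))\}_{y\in N_k}$. Since $x$ itself is one of the pixels of $N_k$, $w_k(x)$ is a function of $s(\sidx(x))$ together with that multiset.

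Summing over $N_k$ and invoking (D1) gives $M_k\sum_{x\in N_k} w_k(x) = M_k$. Taking any $A$ with $M_k\neq 0$, this yields the normalization $\sum_{x\in N_k} w_k(x)=1$. Next we apply (D2) with $M_k>0$. Pixels of equal score must receive equal weight, since monotonicity applies in both directions. Hence $w_k(x)=g(s(\sidx(x));\,\mathbf{s}_k)$ for a function $g$, where $\mathbf{s}_k$ denotes the score profile of $N_k$, and $g$ is nondecreasing in its first argument.

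The central step is to remove the dependence of $g$ on the profile $\mathbf{s}_k$ up to a common normalizer. This requires an independence condition, and we plan to state it explicitly as part of the locality/linearity reading: the ratio $w_k(x)/w_k(y)$ should depend only on the scores of $x$ and $y$, not on the other pixels of the neighbourhood. This is exactly Luce's IIA. The paper invokes it, and it is not a consequence of (D1), (D2) and (D4) alone. Granting IIA, define $R(a,b)=g(a;\mathbf{s})/g(b;\mathbf{s})$ for scores $a,b$, which is independent of $\mathbf{s}$. Comparing three pixels with scores $a,b,c$ in a single neighbourhood gives the multiplicativity $R(a,c)=R(a,b)R(b,c)$. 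Fixing a reference score $c_0$, for instance $0$, we set $\vphi(a)=R(a,c_0)$. Then $R(a,b)=\vphi(a)/\vphi(b)$, so $w_k(x)\propto \vphi(s(\sidx(x)))$, and the normalization from (D1) forces the denominator $\sum_{y\in N_k}\vphi(s(\sidx(y)))$. This is exactly \eqref{eq:ratio-form}.

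It remains to show positivity and strict monotonicity of $\vphi$. Positivity follows because the ratios $R$ are finite and nonzero. We need weights strictly positive, either by assuming it (every pixel retains a share) or by excluding degenerate zero weights, which would make $R$ undefined. Monotonicity of $\vphi$ is inherited from (D2). Strict monotonicity, however, needs a strict version of (D2), or else one should note that (D2) as stated only yields nondecreasing $\vphi$; we would flag this.

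The main obstacle is the IIA step. Without it, profile-dependent weights, such as rank-based allocations, satisfy (D1), (D2) and (D4) but are not of ratio form. So the proof must either adopt IIA as an added hypothesis, justified via Luce's Choice Axiom as the paper's introduction indicates, or interpret (D4) as requiring pairwise weight ratios to be profile-free. We would also handle neighbourhoods too small to contain three distinct scores by chaining through larger hypothetical neighbourhoods, or by requiring that $g$ be defined uniformly across neighbourhoods.
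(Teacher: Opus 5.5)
Your outline follows the paper's proof in structure. Linearity in mass gives $\tA(x)=M_k w_k(x)$, an IIA property makes the pairwise ratio $R(a,b)$ independent of the neighbourhood profile, the identity $R(a,b)R(b,c)=R(a,c)$ yields a potential, and (D1) fixes the normalizer. You depart from the paper at the IIA step, and there you are right and the paper is not.

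The paper's Lemma ``IIA from Locality'' claims that (D4) forces $w_k(x)/w_k(y)$ to ignore the other scores in $N_k$. But (D4) only rules out dependence on data \emph{outside} $N_k$, and the ``irrelevant alternatives'' lie inside it. Your rank-based counterexample settles this. Take the weight proportional to the rank of a pixel's score among the distinct scores present in $N_k$. This is linear in mass and satisfies (D1), (D2) and (D4). Yet the profiles $\{0.1,0.9,0.9\}$ and $\{0.1,0.5,0.9\}$ give ratios $2$ and $3$ between a $0.9$-pixel and a $0.1$-pixel, so no single $\vphi$ works. The statement therefore needs IIA as an explicit hypothesis.

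Your other caveats are also correct. Since (D2) uses only weak inequalities, uniform weights (constant $\vphi$) satisfy every hypothesis, so strict monotonicity requires a strict form of (D2). Positivity, which the paper simply posits, must likewise be assumed. Finally, fixing a reference score and setting $\vphi(a)=R(a,c_0)$ solves the cocycle identity directly, with no regularity needed. The paper instead treats it as a Cauchy equation and appeals to a continuity it never establishes. Your version is thus a correct proof of the properly qualified statement.
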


\begin{proof}[Proof sketch]
The proof proceeds in three steps.

\begin{proofstep}{\mbox{Step~1} (IIA from Locality).}
(D4) implies Independence of Irrelevant Alternatives: the ratio $R(a,b) = w(s_a)/w(s_b)$ is independent of other alternatives in the neighbourhood.
\end{proofstep}

\begin{proofstep}{\mbox{Step~2} (Cauchy Functional Equation).}
IIA yields transitivity $R(a,b)\cdot R(b,c)=R(a,c)$, the multiplicative Cauchy equation. Under continuity and (D2) monotonicity, the unique solution is $R(a,b)=\vphi(a)/\vphi(b)$ for continuous, strictly monotone $\vphi$.
\end{proofstep}

\begin{proofstep}{\mbox{Step~3} (Conservation Forces Normalization).}
Conservation~(D1) requires $\sum_{x\in N_k}w_k(x)=1$ with $w_k(x)\propto\vphi(s(\sidx(x)))$, which forces the ratio form. Full proof in the supplemental material.
\end{proofstep}
\end{proof}
\end{theoremframe}

This parallels how Luce's IIA uniquely characterizes the multinomial logit model in choice theory~\cite{64cce5f2fb89e06be57b32294df1078dd8227bad,ea84a6ef34223f4f0d8b64555a6b6cec312b8fce,f10afb5d088114389a71148d39aab0c57820b02e,251646ea79c2a3973311852975f70a64878fffec}. The characterization is \emph{layered}: (D1), (D2), and (D4) force the ratio form for \emph{any} strictly monotone potential $\vphi$; (D3) then uniquely pins down the potential, so that all four together admit exactly one operator (up to temperature). Alternative potentials sacrifice (D3)'s score-invariance but retain (D1)--(D2)--(D4); see the supplemental material for analysis of this tradeoff.

\paragraph{The USU Operator.}
USU instantiates the ratio form with the \emph{tensor potential}, which satisfies (D3) by direct computation: $\vphi(s+\Delta)/\vphi(s)=\exp(\Delta/\varepsilon)$, independent of~$s$.

\begin{operatorbox}{USU Operator}
Given temperature $\varepsilon > 0$, the \textbf{tensor potential}
\begin{equation}
\vphi(s) = \exp\!\left(\frac{s - 0.5}{\varepsilon}\right),
\label{eq:tensor-potential}
\end{equation}
defines the \textbf{normalized weight} for pixel $x$ in neighbourhood $k$:
\begin{equation}
w_k(x) = \frac{\vphi(s(\sidx(x)))}{\sum_{y \in N_k} \vphi(s(\sidx(y)))},
\label{eq:normalized-weight}
\end{equation}
and the \textbf{USU fused attribution}:
\begin{equation}
\tA(x) = M_{\nidx(x)} \cdot w_{\nidx(x)}(x).
\label{eq:usu-operator}
\end{equation}
\end{operatorbox}

USU operates as a semantic softmax: segment scores pass through the non-linear potential $\vphi$, are normalized within each neighbourhood, and redistribute mass proportionally. Though linear in mass (scaling the input scales the output), the operator is non-linear in scores through~$\vphi$.

\begin{theorem}[(D1)--(D4) Satisfaction]
\label{thm:usu-all}
USU with the tensor potential satisfies all four desiderata. Specifically: (D1)~weight normalization ensures $\sum_{x\in N_k}\tA(x)=M_k$; (D2)~the exponential is strictly increasing, so higher scores yield higher weights when $M_k\geq 0$; (D3)~the tensor potential satisfies constant conditioning by construction; (D4)~$\tA(x)$ depends only on $M_{\nidx(x)}$ and scores within $N_{\nidx(x)}$.
\end{theorem}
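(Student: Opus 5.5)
The plan is to verify each desideratum directly from the definitions in the USU operator box, treating the four in turn. The only analytic facts needed are the positivity and strict monotonicity of $t\mapsto\exp(t)$. Two preliminary observations will be used throughout.

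First, for each $k$ the denominator $Z_k=\sum_{y\in N_k}\vphi(s(\sidx(y)))$ is a finite sum of strictly positive terms. Hence $Z_k>0$, each $w_k(x)$ is well defined, and each $w_k(x)$ lies in $(0,1]$.

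Second, for $x\in N_k$ we have $\nidx(x)=k$, so $\tA(x)=M_k\,w_k(x)$ with the same $Z_k$ for all pixels in the neighbourhood.

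\textbf{(D1).} Sum over $x\in N_k$ and pull out the constant factor $M_k/Z_k$:
\[
\sum_{x\in N_k}\tA(x)=\frac{M_k}{Z_k}\sum_{x\in N_k}\vphi(s(\sidx(x)))=M_k .
\]

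\textbf{(D2).} Take $x,y\in N_k$ with $M_k\ge 0$ and $s(\sidx(x))\ge s(\sidx(y))$. Since $\exp$ is increasing and $\varepsilon>0$, we get $\vphi(s(\sidx(x)))\ge\vphi(s(\sidx(y)))$. Dividing by the common $Z_k>0$ gives $w_k(x)\ge w_k(y)$. Multiplying by $M_k\ge 0$ preserves the inequality, so $\tA(x)\ge\tA(y)$.

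The only subtlety in (D2) is the sign hypothesis. When $M_k<0$ the inequality reverses, which is exactly why (D2) is stated only for $M_k\ge 0$. I will remark on this, and also note that the case $M_k=0$ holds trivially with equality.

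\textbf{(D3).} This is a one-line computation. For $s,s+\Delta\in[0,1]$,
\[
\frac{\vphi(s+\Delta)}{\vphi(s)}=\exp\!\Big(\frac{s+\Delta-0.5}{\varepsilon}-\frac{s-0.5}{\varepsilon}\Big)=\exp(\Delta/\varepsilon),
\]
which is independent of $s$. Strict positivity of $\vphi$, required in the statement of (D3), is immediate from the exponential.

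\textbf{(D4).} I will exhibit the function $f$ explicitly. Given a mass $m$ and the multiset of scores $\{s(\sidx(y))\}_{y\in N_{\nidx(x)}}$, together with the pixel's own score $s(\sidx(x))$, set
\[
f=m\cdot\frac{\vphi(s(\sidx(x)))}{\sum_{y}\vphi(s(\sidx(y)))}.
\]
Then $\tA(x)$ equals $f$ evaluated at $M_{\nidx(x)}$ and the in-neighbourhood scores, and nothing outside $N_{\nidx(x)}$ enters. In particular, changing $A$ or $s$ on other neighbourhoods leaves $\tA(x)$ unchanged.

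The main (minor) obstacle is interpretive. As literally written, $f$ in (D4) takes the unindexed score multiset, while $\tA(x)$ also needs to know which score belongs to $x$. I would address this by reading the scores as indexed by $y\in N_{\nidx(x)}$, since $x$ is itself in that index set. On that reading the pixel's own score is among the arguments and the formula above is a legitimate choice of $f$. The remaining three items are routine.
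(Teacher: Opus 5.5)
Your proposal is correct and follows the paper's own verification essentially step for step: (D1) by factoring out the common normalizer, (D2) from monotonicity of the exponential together with $M_k\ge 0$, (D3) by the direct ratio computation, and (D4) by observing that $\tA(x)=M_{\nidx(x)}\,w_{\nidx(x)}(x)$ uses only in-neighbourhood data. Your additional care---noting $Z_k>0$ so the weights are well defined, and reading the score family in (D4) as indexed by $y\in N_{\nidx(x)}$ so that $f$ knows which score belongs to $x$---makes explicit what the paper leaves implicit.
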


\begin{corollary}[Global Conservation]
\label{cor:global}
Since $\cN$ partitions $\Omega$, summing (D1) over all neighbourhoods yields global mass conservation:
\begin{equation}
\sum_{x \in \Omega} \tA(x) = \sum_{x \in \Omega} A(x).
\label{eq:global-conservation}
\end{equation}
\end{corollary}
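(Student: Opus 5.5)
The plan is to sum the local identity (D1), already established for USU in \Cref{thm:usu-all}, over all neighbourhoods, and then rewrite both sides using only the fact that $\cN$ partitions $\Omega$. No further property of the potential $\vphi$ or of the scores is needed; the corollary holds for any operator satisfying (D1).

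\textbf{Step 1 (left-hand side).} Because the sets $N_0,\ldots,N_{K-1}$ are pairwise disjoint and their union is $\Omega$, every pixel $x\in\Omega$ lies in exactly one neighbourhood, namely $N_{\nidx(x)}$. Since $\Omega$ is a finite lattice, the sum can be reindexed by neighbourhood:
\[
\sum_{x\in\Omega}\tA(x)=\sum_{k=0}^{K-1}\sum_{x\in N_k}\tA(x).
\]

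\textbf{Step 2 (apply D1).} By \Cref{thm:usu-all}, each inner sum equals $M_k$, so the total is $\sum_{k}M_k$.

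\textbf{Step 3 (right-hand side).} By the definition of neighbourhood mass in \eqref{eq:neighbourhood-mass},
\[
\sum_k M_k=\sum_k\sum_{x\in N_k}A(x).
\]
Applying the same partition reindexing as in Step 1, in reverse, gives $\sum_{x\in\Omega}A(x)$, which is the claimed identity \eqref{eq:global-conservation}.

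I do not expect any step to be a real obstacle. The only point requiring care is Step 1: the decomposition of the sum is valid precisely because $\cN$ is a partition, with disjointness ruling out double counting and coverage ruling out omitted pixels. This is where the hypothesis ``$\cN$ partitions $\Omega$'' is used. The same argument shows why a variant like Soft IWMR, which relaxes (D1), need not conserve mass globally.
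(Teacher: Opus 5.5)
Your proof is correct and is exactly the paper's argument: reindex the sum over $\Omega$ by the partition $\cN$, apply (D1) in each neighbourhood, and reassemble $\sum_k M_k$ into $\sum_{x\in\Omega}A(x)$ via \eqref{eq:neighbourhood-mass}. One correction to your closing aside: Soft IWMR does conserve total mass, because (D1-G) gives $\sum_{x\in N_k}\tA^{\mathrm{IWMR}}(x)=\tilde{M}_k$ and the $\rho_k$ form a partition of unity, so $\sum_k\tilde{M}_k=M_{\mathrm{total}}$ (\Cref{cor:iwmr-conservation}); it gives up only per-neighbourhood conservation.
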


\paragraph{Error Minimization.}
(D1) ensures zero net mass error per neighbourhood when the coarse attribution faithfully aggregates ground truth: if $M_k = \sum_{x\in N_k}A^*(x)$, then both $\alpha$- and $\beta$-errors from \eqref{eq:errors} satisfy $\Delta^-_k = \Delta^+_k = 0$. USU provides the tightest possible mass-level guarantee. Full error analysis in the supplemental material.

\subsection{The Attribution Basin Problem}
\label{sec:soft-iwmr}

Strict mass conservation (D1) guarantees that each neighbourhood's attribution total is preserved exactly, faithful to the coarse explanation. But consider a neighbourhood whose attribution mass lies in a \emph{basin}: a region of moderate importance enclosed by low-attribution neighbours. (D1) traps this mass in place even when the model's segment scores indicate that semantically important pixels nearby deserve more. The result is a \textbf{pit in the attribution landscape} that no amount of within-neighbourhood redistribution can fill. This exposes a tension between two notions of faithfulness: fidelity to the coarse heatmap versus fidelity to the model's own semantic judgements.

This exposes a tension between \emph{heatmap faithfulness} (D1: each neighbourhood's mass matches the coarse attribution exactly) and \emph{model faithfulness} (the model's segment scores govern mass allocation across the full lattice). When the coarse attribution is itself imperfect, strict (D1) preserves these imperfections.

\paragraph{Importance-weighted redistribution.}
Soft IWMR (Importance-Weighted Mass Redistribution) relaxes (D1) by redistributing the global mass $M_{\mathrm{total}}=\sum_k M_k$ according to neighbourhood importance. Each neighbourhood receives importance $\lambda_k = \max_{p\in P_k}s_p$, where $P_k=\{p:N_k\cap S_p\neq\emptyset\}$ indexes segments intersecting~$N_k$.

\begin{operatorbox}{Soft IWMR Operator}
The \textbf{importance potential} and \textbf{redistribution weights}:
\begin{align}
\Lambda(\lambda) &= \exp\!\left(\frac{\lambda-0.5}{\varepsilon_\Lambda}\right), &
\rho_k &= \frac{\Lambda(\lambda_k)\cdot|N_k|}{\sum_j\Lambda(\lambda_j)\cdot|N_j|}, \label{eq:redistribution-weights}
\end{align}
yield \textbf{redistributed masses} and the \textbf{IWMR fused attribution}:
\begin{align}
\tilde{M}_k &= M_{\mathrm{total}}\cdot\rho_k, &
\tA^{\mathrm{IWMR}}(x) &= \tilde{M}_{\nidx(x)}\cdot w_{\nidx(x)}(x). \label{eq:iwmr-operator}
\end{align}
The within-neighbourhood weights $w_k(x)$ are identical to USU \eqref{eq:normalized-weight}; only the mass input changes from $M_k$ to $\tilde{M}_k$.
\end{operatorbox}

Soft IWMR intentionally violates (D1) but satisfies a \emph{generalized} completeness (D1-G): $\sum_{x \in N_k} \tA^{\mathrm{IWMR}}(x) = \tilde{M}_k$. It preserves (D2), (D3), and a semi-local form of (D4), as well as global conservation with zero-sum mass flow ($\sum_k (\tilde{M}_k - M_k) = 0$). This modularity is by design: each desideratum addresses an independent concern (mass conservation, semantic ordering, calibration, locality), so relaxing one for a specific operational reason does not compromise the others. Just as the layered characterization permits alternative potentials $\vphi$ while retaining (D1)--(D2)--(D4) guarantees (\Cref{thm:uniqueness}), Soft IWMR relaxes (D1) to correct coarse-attribution bias while preserving the remaining three. Full formal statement and proofs in the supplemental material.

\paragraph{Temperature control.}
$\varepsilon_\Lambda$ governs the strength of inter-neighbourhood redistribution: small values concentrate mass in high-importance neighbourhoods, while large values approach uniformity. When all importances are equal, $\tilde{M}_k = M_k$ and IWMR reduces to standard USU.

\paragraph{Hierarchical boundary refinement.}
The segmentation partition $\cS$ that defines these scores has so far been assumed, not derived. For the resulting explanations to be faithful, boundaries must emerge from the model's own reasoning, refined where the model's scores transition across segments and left coarse where scores are homogeneous. No single segmentation scale achieves this: a coarse partition groups semantically distinct regions under a shared score; a fine partition fragments coherent ones. The key insight is to let \emph{score heterogeneity itself} select boundary resolution: start from the coarsest partition, identify where importance transitions occur, and refine only there. The result is a multi-scale attribution whose boundaries shift to match the model's own reasoning. Since each recursion step applies USU, neighbourhood completeness propagates inductively and the full pipeline preserves total attribution mass. Full formalization (boundary operator, depth recursion, and stopping criterion) appears in the supplemental material.

\FloatBarrier
\section{Experiments}
\label{sec:experiments}

We evaluate USU along two axes: (1)~controlled synthetic tasks with known attribution priors, verifying formal guarantees (\Cref{sec:synthetic-validation}); and (2)~standard benchmarks with pretrained models, confirming practical faithfulness gains (\Cref{sec:faithfulness}; \Cref{fig:results-hero}).

\begin{figure}[t!]
  \centering
  \includegraphics[width=\linewidth]{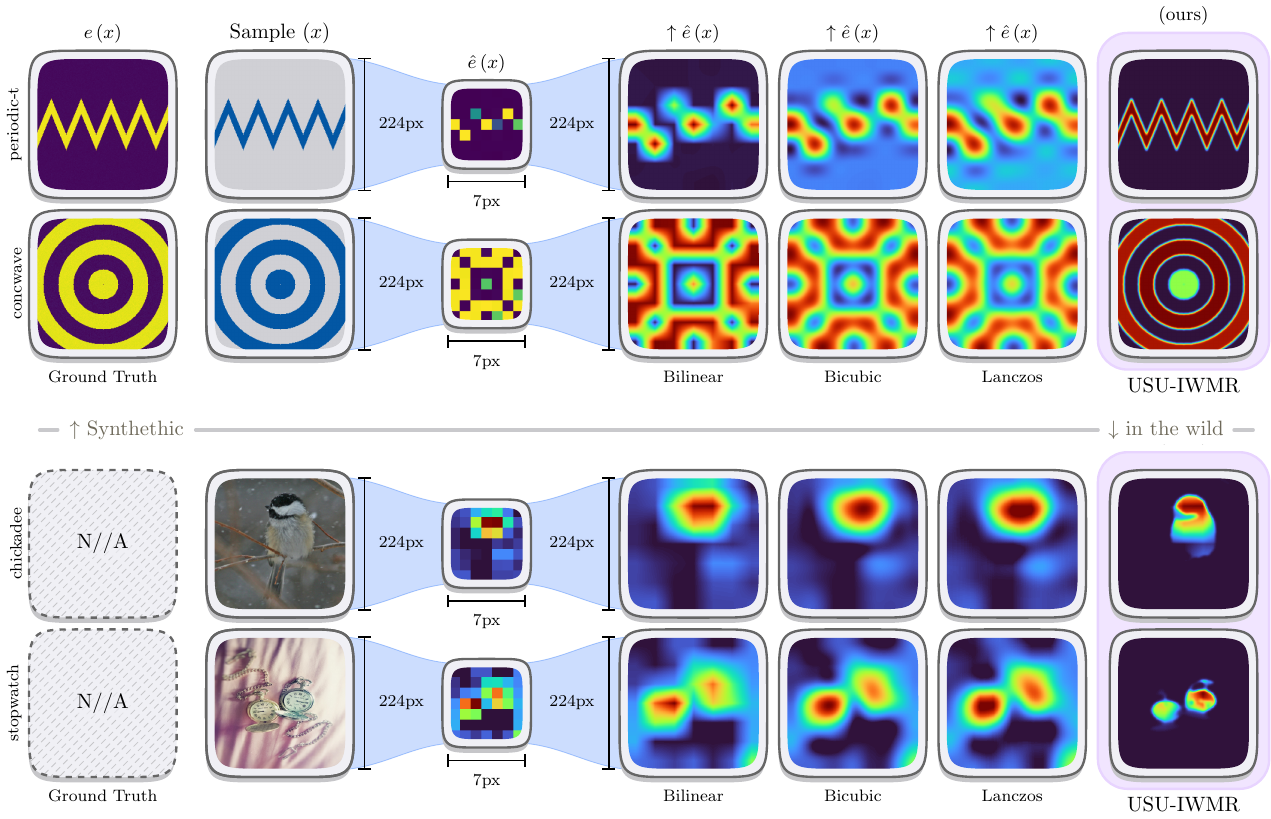}
  \caption{\textbf{Qualitative comparison.} Top row: synthetic patterns with known ground-truth attribution priors. Bottom row: ImageNet examples with GradCAM attributions. Interpolation methods (bilinear, bicubic, Lanczos) produce ringing artifacts, boundary bleeding, and aliasing; USU preserves semantic boundaries and concentrates attribution within model-relevant regions.}
  \label{fig:results-hero}
\end{figure}

\subsection{Controlled Validation with Attribution Priors}
\label{sec:synthetic-validation}

\paragraph{Setup.}
We design synthetic shape classification tasks (circle/triangle/square) with known ground-truth attributions, training vanilla and RRL-constrained~\cite{7db2afdc5eb5db46cc64185d0a51ed079b0976e8} CNN/MLP architectures. Coarse attributions at resolutions $\{4, 7, 14\}$ are upsampled via interpolation baselines, USU, USU-IWMR, and oracle variants. Full setup and metrics in the supplemental material.
\vspace{-1.5ex}
\paragraph{Results.}
\Cref{tab:synthetic-results} confirms USU's formal guarantees empirically. Oracle-IWMR achieves perfect IoU~(1.00), confirming the theoretical ceiling. USU outperforms all interpolation baselines: on RRL-CNN, IoU rises from 0.79 to~0.86 and pointing game from 78\% to 100\%, with concentration improving $0.54{\to}0.76$ as USU amplifies proper attribution priors. The USU--Oracle gap (0.86 vs.\ 0.96) quantifies headroom from imperfect segment scoring. Qualitatively, USU-IWMR recovers distinct attribution patterns (\Cref{fig:results-hero}, top) where interpolation collapses them (\Cref{fig:identical-saliencies}). \Cref{tab:desiderata} validates the formal desiderata: USU-Fixed achieves (D1) conservation at numerical precision ($4.95 \times 10^{-7}$), three to four orders below interpolation, consistent with \Cref{thm:interp-d1,thm:interp-d2,thm:interp-d4}.
\begin{table*}[t]
\centering
\begin{minipage}[t]{0.48\linewidth}
\centering
\caption{\textbf{Controlled validation: shape classification.} IoU ($\uparrow$), Conc.\ ($\uparrow$), PG ($\uparrow$). Bold is best non-oracle; oracles use ground-truth scores.}
\label{tab:synthetic-results}
\small
\begin{tabular}{@{}llccc@{}}
\toprule
Model & Method & IoU & Conc. & PG \\
\midrule
\multirow{5}{*}{RRL-CNN}
& Bilinear       & 0.79 & 0.54 & 0.78 \\
& Bicubic        & 0.82 & 0.59 & 0.79 \\
& USU            & \textbf{0.86} & \textbf{0.76} & \textbf{1.00} \\
& Oracle-USU     & 0.96 & 0.88 & 1.00 \\
& Oracle-IWMR    & 1.00 & 0.85 & 1.00 \\
\bottomrule
\end{tabular}
\end{minipage}%
\hfill
\begin{minipage}[t]{0.48\linewidth}
\centering
\caption{\textbf{Desiderata verification on ImageNet + VGG16.} \ding{51} = satisfied; \ding{55} = violated. D1 Error is mean absolute neighbourhood mass discrepancy.}
\label{tab:desiderata}
\small
\begin{tabular}{@{}lccccr@{}}
\toprule
Method & D1 & D2 & D3 & D4 & D1 Error \\
\midrule
Bilinear   & \ding{55} & \ding{55} & \ding{55} & \ding{55} & $1.23 \!\times\! 10^{-1}$ \\
Bicubic    & \ding{55} & \ding{55} & \ding{55} & \ding{55} & $8.65 \!\times\! 10^{-1}$ \\
Lanczos-3  & \ding{55} & \ding{55} & \ding{55} & \ding{55} & $2.14 \!\times\! 10^{-2}$ \\
USU-Fixed  & \ding{51} & \ding{51} & \ding{51} & \ding{51} & $4.95 \!\times\! 10^{-7}$ \\
USU-IWMR   & \ding{55} & \ding{51} & \ding{51} & \ding{51} & $3.81 \!\times\! 10^{-4}$ \\
\bottomrule
\end{tabular}
\end{minipage}
\end{table*}
\subsection{In-the-Wild Faithfulness Evaluation}
\label{sec:faithfulness}
\paragraph{Setup.}
We sample 1{,}000 images each from ImageNet~\cite{7ca07ef2c47cccda38db8f2c1f87f64a51c70b11}, CIFAR-10~\cite{krizhevsky2009learning}, and CUB-200~\cite{wah2011caltech}, evaluated on VGG16~\cite{3ce9cb15eb8f0ea8b3f8e4461c38bf3f95f51f5e}, ResNet50~\cite{b49f7bbc2b3c78dc2bc2f67eb09a6e63aaab0efa}, and ViT-B-16~\cite{e5bb4fcb0aeb7d9de6b5deac12fa41efa88a81aa}. Since USU's guarantees depend only on the tuple $(M_k, s, \cN, \cS)$ (\Cref{thm:uniqueness}), results are consistent across attribution methods; we report GradCAM as representative (seven-method comparison in the supplemental material). Primary metric: Infidelity~$\downarrow$ from Quantus~\cite{30e776268268e84becd2863b0632247da61238b9}; additional metrics in the supplemental material.
\vspace{-1.5ex}
\paragraph{Results.}
\Cref{tab:main-results} reports Infidelity across all nine dataset--model combinations. USU reduces Infidelity by one to four orders of magnitude over bilinear (Wilcoxon signed-rank $p < 0.001$, Bonferroni-corrected). Gains are largest on ViT-B-16, whose $14{\times}14$ patch structure aligns naturally with segment boundaries, and smallest on CUB-200 + ResNet50, where fine-grained texture classes limit segmentation-based redistribution. These magnitudes reflect Infidelity's quadratic sensitivity to systematic mass violations: interpolation injects coherent boundary errors at every neighbourhood (\Cref{thm:interp-d1}), which accumulate across hundreds of boundary-crossing regions; USU eliminates this entire error class by construction (D1, verified to $10^{-7}$ precision; \Cref{tab:desiderata}), leaving only residual segmentation noise. USU-IWMR outperforms USU-Fixed on CIFAR-10 and CUB-200 where coarse attributions exhibit background bias ($\varepsilon{=}0.1$ optimal). Full ablations in the supplemental material.
\begin{table*}[t]
\centering
\caption{\textbf{Infidelity ($\downarrow$) across datasets and models.} USU achieves one to four orders of magnitude improvement over interpolation baselines. Best in \textbf{bold}, second-best \underline{underlined}.}
\label{tab:main-results}
\footnotesize
\renewcommand{\arraystretch}{1.15}
\begin{tabularx}{\textwidth}{@{}llCCC@{\quad}|@{\quad}CC@{}}
\toprule
\textbf{Dataset} & \textbf{Model} & \textbf{Bilinear} & \textbf{Bicubic} & \textbf{Lanczos} & \textbf{USU} & \textbf{USU$_{\text{IWMR}}$} \\
\midrule
\rowcolor{black!6}
  & VGG16     & $6.91 \times 10^{6}$ & $7.82 \times 10^{6}$ & $8.14 \times 10^{6}$ & $\mathbf{1.14 \times 10^{5}}$ & $\underline{1.28 \times 10^{5}}$ \\
\rowcolor{black!6}
ImageNet & ResNet50  & $4.87 \times 10^{7}$ & $5.41 \times 10^{7}$ & $5.63 \times 10^{7}$ & $\mathbf{2.21 \times 10^{5}}$ & $\underline{2.45 \times 10^{5}}$ \\
\rowcolor{black!6}
  & ViT-B-16  & $2.19 \times 10^{8}$ & $2.48 \times 10^{8}$ & $2.59 \times 10^{8}$ & $\mathbf{7.02 \times 10^{5}}$ & $\underline{7.85 \times 10^{5}}$ \\
\midrule
  & VGG16     & $3.12 \times 10^{7}$ & $3.55 \times 10^{7}$ & $3.71 \times 10^{7}$ & $\underline{7.12 \times 10^{4}}$ & $\mathbf{6.47 \times 10^{4}}$ \\
CIFAR-10 & ResNet50  & $8.95 \times 10^{7}$ & $9.87 \times 10^{7}$ & $1.02 \times 10^{8}$ & $\mathbf{2.21 \times 10^{5}}$ & $\underline{2.38 \times 10^{5}}$ \\
  & ViT-B-16  & $2.57 \times 10^{8}$ & $2.89 \times 10^{8}$ & $3.01 \times 10^{8}$ & $\underline{2.15 \times 10^{4}}$ & $\mathbf{1.99 \times 10^{4}}$ \\
\midrule
\rowcolor{black!6}
  & VGG16     & $5.44 \times 10^{6}$ & $6.12 \times 10^{6}$ & $6.38 \times 10^{6}$ & $\underline{1.82 \times 10^{3}}$ & $\mathbf{1.70 \times 10^{3}}$ \\
\rowcolor{black!6}
CUB-200 & ResNet50  & $8.46 \times 10^{7}$ & $9.32 \times 10^{7}$ & $9.68 \times 10^{7}$ & $\underline{1.52 \times 10^{7}}$ & $\mathbf{1.41 \times 10^{7}}$ \\
\rowcolor{black!6}
  & ViT-B-16  & $1.83 \times 10^{8}$ & $2.05 \times 10^{8}$ & $2.14 \times 10^{8}$ & $\mathbf{5.08 \times 10^{4}}$ & $\underline{5.61 \times 10^{4}}$ \\
\bottomrule
\end{tabularx}
\end{table*}

\section{Conclusion}
\label{sec:conclusion}

Attribution methods still upsample saliency maps with interpolation techniques designed for natural images, not saliency maps. We have shown that faithful upsampling is not this solved implementation detail but a theoretically constrained operation that admits a unique ratio-form solution. Four desiderata formalize what faithful upsampling requires; standard interpolation structurally violates three, producing artifacts that distort what practitioners see and corrupt what faithfulness metrics measure. USU emerges as the unique ratio-form operator satisfying all four, with uniqueness forced by Luce's Independence of Irrelevant Alternatives. USU is a method-agnostic drop-in replacement for interpolation, delivering one to four orders-of-magnitude faithfulness gains across three datasets, three architectures, and seven attribution methods.
\paragraph{Limitations and future work.}
The uniqueness characterization holds for operators whose allocation weights are independent of the mass being distributed (linearity in mass), the standard redistribution assumption and the direct analogue of Shapley linearity. Extending the characterization to mass-dependent weighting is an open direction.

\bibliographystyle{splncs04}
\bibliography{bibliography/consolidated}

\clearpage
\appendix
\begin{center}
\rule{\textwidth}{1.2pt}\\[10pt]%
{\normalsize\textsc{Supplementary Material}}\\[6pt]%
{\Large\bfseries Attribution Upsampling should Redistribute, Not Interpolate}\\[10pt]%
\rule[0.5ex]{0.3\textwidth}{0.3pt}%
\;\;{\normalsize$\diamond$}\;\;%
\rule[0.5ex]{0.3\textwidth}{0.3pt}%
\end{center}
\bigskip

\paragraph{Overview.}
This supplement delivers: (i)~complete proofs of all theorems deferred from the main paper, (ii)~an extended discussion situating USU within axiomatic attribution and choice theory, (iii)~algorithmic details for hierarchical boundary refinement, and (iv)~comprehensive empirical results across nine configurations spanning three datasets and three models, with seven attribution methods. All proofs are self-contained and verify desiderata satisfaction (D1-D4), global conservation, merge conservation, and full pipeline completeness.

\smallskip\noindent
\textbf{Roadmap.}
\Cref{app:deferred-proofs} delivers the proofs deferred from the main paper: interpolation violations, the uniqueness theorem, Soft IWMR properties, score potential analysis, and combined desiderata verification.
\Cref{app:extended-theory} provides extended theoretical context, connecting USU to axiomatic attribution, resolution gaps in existing methods, and the choice-theoretic foundations that motivate the ratio form.
\Cref{app:depth-fusion} formalizes hierarchical boundary refinement: definitions, H-map, boundary operator, adaptive mixing, merge, and mass conservation through recursion.
\Cref{app:error-analysis} develops the $\alpha$/$\beta$-error decomposition.
\Cref{app:synthetic} details the synthetic experiments with full results across all methods and model families.
\Cref{app:metrics} defines the six Quantus evaluation metrics.
\Cref{app:statistics} describes the statistical methodology (Wilcoxon tests, effect sizes, confidence intervals).
\Cref{app:cross-method} demonstrates attribution-method independence across seven CAM-family sources.
\Cref{app:full-results} provides complete results for ImageNet, CIFAR-10, and CUB-200.

\section{Deferred Proofs}
\label{app:deferred-proofs}

This section delivers the proofs deferred from the main paper, organized to mirror the layered characterization: we first show interpolation fails (\Cref{app:interp-proofs}), then establish the ratio form's uniqueness (\Cref{app:uniqueness-proof}), analyze relaxations (\Cref{app:iwmr-proofs}), identify the tensor potential (\Cref{app:potential-analysis}), and verify combined satisfaction (\Cref{app:desiderata-verification}).

\subsection{Interpolation Violation Proofs}
\label{app:interp-proofs}

The main paper identifies three structural violations of the upsampling desiderata by standard interpolation (\Cref{thm:interp-d1,thm:interp-d2,thm:interp-d4}). We present the full proofs here, organized by causal depth. The root cause is \emph{semantic blindness}: interpolation weights depend solely on spatial coordinates, making the operator structurally incapable of incorporating the model's reasoning. Two geometric consequences follow: mass leakage across neighbourhood boundaries and non-local dependencies at boundary pixels, completing the case that interpolation is fundamentally incompatible with faithful attribution upsampling.

\begin{structframe}{Root cause: Semantic blindness.}
Interpolation treats attribution maps as band-limited signals to be smoothly reconstructed. The interpolated value $\tA(x) = \sum_z K(x,z) \cdot A(z)$ is defined entirely by the kernel~$K$, which encodes spatial proximity, not the model's semantic relevance. The segment scores~$s$, the neighbourhood partition~$\cN$, and the segmentation~$\cS$ are absent from the computation. This absence is not an oversight; it is intrinsic to the interpolation paradigm.
\end{structframe}

\begin{proof}[Proof of \Cref{thm:interp-d2} (Interpolation Ignores Semantics)]
\emph{Score independence.}\enspace For any two segment score functions $s_1$ and $s_2$, the interpolated output is identical:
\[
  \tA(x) = \sum_z K(x,z) \cdot A(z),
\]
since $s$ does not appear in the computation. The output depends only on~$A$ and the kernel~$K$.

\smallskip\noindent
\emph{Monotonicity violation.}\enspace It suffices to exhibit an attribution~$A$ and pixels $x, y \in N_k$ in distinct segments with $\tA(x) \neq \tA(y)$ and $M_k \geq 0$. Any kernel whose weights vary with position (bilinear, bicubic, and all higher-order variants) produces such non-uniform outputs within a neighbourhood. Suppose $\tA(x) > \tA(y)$. Since interpolated values are score-independent, assigning $s(\sidx(y)) > s(\sidx(x))$ does not change the output, yet (D2) requires $\tA(y) \geq \tA(x)$, a contradiction.
\end{proof}

Score independence is the defining limitation: no interpolation kernel, regardless of shape, order, or support size, can incorporate the model's reasoning into the upsampling.

\begin{consequenceframe}{Geometric consequence I: Mass leakage.}
Semantic blindness means interpolation also ignores the neighbourhood partition~$\cN$. Since kernels are designed for signal reconstruction, not mass redistribution, their support routinely spans neighbourhood boundaries, causing attribution mass to leak between neighbourhoods.
\end{consequenceframe}

\begin{proof}[Proof of \Cref{thm:interp-d1} (Mass Conservation Violation)]
Let $x_0 \in N_k$ be a boundary pixel whose kernel support includes $y \in N_{k'} \setminus N_k$ ($k' \neq k$) with non-zero weight $w_y = K(x_0, y) \neq 0$, where no other pixel in~$N_k$ has~$y$ in its support. Construct the indicator attribution $A = \mathbf{1}_{\{y\}}$, so $M_k = 0$. Since only~$x_0$ receives a contribution from~$y$:
\[
  \sum_{x \in N_k} \tA(x) = K(x_0, y) = w_y.
\]
Since $w_y \neq 0$ while $M_k = 0$, mass conservation is violated.
\end{proof}

Mass leakage is not a rare corner case: it occurs at every neighbourhood boundary where the kernel support spans the partition.

\begin{consequenceframe}{Geometric consequence II: Non-local contamination.}
The same boundary-spanning support that causes mass leakage also violates locality: a pixel's attribution can depend on data from adjacent neighbourhoods.
\end{consequenceframe}

\begin{proof}[Proof of \Cref{thm:interp-d4} (Locality Violation)]
Using the same boundary pixel~$x_0$ and external pixel~$y$ with non-zero weight $w_y = K(x_0, y) \neq 0$, construct two attributions that agree on~$N_k$:
\[
  A_1 \equiv 0, \qquad A_2 = \mathbf{1}_{\{y\}}.
\]
Both vanish on~$N_k$ (since $y \notin N_k$), so a local operator must produce identical outputs for every pixel in~$N_k$. Yet:
\[
  \tA_1(x_0) = 0, \qquad \tA_2(x_0) = w_y \neq 0.
\]
The attribution at~$x_0$ depends on data outside its neighbourhood, violating~(D4).
\end{proof}

\begin{synthesisframe}{}
All three violations trace to a single design choice: \emph{interpolation} reconstructs a spatial signal without reference to attribution semantics or neighbourhood structure. Any kernel whose support spans neighbourhood boundaries (bilinear, bicubic, and all higher-order variants used in practice) exhibits all three violations simultaneously.
\end{synthesisframe}

\subsection{Uniqueness Theorem Proof}
\label{app:uniqueness-proof}

Having established that interpolation structurally violates three desiderata, a natural question arises: does any operator satisfy them all? We now show that, under the standard structural assumption of linearity in mass (the direct spatial analogue of Shapley linearity), the answer is yes and the solution is \emph{unique}. The argument has two conceptual stages: locality induces the Independence of Irrelevant Alternatives (\Cref{lem:iia}), and monotonicity resolves the resulting functional equation into a potential (\Cref{lem:potential}). Conservation then normalizes the potential into the ratio form, leaving no remaining degrees of freedom.

\begin{structframe}{Redistribution as rational choice.}
Each pixel in a neighbourhood competes for attribution mass based on its segment score. Locality (D4) requires that the relative allocation between any two pixels be independent of what other pixels are present. This is precisely Luce's Independence of Irrelevant Alternatives~\cite{f8222304ca201f8d524b3aa270673023334e7ed1}, the foundational axiom of discrete choice theory. From IIA, the machinery of functional equations determines the operator form uniquely.
\end{structframe}

\begin{lemma}[IIA from Locality]
\label{lem:iia}
Let $\cU$ be a linear-in-mass upsampling operator satisfying \emph{(D4)} locality. For any neighbourhood $N_k$ and any $x, y \in N_k$, the weight ratio $w_k(x)/w_k(y)$ depends only on the scores $s(\sidx(x))$ and $s(\sidx(y))$, not on the scores of other pixels in $N_k$.
\end{lemma}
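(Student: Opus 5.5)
The plan is to reduce the statement to a claim about how a single pixel's weight can depend on the other scores in its neighbourhood, and then to show that this dependence can only enter through a common normalizer. \emph{Step 1.} By linearity in mass, write $\tA(x) = M_k\, w_k(x)$. By (D4), $w_k(x)$ is a function of the multiset of scores $\{s(\sidx(z))\}_{z\in N_k}$ and of the position of $x$. \emph{Step 2 (anonymity).} Apply (D2) with $M_k>0$ in both directions. If $s(\sidx(x)) = s(\sidx(y))$, then $\tA(x)\ge\tA(y)$ and $\tA(y)\ge\tA(x)$, so $w_k(x)=w_k(y)$. Hence $w_k(x) = h\bigl(s(\sidx(x));\,\mu_k\bigr)$, where $\mu_k$ is the score multiset of $N_k$. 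The ratio of interest is therefore $h(a;\mu_k)/h(b;\mu_k)$ with $a=s(\sidx(x))$ and $b=s(\sidx(y))$. It remains to remove the dependence on $\mu_k$.

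\emph{Step 3 (the crux).} I would read (D4) in its operational sense: the rule by which $N_k$ allocates its mass is the same rule no matter which neighbourhood system $\cN$ it sits in. In particular, the rule must be consistent under restriction to sub-neighbourhoods. Concretely, take $N_k\supseteq\{x,y\}$ and a refined system $\cN'$ in which $N'=\{x,y\}\cup R$ is one cell of a split of $N_k$, where $R$ is an arbitrary set of other pixels. Take the input whose mass on $N'$ equals the mass that $\cU$ allocates to $N'$ under $\cN$. Requiring the two systems to produce the same output on $x,y$ gives
\[
h(a;\mu_k)/h(b;\mu_k) = h(a;\mu')/h(b;\mu'),
\]
where $\mu'$ is the score multiset of $N'$. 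Varying $R$ and the complement $N_k\setminus N'$ connects any two multisets that contain both $a$ and $b$. Hence the ratio is a function $R(a,b)$ alone, which is the claim.

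\emph{Main obstacle.} The main difficulty is Step 3. Taken literally, (D4) lets $f$ depend on every score in $N_k$, and an operator such as $w_k(x)\propto \exp\bigl(s_x\cdot\max_{z}s_z\bigr)$ is local, conservative and monotone, yet it violates IIA. So the lemma cannot follow from the literal text alone. The proof must make explicit the consistency (restriction) reading of locality used above, or equivalently assume the pointwise form $w_k(x)=u(s(\sidx(x)))/Z_k$ with $Z_k$ common to the neighbourhood. Under the latter assumption the ratio $u(a)/u(b)$ is immediate, since $Z_k$ cancels. I would state this interpretation as the precise content of (D4) used here, check that it holds for the ratio form \eqref{eq:ratio-form}, and then feed $R(a,b)$ into the Cauchy-equation step (\Cref{lem:potential}).
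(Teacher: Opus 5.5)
Your proposal is correct, and it is more careful than the paper's own proof. The paper follows the same idea as your Step~3 but gives no justification for it. It compares two neighbourhoods that both contain pixels with scores $s_a,s_b$ but differ in their remaining pixels, and asserts that ``by locality'' the ratio $w(s_a)/w(s_b)$ must coincide. That is IIA restated rather than derived: (D4) as written lets $\tA(x)$ depend on every score inside $N_{\nidx(x)}$, and the ``irrelevant alternatives'' lie inside that neighbourhood.

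Your counterexample is valid. Write $s_x=s(\sidx(x))$. The operator $w_k(x)\propto\exp\bigl(s_x\max_{z\in N_k}s_z\bigr)$ is linear in mass and satisfies (D1), (D2) and (D4). It satisfies (D2) even strictly, since two distinct scores force $\max_z s_z>0$. Yet its ratio $\exp\bigl((s_x-s_y)\max_z s_z\bigr)$ changes when a third pixel's score is raised. So under the literal (D4) both the lemma and \Cref{thm:uniqueness} fail. Some extra axiom is therefore unavoidable, and stating it explicitly, as you do, is the right fix.

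The paper also silently assumes anonymity, by writing the weight as a function $w(s_a)$ of the score alone. You derive anonymity from (D2). That is legitimate inside \Cref{thm:uniqueness}, but (D2) is not listed among the lemma's hypotheses. Something like it is genuinely needed: a fixed, position-dependent normalized kernel in each cell is linear in mass, conservative and local, yet violates the conclusion.

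In the write-up, present restriction-consistency as a new axiom, not as a reading of (D4). (D4) constrains dependence for a single input, whereas your requirement links outputs across different partitions $\cN$. It is precisely Luce's choice axiom in product form, $P(a\mid T)=P(a\mid S)\,P(S\mid T)$, so IIA is being assumed rather than derived.

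Three smaller points for the write-up:
\begin{itemize}
\item Given the axiom, $R=\emptyset$ already equates every ratio with the ratio in the two-pixel cell $\{x,y\}$, so the ``connecting multisets'' step is unnecessary.
\item To remove any residual dependence on the positions of $x$ and $y$, put $x,y,x',y'$ with scores $a,b,a,b$ into one cell and invoke your Step~2.
\item State positivity of the weights explicitly, since you divide by $h(b;\cdot)$.
\end{itemize}
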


\begin{proof}
By (D4), $\tA(x)$ depends only on the mass and segment scores within $N_k$. Consider two neighbourhoods $N_k$ and $N_{k'}$ that both contain pixels with segment scores $s_a$ and $s_b$ but differ in their remaining pixels. By locality, the ratio $w(s_a)/w(s_b)$ must be identical in both neighbourhoods: changing the other pixels (the ``irrelevant alternatives'') cannot affect the relative allocation between $s_a$ and $s_b$.
\end{proof}

IIA is the structural constraint that connects attribution redistribution to classical choice theory. It ensures that adding or removing pixels from a neighbourhood changes the absolute share each pixel receives (through the normalization denominator) but not the relative share between any two pixels.

\begin{consequenceframe}{From independence to uniqueness.}
IIA constrains the weight ratios severely: the pairwise ratio $R(a,b) = w(s_a)/w(s_b)$ must satisfy transitivity $R(a,b)\cdot R(b,c) = R(a,c)$, which is the multiplicative Cauchy functional equation. Under monotonicity, only one family of solutions survives.
\end{consequenceframe}

\begin{lemma}[Potential Factorization]
\label{lem:potential}
Let $\cU$ be a linear-in-mass upsampling operator satisfying IIA (\Cref{lem:iia}) and \emph{(D2)} monotonicity. There exists a continuous, strictly monotone function $\vphi\colon[0,1]\to\mathbb{R}_{>0}$ such that $w_k(x) \propto \vphi(s(\sidx(x)))$ for all $x \in N_k$.
\end{lemma}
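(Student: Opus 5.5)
The plan is to pass from the IIA ratio function to a potential by fixing a reference score and reading every weight relative to it. By \Cref{lem:iia}, for any neighbourhood $N_k$ and pixels $x,y\in N_k$ with scores $a=s(\sidx(x))$ and $b=s(\sidx(y))$, the ratio $w_k(x)/w_k(y)$ is a well-defined function $R(a,b)$ of the two scores alone. Positivity of the weights lets us take $R(a,b)\in(0,\infty)$; I would either assume strict positivity as part of the operator class, or argue that a zero weight would make $R$ undefined. Three properties of $R$ follow directly from the definition:
\begin{itemize}
\item $R(a,a)=1$, since two pixels in the same segment receive equal weight by (D2) applied in both directions;
\item $R(b,a)=R(a,b)^{-1}$;
\item $R(a,b)R(b,c)=R(a,c)$ whenever $a,b,c$ co-occur in some neighbourhood, because $\frac{w(x)}{w(y)}\cdot\frac{w(y)}{w(z)}=\frac{w(x)}{w(z)}$.
\end{itemize}
For the third property to hold for all triples, I assume the domain is rich enough: any triple of scores can appear together in one neighbourhood, e.g.\ by choosing $\cS$ and $s$ freely.

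Next, fix the reference score $c_0=0.5$ and define $\vphi(a)=R(a,c_0)$. The cocycle identity gives $R(a,b)=R(a,c_0)R(c_0,b)=\vphi(a)/\vphi(b)$. Hence within any $N_k$ we have $w_k(x)=\vphi(s(\sidx(x)))\cdot\bigl(w_k(y)/\vphi(s(\sidx(y)))\bigr)$, and the bracketed factor is constant over $N_k$. That is exactly $w_k(x)\propto\vphi(s(\sidx(x)))$, and $\vphi$ is strictly positive by construction.

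For monotonicity, (D2) with $M_k>0$ (via linearity in mass, $\tA=M_kw_k$) gives $a\ge b\Rightarrow\vphi(a)\ge\vphi(b)$, so $\vphi$ is nondecreasing. The statement claims \emph{strict} monotonicity and continuity, and this is the main obstacle: (D2) as written is only a weak inequality, so a constant $\vphi$, which gives uniform redistribution, satisfies everything. I would handle this by reading (D2) in its strict form, $s(\sidx(x))>s(\sidx(y))\Rightarrow\tA(x)>\tA(y)$ for $M_k>0$, which the paper's own verification of USU uses via "strictly increasing." That yields strict monotonicity immediately.

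Continuity cannot come from monotonicity alone, since a monotone $\vphi$ may jump. Following Step~2 of the main-text sketch, I would add a regularity hypothesis: that the operator, and hence $R$, is continuous in the scores. Continuity of $\vphi$ then follows from $\vphi(a)=R(a,c_0)$. Without that hypothesis the best available result is a monotone $\vphi$ with at most countably many discontinuities, and I would state that explicitly as the fallback.
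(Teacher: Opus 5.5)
Your argument is correct and follows the paper's own route: define the IIA ratio $R(a,b)$, use transitivity $R(a,b)R(b,c)=R(a,c)$, and read off $R(a,b)=\vphi(a)/\vphi(b)$. Your version is tighter than the paper's in two respects. Fixing the reference score $c_0$ solves this cocycle identity with no regularity at all, whereas the paper appeals to the multiplicative Cauchy equation ``under continuity''. Your caveats are also warranted: the paper simply assumes continuity and attributes strict monotonicity to (D2), which as stated is only a weak inequality and is already satisfied by uniform redistribution, i.e.\ constant $\vphi$.
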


\begin{proof}
Define $R(a,b) = w(s_a)/w(s_b)$ for score values $a,b$. By \Cref{lem:iia}, $R$ is well-defined (independent of neighbourhood composition). IIA implies transitivity:
\[
R(a,b) \cdot R(b,c) = \frac{w(s_a)}{w(s_b)} \cdot \frac{w(s_b)}{w(s_c)} = \frac{w(s_a)}{w(s_c)} = R(a,c).
\]
This is the multiplicative Cauchy functional equation. Under continuity and strict monotonicity from (D2), the unique family of solutions is $R(a,b) = \vphi(a)/\vphi(b)$ for some continuous, strictly monotone $\vphi\colon[0,1]\to\mathbb{R}_{>0}$. Strict positivity follows from the requirement that weights be positive for all score values.
\end{proof}

With the potential in hand, conservation selects the unique normalization.

\begin{proof}[Proof of \Cref{thm:uniqueness}]
By linearity in mass, $\tA(x) = M_k \cdot w_k(x)$ for some weight function $w_k$. By \Cref{lem:potential}, $w_k(x) \propto \vphi(s(\sidx(x)))$. (D1) requires:
\[
\sum_{x \in N_k} \tA(x) = M_k \implies \sum_{x \in N_k} w_k(x) = 1.
\]
Therefore $w_k(x) = \vphi(s(\sidx(x)))/\sum_{y\in N_k}\vphi(s(\sidx(y)))$, yielding the ratio form.
\end{proof}

\begin{synthesisframe}{}
The three axioms eliminate degrees of freedom in sequence: locality forces IIA (the relative weight of any two pixels is context-independent), monotonicity resolves the functional equation into a strictly monotone potential, and conservation normalizes the result. Sufficiency is therefore accompanied by uniqueness: no other linear-in-mass architecture satisfies the desiderata.
\end{synthesisframe}

\begin{figure}[t]
  \centering
  \includegraphics[width=\linewidth]{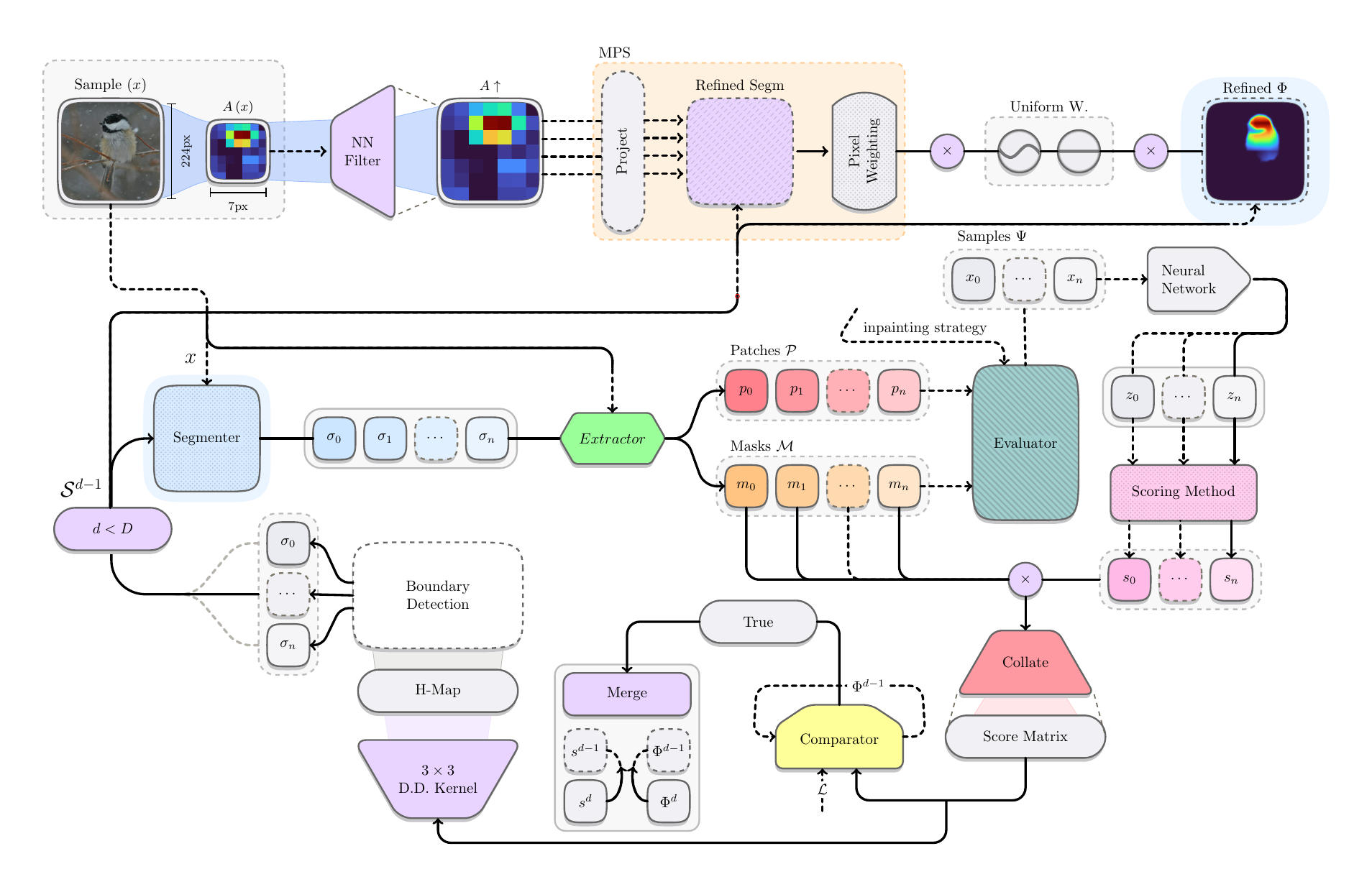}
  \caption{\textbf{Complete USU pipeline.}
    Given input $x$, the pipeline proceeds in three stages.
    \emph{Scoring} (bottom right): masked perturbation evaluates segment importance, producing the score matrix~$\Phi$.
    \emph{Hierarchical refinement} (bottom left): the depth fusion loop convolves $\Phi$ with the diagonal difference kernel to compute the H-map, selects heterogeneous segments for re-segmentation, and merges coarse and fine scores so that segment boundaries emerge naturally from the model's own reasoning; convergence is governed by the comparator threshold~$\mathcal{L}$.
    \emph{Redistribution} (top): the coarse attribution $A(x)$ is projected onto the refined segmentation and redistributed via pixel-level weighting with the tensor potential, yielding~$A^{\uparrow}$.}
  \label{fig:usu-pipeline}
\end{figure}

The complete USU pipeline, from coarse masses through potential-weighted normalization to redistributed attribution, is illustrated in \Cref{fig:usu-pipeline}.

\subsection{Soft IWMR Properties}
\label{app:iwmr-proofs}

Having established uniqueness for strict conservation, we analyze what happens when (D1) is intentionally relaxed to allow importance-weighted mass redistribution across neighbourhoods. The key architectural insight is that IWMR modifies only the mass budget assigned to each neighbourhood while retaining USU's normalized weights unchanged. This separation determines which guarantees survive the relaxation: properties that depend on weights carry over intact, while properties that depend on per-neighbourhood mass budgets are either generalized or intentionally violated.

\begin{structframe}{Design principle: redistribute mass, preserve weights.}
IWMR replaces each neighbourhood's original mass $M_k$ with a redistributed mass $\tilde{M}_k = M_{\mathrm{total}} \cdot \rho_k$, where $\{\rho_k\}$ forms a partition of unity weighted by neighbourhood importance. The normalized weights $w_k(x) \propto \vphi(s(\sidx(x)))$ remain identical to the strict-conservation operator. IWMR and USU therefore differ only in how much mass each neighbourhood receives, not in how that mass is distributed among pixels within it.
\end{structframe}

\begin{theoremframe}
\begin{theorem}[IWMR Properties]
\label{thm:iwmr-properties}
Soft IWMR with the tensor potential satisfies:

\smallskip\noindent
\emph{(D1-G) Generalized completeness.}\enspace $\sum_{x \in N_k} \tA^{\mathrm{IWMR}}(x) = \tilde{M}_k$ for all~$k$;

\smallskip\noindent
\emph{(D2) Segment monotonicity.}\enspace Higher scores yield higher attributions within each neighbourhood when $\tilde{M}_k \geq 0$;

\smallskip\noindent
\emph{(D3) Constant conditioning strength.}\enspace The shared tensor potential ensures uniform score sensitivity;

\smallskip\noindent
\emph{(D4$'$) Semi-locality.}\enspace $\tA^{\mathrm{IWMR}}(x)$ is determined by $\tilde{M}_{\nidx(x)}$ and segment scores within $N_{\nidx(x)}$.

\smallskip
IWMR intentionally violates (D1) whenever $\tilde{M}_k \neq M_k$, which occurs precisely when neighbourhood importances are non-uniform.
\end{theorem}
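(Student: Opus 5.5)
The plan is to exploit the structural identity $\tA^{\mathrm{IWMR}}(x) = \tilde{M}_{\nidx(x)}\cdot w_{\nidx(x)}(x)$ from \eqref{eq:iwmr-operator}. This is exactly the USU operator of \eqref{eq:usu-operator} with the mass argument $M_k$ replaced by $\tilde{M}_k$. Each claimed property should then follow from the corresponding part of \Cref{thm:usu-all}, applied to the modified mass vector $(\tilde{M}_k)_k$. The properties that depend only on the weights $w_k$ carry over verbatim; those that depend on the mass budget are restated with $\tilde{M}_k$.

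\textbf{(D1-G).} Summing over $x\in N_k$, every $x$ has $\nidx(x)=k$, so $\sum_{x\in N_k}\tA^{\mathrm{IWMR}}(x)=\tilde{M}_k\sum_{x\in N_k}w_k(x)=\tilde{M}_k$, since the weights in \eqref{eq:normalized-weight} sum to one.

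\textbf{(D2).} For $x,y\in N_k$, the weight $w_k(x)$ is a positive multiple, common to the neighbourhood through the denominator, of $\vphi(s(\sidx(x)))$. Because $\vphi$ in \eqref{eq:tensor-potential} is strictly increasing, $s(\sidx(x))\ge s(\sidx(y))$ gives $w_k(x)\ge w_k(y)$. Multiplying by $\tilde{M}_k\ge 0$ preserves the inequality.

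\textbf{(D3).} This is a property of the potential alone. The same $\vphi$ is used, so the computation $\vphi(s+\Delta)/\vphi(s)=\exp(\Delta/\varepsilon)$ applies unchanged. The importance potential $\Lambda$ has the same exponential form, but it is not needed for the claim.

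\textbf{(D4$'$).} By construction, $\tA^{\mathrm{IWMR}}(x)$ is a function of $\tilde{M}_{\nidx(x)}$ and of $\{s(\sidx(y))\}_{y\in N_{\nidx(x)}}$, because $w_{\nidx(x)}(x)$ involves only scores in $N_{\nidx(x)}$. One should note that $\tilde{M}_k$ itself depends globally on all masses and importances. That global dependence is why the property is only semi-local.

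\textbf{Violation of (D1).} The final claim, that (D1) fails exactly when importances are non-uniform, is the only step needing care and is the main obstacle. The easy direction is this: if all $\lambda_k$ are equal, then $\rho_k=|N_k|/|\Omega|$. Since $A$ is constant on $N_k$, $M_k=a_k|N_k|$. I expect this to give $\tilde{M}_k=M_k$ only when the $a_k$ are also equal. So the statement as written, ``precisely when'', needs interpreting. I would prove the weaker and correct statement instead: $\tilde{M}_k\neq M_k$ for some $k$ if and only if $(M_k)_k$ is not proportional to $(\Lambda(\lambda_k)|N_k|)_k$.

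Proof of that statement: (D1) holds for all $k$ iff $M_k=M_{\mathrm{total}}\rho_k$ for every $k$, i.e.\ iff the mass vector is proportional to the weight vector defining $\rho$, with the proportionality constant fixed by summation. I would then remark that the ``precisely when'' phrasing in the paper corresponds to the regime where the coarse mass is uniform per pixel. Finally, I would record that $\sum_k(\tilde{M}_k-M_k)=M_{\mathrm{total}}\sum_k\rho_k-M_{\mathrm{total}}=0$, which gives the zero-sum mass flow.
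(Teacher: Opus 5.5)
Your arguments for (D1-G), (D2), (D3) and (D4$'$) are exactly the paper's. Each property is read off from $\tA^{\mathrm{IWMR}}(x)=\tilde{M}_{\nidx(x)}\,w_{\nidx(x)}(x)$, with the weights inherited unchanged from USU.

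You differ on the final clause, and there you are more careful than the paper. Its proof only says that non-uniform importances make $\tilde{M}_k=M_{\mathrm{total}}\rho_k\neq M_k$ ``in general''. It never addresses the ``precisely when'' direction.

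Your diagnosis is correct. With $\rho_k$ weighted by $|N_k|$ as in \eqref{eq:redistribution-weights}, equal importances give $\tilde{M}_k=M_{\mathrm{total}}|N_k|/|\Omega|$.
\begin{itemize}
\item Take two single-pixel neighbourhoods with $a_0=1$ and $a_1=0$. Then $\tilde{M}_0=1/2\neq M_0=1$. So the main text's remark that equal importances reduce IWMR to USU is also false.
\item Conversely, non-uniform importances with $a_k=c\,\Lambda(\lambda_k)$ give $\tilde{M}_k=M_k$ for every $k$.
\end{itemize}
Your replacement criterion is right: (D1) holds iff $(M_k)_k$ is a scalar multiple, possibly zero, of $(\Lambda(\lambda_k)|N_k|)_k$. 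Equivalently, $a_k\propto\Lambda(\lambda_k)$. This is strictly more than the paper establishes.

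Two small caveats apply to your closing remark about the uniform regime $a_k\equiv a$.
\begin{itemize}
\item The paper's equivalence is recovered only when $a\neq 0$. The argument uses injectivity of $\Lambda$, and $a=0$ makes (D1) hold trivially.
\item It holds only in the global sense ``for some $k$'' that you adopt. A single neighbourhood whose $\Lambda(\lambda_k)$ equals the size-weighted mean $\sum_j\Lambda(\lambda_j)|N_j|/|\Omega|$ keeps its mass even under non-uniform importances.
\end{itemize}
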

\end{theoremframe}

\begin{proof}[Proof of (D1-G)]
$\sum_{x \in N_k}\tA^{\mathrm{IWMR}}(x) = \tilde{M}_k\sum_{x \in N_k}w_k(x) = \tilde{M}_k$, since the USU weights normalize to~$1$ by construction.
\end{proof}

Generalized completeness states that each neighbourhood exhausts its redistributed budget exactly. The shift from $M_k$ to $\tilde{M}_k$ replaces per-neighbourhood bookkeeping with a global reallocation, but the accounting within each neighbourhood remains exact.

\begin{proof}[Proof of (D2)]
Within $N_k$, the weights $w_k(x)\propto\vphi(s(\sidx(x)))$ are monotone in score since $\vphi$ is strictly increasing. When $\tilde{M}_k\geq 0$, $\tA^{\mathrm{IWMR}}(x)=\tilde{M}_k\cdot w_k(x)$ preserves the ordering.
\end{proof}

Monotonicity survives because it is a property of the normalized weights, which IWMR inherits from USU without modification. The redistributed mass $\tilde{M}_k$ acts as a non-negative scalar that preserves the weight ordering.

\begin{proof}[Proof of (D3)]
The tensor potential satisfies $\vphi(s{+}\Delta)/\vphi(s)=\exp(\Delta/\varepsilon)$, independent of~$s$ (\Cref{app:potential-analysis}). Since IWMR uses the same potential, conditioning strength is constant.
\end{proof}

\begin{consequenceframe}{From strict locality to semi-locality.}
The three properties above depend only on the weights within each neighbourhood, so they carry over from USU unchanged. Semi-locality is where the architectural separation becomes visible: the redistributed mass $\tilde{M}_k$ depends on all neighbourhood importances through $\rho_k$, injecting a controlled global dependency into an otherwise local computation.
\end{consequenceframe}

\begin{proof}[Proof of (D4$'$)]
Given $\tilde{M}_k$ and the segment scores within $N_k$, $\tA^{\mathrm{IWMR}}(x)=\tilde{M}_k\cdot w_k(x)$ depends on no other data. The distinction from strict (D4) is that $\tilde{M}_k$ itself depends on global importance through~$\rho_k$.
\end{proof}

\begin{proof}[Proof of (D1) violation]
When importances are non-uniform, the redistribution weights $\rho_k$ reflect importance rather than original mass proportions, so $\tilde{M}_k=M_{\mathrm{total}}\cdot\rho_k\neq M_k$ in general.
\end{proof}

The violation is precisely the mechanism by which mass flows toward important neighbourhoods. What strict conservation loses locally, IWMR preserves globally.

\begin{corollary}[Global Conservation and Zero-Sum Flow]
\label{cor:iwmr-conservation}
Since $\{\rho_k\}$ forms a partition of unity, IWMR preserves total mass: $\sum_k \tilde{M}_k = M_{\mathrm{total}}$. Mass flows are zero-sum: $\sum_k (\tilde{M}_k - M_k) = 0$.
\end{corollary}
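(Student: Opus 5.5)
The plan is to derive both claims from the definitions of $\rho_k$ and $\tilde{M}_k$ in \eqref{eq:redistribution-weights}--\eqref{eq:iwmr-operator}, plus one positivity check. Nothing from the uniqueness machinery is needed. The only structural fact used is that $\{\rho_k\}_{k=0}^{K-1}$ is a partition of unity.

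\emph{Step 1: $\{\rho_k\}$ is well defined and sums to one.} Because $\Lambda(\lambda)=\exp((\lambda-0.5)/\varepsilon_\Lambda)>0$ for every $\lambda$, and every neighbourhood is nonempty ($|N_k|\ge 1$, since $\cN$ partitions $\Omega$ into $K$ cells), each numerator $\Lambda(\lambda_k)|N_k|$ is strictly positive. Hence the denominator $Z=\sum_j\Lambda(\lambda_j)|N_j|$ is strictly positive and finite, so no division by zero occurs. Summing the definition over $k$ then gives
\[
\sum_k\rho_k=\frac{1}{Z}\sum_k\Lambda(\lambda_k)|N_k|=1 .
\]
This step is the only place where a hypothesis has to be checked, namely positivity of the potential and nonemptiness of the cells, and I expect it to be the sole point needing care.

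\emph{Step 2: global conservation of mass budgets.} Since $M_{\mathrm{total}}$ does not depend on $k$, Step~1 gives
\[
\sum_k\tilde{M}_k=M_{\mathrm{total}}\sum_k\rho_k=M_{\mathrm{total}} .
\]

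\emph{Step 3: zero-sum flow.} By definition $M_{\mathrm{total}}=\sum_k M_k$. Combining this with Step~2 gives $\sum_k(\tilde{M}_k-M_k)=M_{\mathrm{total}}-M_{\mathrm{total}}=0$.

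\emph{Step 4: pixel-level restatement.} For completeness I would also phrase the result at the pixel level. Sum the (D1-G) identity from \Cref{thm:iwmr-properties} over $k$, and use that $\cN$ partitions $\Omega$ (as in \Cref{cor:global}). This gives
\[
\sum_{x\in\Omega}\tA^{\mathrm{IWMR}}(x)=\sum_k\tilde{M}_k=M_{\mathrm{total}}=\sum_{x\in\Omega}A(x),
\]
which is the IWMR analogue of \eqref{eq:global-conservation}.
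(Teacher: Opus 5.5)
Your proof is correct and matches the paper's: your Steps 2 and 3 are exactly its two-line computation $\sum_k\tilde{M}_k = M_{\mathrm{total}}\sum_k\rho_k = M_{\mathrm{total}}$, followed by subtracting $\sum_k M_k = M_{\mathrm{total}}$. Your Step 1 verifies the partition-of-unity property (positive $\Lambda$, nonempty cells), which the paper takes as a hypothesis of the statement. Your Step 4 (the pixel-level restatement via (D1-G)) is valid but not needed for the statement.
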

\begin{proof}
$\sum_k\tilde{M}_k = M_{\mathrm{total}}\sum_k\rho_k = M_{\mathrm{total}}$. Subtracting original masses: $\sum_k(\tilde{M}_k - M_k) = M_{\mathrm{total}} - M_{\mathrm{total}} = 0$.
\end{proof}

\begin{synthesisframe}{}
The weight-preservation architecture of IWMR produces a clean separation of concerns: within-neighbourhood guarantees (monotonicity, conditioning strength, generalized completeness) are inherited from USU intact, while per-neighbourhood mass budgets are globally reallocated via the partition of unity. Locality weakens to semi-locality at a single, identifiable point: the dependence of $\tilde{M}_k$ on global importance. Total mass is conserved; only its distribution across neighbourhoods changes.
\end{synthesisframe}

\subsection{Score Potential Analysis}
\label{app:potential-analysis}

The uniqueness theorem (\Cref{thm:uniqueness}) establishes that any compliant operator takes the ratio form $w_k(x) \propto \vphi(s(\sidx(x)))$ for some strictly monotone potential~$\vphi$, satisfying (D1), (D2), and (D4) simultaneously. The potential itself remains a free parameter: every strictly monotone, positive $\vphi$ yields a valid operator. (D3), constant conditioning strength, is the final selection criterion. It requires that the multiplicative effect of a score gap~$\Delta$ be uniform across the score range, so that the operator treats equal score differences identically regardless of absolute level. We now show that this requirement uniquely selects the exponential family.

\begin{structframe}{Selection principle: score-uniform sensitivity.}
(D3) demands $\vphi(s{+}\Delta)/\vphi(s) = g(\Delta)$ for some function~$g$ independent of~$s$. In log space, this becomes $\log\vphi(s{+}\Delta) - \log\vphi(s) = h(\Delta)$: the increments of $\log\vphi$ must depend only on the step size, not on the starting point. This is the additive Cauchy functional equation, whose continuous solutions are precisely the affine functions $f(s) = cs + d$. Any potential with non-affine logarithm violates (D3).
\end{structframe}

\begin{proposition}[Power-law potentials violate (D3)]
\label{prop:power-law-d3}
For $\vphi(s) = s^\gamma$ with $\gamma > 0$, the conditioning ratio depends on~$s$.
\end{proposition}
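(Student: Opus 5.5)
The plan is to compute the conditioning ratio directly and exhibit its dependence on the base score $s$. For $\vphi(s)=s^\gamma$ and any base score $s\in(0,1]$ and gap $\Delta>0$ with $s+\Delta\le 1$, we have
\[
\frac{\vphi(s+\Delta)}{\vphi(s)} = \left(\frac{s+\Delta}{s}\right)^{\gamma} = \left(1+\frac{\Delta}{s}\right)^{\gamma}.
\]
The whole argument rests on this closed form. (D3) requires this quantity to equal a function of $\Delta$ alone, namely $\exp(\Delta/\varepsilon)$, for every admissible $s$.

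Next, fix any $\Delta\in(0,1)$ and examine the map $s\mapsto(1+\Delta/s)^\gamma$ on $(0,1-\Delta]$. Since $\gamma>0$ and $\Delta/s$ is strictly decreasing in $s$, this map is strictly decreasing, so it is not constant. A concrete witness makes the violation explicit. Take $\Delta=1/4$ and compare $s=1/4$ with $s=1/2$: the ratios are $2^\gamma$ and $(3/2)^\gamma$, which differ for every $\gamma>0$. The same computation exhibits the ``compresses at high scores, amplifies near zero'' behaviour described in the main text. As $s\to 0^+$ the ratio diverges to $+\infty$, while near $s=1-\Delta$ it is bounded by $(1-\Delta)^{-\gamma}$.

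The only subtlety is the boundary point $s=0$, where $\vphi(0)=0$. There the ratio is undefined, and $\vphi$ even fails the strict positivity $\vphi\colon[0,1]\to\mathbb{R}_{>0}$ that (D3) presupposes. I would note this as a further, independent defect rather than rely on it, and restrict the main argument to $s>0$ so that the conclusion does not hinge on the domain technicality. Optionally, I would close by tying the result to the structframe preceding the proposition. In log space, $\log\vphi(s)=\gamma\log s$ is not affine, so its increments cannot depend only on $\Delta$. This confirms the proposition as an instance of the general additive-Cauchy selection principle, although the direct computation above already suffices.
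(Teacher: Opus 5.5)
Your proof is correct and follows the paper's route: compute $\varphi(s+\Delta)/\varphi(s) = ((s+\Delta)/s)^\gamma$, observe that it varies with $s$, and note equivalently that $\gamma\log s$ is not affine. Your strict-monotonicity argument and explicit witness ($\Delta = 1/4$, $s \in \{1/4, 1/2\}$, giving $2^\gamma \neq (3/2)^\gamma$) just make the paper's one-line ``which varies with $s$'' fully explicit.
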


\begin{proof}
$\vphi(s{+}\Delta)/\vphi(s) = ((s{+}\Delta)/s)^\gamma$, which varies with~$s$. Equivalently, $\log\vphi(s) = \gamma\log s$ is not affine.
\end{proof}

\begin{proposition}[Log-odds potentials violate (D3)]
\label{prop:log-odds-d3}
For $\vphi(s) = (s/(1{-}s))^\gamma$ with $\gamma > 0$, the conditioning ratio depends on~$s$.
\end{proposition}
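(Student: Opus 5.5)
The plan mirrors \Cref{prop:power-law-d3}: compute the conditioning ratio explicitly and show it cannot be a function of $\Delta$ alone. For $\vphi(s) = (s/(1-s))^\gamma$ and $s, s+\Delta \in (0,1)$,
\[
\frac{\vphi(s+\Delta)}{\vphi(s)} = \left(\frac{(s+\Delta)(1-s)}{s(1-s-\Delta)}\right)^{\gamma}.
\]
(D3) requires this to equal $\exp(\Delta/\varepsilon)$, or at least some $g(\Delta)$ independent of $s$. So it suffices to fix one $\Delta > 0$ and find two base scores where the ratio differs.

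\textbf{Explicit witness.} Take $\Delta = 1/4$.
\begin{itemize}
\item At $s = 1/4$ the inner ratio is $(1/2)(3/4)/\bigl((1/4)(1/2)\bigr) = 3$.
\item At $s = 1/2$ it is $(3/4)(1/2)/\bigl((1/2)(1/4)\bigr) = 3$ as well. This is a symmetric coincidence, so $s = 1/2$ is a poor second point.
\item At $s = 1/8$ it is $(3/8)(7/8)/\bigl((1/8)(5/8)\bigr) = 21/5$, which differs from $3$.
\end{itemize}
Raising to the power $\gamma > 0$ preserves the inequality, so the ratio depends on $s$.

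\textbf{Structural version.} For a cleaner argument parallel to the structure frame, I would note that $\log\vphi(s) = \gamma\,\mathrm{logit}(s)$. Its second derivative is
\[
\gamma\left(\frac{1}{(1-s)^2} - \frac{1}{s^2}\right),
\]
which is nonzero for $s \neq 1/2$. So $\log\vphi$ is not affine on any interval, and by the additive Cauchy characterization in the selection-principle frame, (D3) fails. One can also argue directly: as $s \to 0^+$ with $\Delta$ fixed, the ratio tends to $+\infty$, so it cannot equal a fixed finite $\exp(\Delta/\varepsilon)$.

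\textbf{Main subtlety.} The potential is undefined at $s \in \{0,1\}$ and vanishes at $0$, so it is not a map $[0,1] \to \mathbb{R}_{>0}$ as (D3) presupposes. I would restrict to the open interval $(0,1)$ with $s + \Delta < 1$. This only weakens the requirement, so exhibiting a violation on the interior still suffices. I would also avoid base points symmetric about $1/2$, as the $s = 1/2$ case above shows.
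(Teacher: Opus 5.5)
Your proposal is correct and takes essentially the paper's approach: write out the ratio $\bigl((s+\Delta)(1-s)/(s(1-s-\Delta))\bigr)^{\gamma}$ and observe that it varies with $s$, or equivalently that the log-potential $\gamma\,\mathrm{logit}(s)$ is not affine. Your explicit witness ($\Delta=1/4$ gives $3$ at $s=1/4$ but $21/5$ at $s=1/8$) and your second-derivative check make rigorous the non-constancy that the paper only asserts by calling the ratio ``a rational function of $s$''. Your restriction to $(0,1)$ also handles a domain issue the paper passes over.
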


\begin{proof}
$\vphi(s{+}\Delta)/\vphi(s) = \bigl((s{+}\Delta)(1{-}s)\big/\bigl((1{-}s{-}\Delta)\,s\bigr)\bigr)^\gamma$, a rational function of~$s$. Equivalently, $\log\vphi(s) = \gamma[\log s - \log(1{-}s)]$ is not affine.
\end{proof}

The pattern is the same in both cases: a non-affine logarithm produces score-dependent sensitivity, amplifying redistribution in some score ranges while suppressing it in others. Since $\log\vphi$ must be affine and $\vphi$ must be strictly monotone and positive, the unique surviving family is $\vphi(s) = Ce^{s/\varepsilon}$ for $C > 0$ and $\varepsilon > 0$, where $C$ cancels in the ratio form and the shift by~$0.5$ is a centering convention.

\begin{theoremframe}
\begin{theorem}[Tensor potential uniquely satisfies (D3)]
\label{thm:tensor-d3}
Among continuous, strictly monotone potentials, $\vphi(s) = \exp((s - 0.5)/\varepsilon)$ is the unique family satisfying constant conditioning strength:
\[
\frac{\vphi(s+\Delta)}{\vphi(s)} = \exp\!\left(\frac{\Delta}{\varepsilon}\right),
\]
independent of~$s$. The temperature~$\varepsilon > 0$ controls redistribution sharpness.
\end{theorem}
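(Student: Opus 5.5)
The argument has two directions: verification that the tensor potential satisfies (D3), and uniqueness. Verification is a one-line computation:
\[
\frac{\vphi(s+\Delta)}{\vphi(s)}=\exp\!\Bigl(\frac{s+\Delta-0.5}{\varepsilon}-\frac{s-0.5}{\varepsilon}\Bigr)=\exp(\Delta/\varepsilon),
\]
and this is independent of $s$. For uniqueness we take an arbitrary continuous, strictly monotone, positive $\vphi$ satisfying (D3). Following the selection principle stated above, we pass to $f=\log\vphi$, which is well defined because $\vphi>0$. Condition (D3) then reads $f(s+\Delta)-f(s)=\Delta/\varepsilon$ for all admissible $s,\,s+\Delta\in[0,1]$.

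Read literally, with the right-hand side already prescribed as $\exp(\Delta/\varepsilon)$, this finishes the proof immediately. Setting $s=0$ gives $f(t)=f(0)+t/\varepsilon$ for every $t\in[0,1]$, so $\vphi(t)=Ce^{t/\varepsilon}$ with $C=\vphi(0)>0$. Writing $C=c\,e^{-0.5/\varepsilon}$ puts $\vphi$ in the form $\exp((s-0.5)/\varepsilon)$ up to the multiplicative constant $c$. That constant cancels in the ratio form of \Cref{thm:uniqueness}, so the potential is unique up to this irrelevant scaling, and the $0.5$ shift is a centering convention.

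The more robust version, and the one I would present, assumes only that the ratio $\vphi(s+\Delta)/\vphi(s)=g(\Delta)$ for some function $g$ independent of $s$, and then proves that $g$ must be exponential. Put $h=\log g$. Then $f(s+\Delta)-f(s)=h(\Delta)$, and taking $s=0$ gives $h(\Delta)=f(\Delta)-f(0)$, so $h$ is continuous on $[0,1]$. Chaining increments gives the restricted additive Cauchy equation $h(a+b)=h(a)+h(b)$ whenever $a,b,a+b\in[0,1]$. A continuous solution of this restricted equation is linear: first $h(q)=qh(1)$ for dyadic (or rational) $q$ by repeated halving, then continuity extends this to all of $[0,1]$. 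Hence $h(\Delta)=\Delta/\varepsilon$ with $1/\varepsilon:=h(1)$.

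Strict monotonicity gives $h(1)\neq 0$. If $\vphi$ is increasing, as (D2) requires for the ratio form to respect the score ordering, then $h(1)>0$ and so $\varepsilon>0$. The main obstacle is exactly this Cauchy step on a bounded domain rather than on all of $\mathbb{R}$: one has to check that the restricted equation together with continuity still forces linearity. I would handle it directly by the dyadic-halving-plus-density argument rather than by citing the unrestricted theorem. Finally, I would note that the power-law and log-odds alternatives of \Cref{prop:power-law-d3,prop:log-odds-d3} are examples of this exclusion, since their logarithms are not affine.
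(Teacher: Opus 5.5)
Your proposal is correct and follows the same route as the paper. Both arguments pass to $f=\log\vphi$, treat (D3) as an additive Cauchy equation, use continuity to force $f$ affine, and dismiss the multiplicative constant and the $0.5$ shift as conventions that cancel in the ratio form. Your added care fills gaps the paper passes over: the dyadic-halving argument justifies the Cauchy step on the bounded domain $[0,1]$, and taking $\varepsilon>0$ from the increasing direction demanded by (D2) is more accurate than the paper's appeal to strict monotonicity alone, which would also allow $c<0$.
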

\end{theoremframe}

\begin{proof}
(D3) requires $\vphi(s{+}\Delta)/\vphi(s) = g(\Delta)$. Setting $f = \log\vphi$, this becomes $f(s{+}\Delta) - f(s) = h(\Delta)$, the additive Cauchy equation. Under continuity, $f(s) = cs + d$ for constants $c, d$, yielding $\vphi(s) = e^{cs+d} = Ce^{s/\varepsilon}$ with $\varepsilon = 1/c > 0$ (strict monotonicity requires $c > 0$). Direct verification: $\vphi(s{+}\Delta)/\vphi(s) = e^{c\Delta} = \exp(\Delta/\varepsilon)$.
\end{proof}

\begin{synthesisframe}{Operational consequence.}
Violating (D3) introduces \emph{score-dependent redistribution sensitivity}: for $\vphi(s) = s^2$, a gap of $\Delta = 0.1$ amplifies the weight ratio by ${\approx}\,1.27$ near $s = 0.8$ but by $4.0$ near $s = 0.1$, over $3{\times}$ the sensitivity. The tensor potential eliminates this non-uniformity. Practitioners may nonetheless prefer alternative potentials when score distributions cluster in a known range; the ratio form retains (D1), (D2), and (D4) guarantees for any strictly monotone~$\vphi$.
\end{synthesisframe}

\subsection{USU Desiderata Verification}
\label{app:desiderata-verification}

The interpolation violation proofs (\Cref{app:interp-proofs}) established that semantic blindness, mass leakage, and non-local contamination are structural consequences of the interpolation paradigm. We now verify that USU's ratio form resolves all three violations and, with the tensor potential, satisfies all four desiderata simultaneously.

\begin{structframe}{Design principle: Score-aware mass redistribution.}
USU replaces interpolation's fixed spatial kernel with normalized weights $w_k(x) = \vphi(s(\sidx(x)))/\sum_{y \in N_k}\vphi(s(\sidx(y)))$ that encode the model's reasoning and respect the neighbourhood partition by construction. Three algebraic properties of these weights (they sum to one, they inherit the potential's monotonicity, and they depend only on local data) directly resolve the three structural failures.
\end{structframe}

\begin{proof}[Proof of \Cref{thm:usu-all}~(D1): Neighbourhood Completeness]
\begin{align*}
\sum_{x \in N_k} \tA(x) &= \sum_{x \in N_k} M_k \cdot w_k(x) = M_k \sum_{x \in N_k} \frac{\vphi(s(\sidx(x)))}{\sum_{y\in N_k}\vphi(s(\sidx(y)))} = M_k \cdot 1 = M_k.
\end{align*}
\end{proof}

Where interpolation leaks mass across neighbourhood boundaries through its spatially extended kernel, the normalized weights confine redistribution entirely within each neighbourhood. The denominator sums over~$N_k$ alone, so every unit of~$M_k$ is redistributed to pixels in~$N_k$ and nowhere else.

\begin{proof}[Proof of \Cref{thm:usu-all}~(D2): Segment Monotonicity]
For $x,y\in N_k$ with $s(\sidx(x))\geq s(\sidx(y))$: since the exponential is strictly increasing, $\vphi(s(\sidx(x)))\geq\vphi(s(\sidx(y)))$, hence $w_k(x)\geq w_k(y)$. When $M_k\geq 0$, $\tA(x)=M_k\cdot w_k(x)\geq M_k\cdot w_k(y)=\tA(y)$.
\end{proof}

Monotonicity is the direct resolution of semantic blindness: the potential~$\vphi$ translates the model's segment scores into weight ordering, ensuring that higher-scored regions receive proportionally more attribution. Where interpolation ignores scores entirely, USU makes them the sole determinant of within-neighbourhood allocation.

\begin{proof}[Proof of \Cref{thm:usu-all}~(D4): Locality]
By construction, $\tA(x)=M_{\nidx(x)}\cdot w_{\nidx(x)}(x)$ where $M_{\nidx(x)}=\sum_{y\in N_{\nidx(x)}}A(y)$ and $w_{\nidx(x)}(x)$ depends only on segment scores within $N_{\nidx(x)}$. No information from other neighbourhoods is required.
\end{proof}

\begin{consequenceframe}{Calibration: the final degree of freedom.}
(D1), (D2), and (D4) hold for any strictly monotone potential~$\vphi$. The remaining freedom, the choice of~$\vphi$ within the monotone family, is resolved by (D3), which demands score-uniform sensitivity. As shown in \Cref{app:potential-analysis}, only the exponential family satisfies this requirement.
\end{consequenceframe}

\begin{proof}[Proof of \Cref{thm:usu-all}~(D3): Constant Conditioning Strength]
Verified in \Cref{app:potential-analysis}: $\vphi(s{+}\Delta)/\vphi(s)=\exp(\Delta/\varepsilon)$, independent of~$s$.
\end{proof}

\begin{synthesisframe}{}
USU's normalized weights resolve the three structural failures of interpolation in exact correspondence: weight normalization eliminates mass leakage~(D1), potential monotonicity incorporates score semantics~(D2), and neighbourhood-local computation prevents non-local contamination~(D4). The tensor potential then selects uniform sensitivity across the score range~(D3). Together, the four properties confirm that USU is a complete remedy for the interpolation paradigm's incompatibilities with faithful attribution upsampling.
\end{synthesisframe}

\section{Extended Theoretical Context}
\label{app:extended-theory}

The main paper introduces USU's desiderata and derives the ratio form; here we situate this contribution within the broader landscape of axiomatic attribution theory, resolution gaps in existing methods, and the choice-theoretic functional equations that underpin the uniqueness argument.

\subsection{Axiomatic Attribution Theory}
\label{app:axiomatic-context}

USU extends the axiomatic tradition of feature attribution to the upsampling domain. In the Shapley framework, the \emph{efficiency} axiom requires that player contributions sum to the coalition value~\cite{442e10a3c6640ded9408622005e3c2a8906ce4c2}; in Integrated Gradients, \emph{completeness} requires that attributions sum to the prediction difference~\cite{f302e136c41db5de1d624412f68c9174cf7ae8be}. Desideratum (D1), neighbourhood completeness, is the spatial analogue: pixel attributions within each neighbourhood must sum to the neighbourhood mass. While Shapley efficiency and IG completeness operate on feature-level or path-integral attributions, (D1) operates on spatial redistribution, ensuring that the upsampling step does not introduce or destroy attribution mass.

Domain-specific axiomatization is required by the impossibility results that follow. Bilodeau~\etal~\cite{8102cad29ec8e808c7395ac6ee668da495f07206} establish impossibility results showing that no single attribution method can simultaneously satisfy all desirable properties in full generality; different domains require tailored axiom sets. USU addresses this by restricting scope to upsampling, a well-defined subproblem where a complete, satisfiable axiomatization is achievable.

\subsection{Resolution Gaps in Attribution Methods}
\label{app:resolution-gaps}

The resolution gap between model reasoning and explanation granularity arises from distinct architectural sources. For CAM-family methods, attributions are defined on the final convolutional feature map (typically $7{\times}7$ or $14{\times}14$) and upsampled to input resolution. For Vision Transformers, patch tokenization (typically $16{\times}16$) creates attributions at patch granularity. Perturbation-based methods (RISE, SHAP) operate on coarse masks or superpixels, inheriting their resolution.

All existing refinement strategies for closing this gap, including multi-layer CAM fusion (LayerCAM, CAMERAS), augmentation-based approaches (Augmented Grad-CAM), and propagation schemes, are heuristic: none formally characterizes what faithful upsampling requires. A multi-layer fusion approach may sharpen boundaries but cannot state, let alone guarantee, the properties an upsampling operator should satisfy. Crucially, these strategies are integral components of specific XAI methods, not standalone operators. USU fills this gap on two levels: it provides the first axiomatic framework defining what faithful attribution upsampling means (conservation, monotonicity, calibration, locality), and it delivers a method-agnostic operator, composable with any coarse attribution method, that provably satisfies all four desiderata, verified to $10^{-7}$ numerical precision (\Cref{tab:desiderata} in the main paper).

\subsection{Choice Theory and Functional Equations}
\label{app:choice-theory}

The uniqueness proof (\Cref{thm:uniqueness}) draws on a deep connection to discrete choice theory that deserves explicit elaboration. Luce's Choice Axiom~\cite{f8222304ca201f8d524b3aa270673023334e7ed1} states that the relative probability of choosing alternative $a$ over $b$ is independent of what other alternatives are available, precisely the IIA (Independence of Irrelevant Alternatives) property that (D4) locality induces in USU's weight ratios. McFadden's conditional logit~\cite{ea84a6ef34223f4f0d8b64555a6b6cec312b8fce} shows that IIA combined with regularity conditions yields the multinomial logit form $P(a \mid B) \propto \exp(\theta \cdot a)$. Breitmoser~\cite{64cce5f2fb89e06be57b32294df1078dd8227bad} establishes that IIA, positivity, and translation invariance on attributes are necessary and sufficient for conditional logit, which is precisely USU's ratio form with the tensor potential. The additional constant-conditioning requirement (D3) then selects the exponential potential via the additive Cauchy functional equation, paralleling how translation invariance selects the exponential in choice theory.

USU's redistribution problem \emph{is} a discrete choice problem. Each pixel in a neighbourhood ``competes'' for attribution mass based on its segment score, and the desiderata encode the same rationality conditions (consistency via IIA/locality, efficiency via conservation, and sensitivity via monotonicity) that characterize rational stochastic choice~\cite{251646ea79c2a3973311852975f70a64878fffec,f10afb5d088114389a71148d39aab0c57820b02e}.

\begin{figure}[t]
  \centering
  \includegraphics[width=\linewidth]{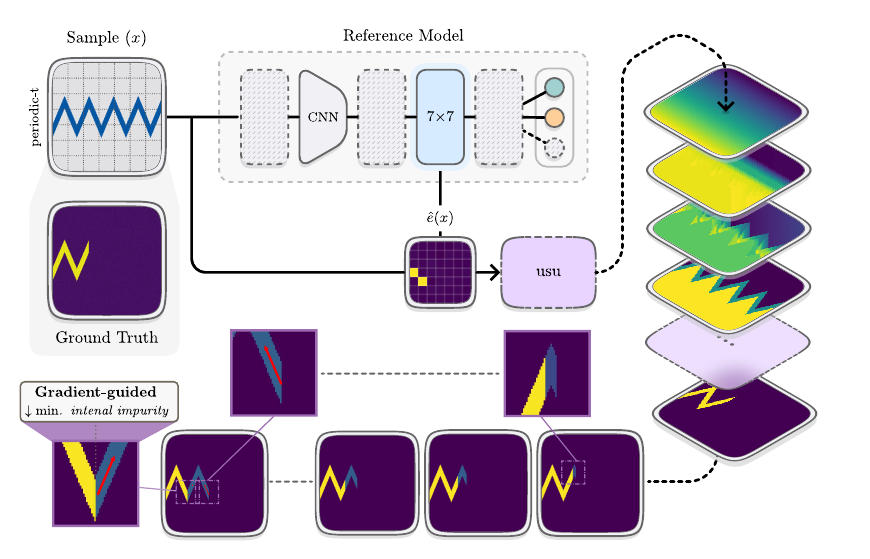}
  \caption{\textbf{Hierarchical boundary refinement.} At each depth level, the score matrix $\Phi^d$ captures per-segment importance, the H-map detects score transitions at segment boundaries, and the adaptive merge combines coarse and fine attributions. Only boundary segments (high $|H_{\mathrm{map}}|$) are refined further, analogous to splitting impure nodes in a decision tree (\Cref{tab:tree-analogy}).}
  \label{fig:hierarchical-refinement}
\end{figure}

\section{Emergent Boundaries from Score Heterogeneity}
\label{app:depth-fusion}

\Cref{app:deferred-proofs} assumed a fixed segmentation partition. In practice, no single segmentation scale suffices: coarse partitions group semantically distinct regions, while fine partitions fragment coherent ones. This section formalizes how score heterogeneity itself selects boundary resolution, illustrated in \Cref{fig:hierarchical-refinement}.

The ratio form (\Cref{sec:uniqueness}) determines how mass is allocated within each neighbourhood, but the segmentation partition $\cS$ has so far been assumed, not derived. For faithful explanations, boundaries must emerge from the model's own reasoning, refined where scores transition and left coarse where scores are homogeneous. No single segmentation scale achieves this: a coarse partition groups semantically distinct regions; a fine partition fragments coherent ones. The key insight is to let \emph{score heterogeneity itself} select boundary resolution. We now formalize the refinement pipeline, from geometric primitives through adaptive boundary selection to the capstone guarantee: total attribution mass is preserved through the full depth recursion.

\begin{definition}[Hierarchical Segmentation]
\label{def:hier-seg}
A \emph{hierarchical segmentation} of depth $D$ is a family $\{\cS^0,\ldots,\cS^{D-1}\}$ of partitions of $\Omega$ with $P_0 \leq \cdots \leq P_{D-1}$ segments, satisfying the refinement predicate: every segment at depth $d{+}1$ is contained in some segment at depth $d$. Refinement is reflexive and transitive.
\end{definition}

Given depth-indexed scores $s^d\colon\{0,\ldots,P_d{-}1\}\to[0,1]$, the \emph{score matrix} at depth $d$ maps each pixel to the score of its containing segment:
\begin{equation}
  \Phi^d(x) = s^d(\sidx^d(x)).
  \label{eq:score-matrix}
\end{equation}

\paragraph{Detecting importance transitions.}
Convolving $\Phi^d$ with a $3{\times}3$ diagonal difference kernel $K_{\mathrm{DD}}$ yields the \emph{H-map}:
\begin{equation}
  H_{\mathrm{map}}(x) = (\Phi^d * K_{\mathrm{DD}})(x),
  \label{eq:hmap}
\end{equation}
measuring local score heterogeneity. When $\Phi^d$ is constant in the $3{\times}3$ neighbourhood of an interior pixel, $H_{\mathrm{map}}(x)=0$: no finer boundary is needed. Segments whose maximum $|H_{\mathrm{map}}|$ exceeds threshold $\theta>0$ are \emph{boundary segments} and are refined at the next depth; homogeneous segments are retained. Lowering $\theta$ weakly increases the boundary set (monotonicity; \Cref{prop:boundary-mono}).

\paragraph{Adaptive depth fusion.}
A sigmoid-based mixing coefficient $\alpha(x)$, monotonically decreasing in $|H_{\mathrm{map}}(x)|$, blends coarse and fine depths: near boundaries ($|H_{\mathrm{map}}| \gg \mu$), fine scores dominate; away from them, coarse scores are retained. The convex combination preserves $[0,1]$ boundedness. Recursion terminates when successive score matrices converge in Frobenius norm. Since each recursion step applies USU (\Cref{thm:usu-all}), neighbourhood completeness propagates inductively and the full pipeline preserves total attribution mass (\Cref{thm:mass-conservation-recursion}).

\begin{table*}[t]
\centering
\caption{\textbf{Decision-tree analogy.} Each stage of USU's hierarchical refinement mirrors a decision-tree operation, with a corresponding formal guarantee verified in this supplement.}
\label{tab:tree-analogy}
\small
\newcolumntype{R}{>{\raggedright\arraybackslash}X}
\begin{tabularx}{\textwidth}{@{}l@{\hspace{8pt}}R@{\hspace{8pt}}R@{}}
\toprule
\textbf{Decision Tree} & \textbf{USU Refinement} & \textbf{Formal Guarantee} \\
\midrule
Root node       & Coarsest partition $\cS^0$ over $\Omega$ & Partitions pixel grid; refinement is reflexive and transitive (\Cref{def:hier-seg}) \\
\addlinespace[5pt]
Node impurity   & Score heterogeneity $\max_{x \in \sigma_p}|H_{\mathrm{map}}(x)|$ & Linear in $\Phi^d$; zero on constant regions (\Cref{prop:hmap-linear,prop:hmap-zero}) \\
\addlinespace[5pt]
Split criterion & Boundary threshold $|H_{\mathrm{map}}| > \theta$ & Monotone in $\theta$: lowering $\theta$ weakly increases boundary set (\Cref{prop:boundary-mono}) \\
\addlinespace[5pt]
Child nodes     & Finer sub-segments $\sigma_q^{d+1} \subseteq \sigma_p^d$ & Refinement predicate: every child contained in parent (\Cref{def:hier-seg}) \\
\addlinespace[3pt]
\midrule
\addlinespace[3pt]
Leaf value      & Merged score $\Phi_{\mathrm{merged}}(x)$ via adaptive $\alpha(x)$ & Convex combination; bounded in $[0,1]$; identity on equal inputs (\Cref{prop:merge-convex,prop:merge-bounded,prop:merge-identity}) \\
\addlinespace[5pt]
Stopping rule   & $\|\Phi^d {-} \Phi^{d-1}\|_F \leq \mathcal{L}$ or $d = D$ & Symmetric and self-terminating (\Cref{prop:comparator-symm,prop:comparator-self}) \\
\addlinespace[5pt]
Prediction      & Fused attribution $\tA(x)$ & Total mass conserved through full recursion (\Cref{thm:mass-conservation-recursion}) \\
\bottomrule
\end{tabularx}
\end{table*}

The remainder of this section formalizes each component of the hierarchical refinement pipeline. We begin with the geometric foundations for boundary detection (\Cref{app:boundary-defs}), then formalize the H-map that detects score transitions (\Cref{app:hmap-proofs}), the boundary operator that selects segments for refinement (\Cref{app:boundary-mono}), the stopping criterion (\Cref{app:comparator-props}), the adaptive mixing coefficient that blends coarse and fine depths (\Cref{app:adaptive-mixing}), the merge operator (\Cref{app:merge-proofs}), and finally the depth recursion with its mass conservation guarantee (\Cref{app:mass-conservation}).

\begin{structframe}{Detection principle: score heterogeneity as boundary signal.}
The H-map, obtained by convolving the score matrix with a diagonal difference kernel, drives both boundary detection and adaptive depth mixing. Where scores are locally constant, the H-map vanishes (\Cref{prop:hmap-zero}) and no finer resolution is needed; at segment boundaries with differing scores, the H-map registers the transition and triggers refinement at the next depth. This dichotomy propagates through the pipeline: it determines which segments split, how mixing coefficients weight coarse against fine depths, and ultimately when refinement converges.
\end{structframe}

\subsection{Boundary Definitions}
\label{app:boundary-defs}

The geometric primitives below formalize what it means for segments to share a boundary, analogous to defining adjacency between nodes in the decision-tree analogy (\Cref{tab:tree-analogy}).

\begin{definition}[Pixel Adjacency]
\label{def:adjacency}
Pixels $x,y \in \Omega$ are \emph{adjacent} iff $\|x - y\|_\infty = 1$, where $\|x-y\|_\infty = \max(|x_1-y_1|, |x_2-y_2|)$ (8-connectivity). Adjacency is symmetric and irreflexive.
\end{definition}

\begin{definition}[Segment Boundary]
\label{def:seg-boundary}
The \emph{boundary} of segment $\sigma_p$ is $\partial\sigma_p = \{x \in \sigma_p : \exists\, y \notin \sigma_p,\, \|x-y\|_\infty = 1\}$. The \emph{interior} is $\sigma_p \setminus \partial\sigma_p$; boundary and interior partition the segment: $\sigma_p = \partial\sigma_p \cup (\sigma_p \setminus \partial\sigma_p)$.
\end{definition}

\begin{definition}[Inter-Segment Boundary]
\label{def:inter-seg-boundary}
The \emph{inter-segment boundary} between $\sigma_p$ and $\sigma_q$ is $\partial(\sigma_p, \sigma_q) = \{(x,y) : x \in \sigma_p,\, y \in \sigma_q,\, \|x-y\|_\infty = 1\}$. This relation is symmetric: $(x,y) \in \partial(\sigma_p, \sigma_q)$ iff $(y,x) \in \partial(\sigma_q, \sigma_p)$.
\end{definition}

These geometric primitives formalize the spatial relations that the H-map probes: boundary detection reduces to testing whether adjacent pixels carry different segment scores, a relation fully characterized by pixel adjacency and segment membership.

\subsection{H-Map Properties}
\label{app:hmap-proofs}

The H-map plays the role of the impurity measure in the decision-tree analogy: it quantifies local score heterogeneity, determining which segments require finer partitioning.

\begin{proposition}[Linearity]
\label{prop:hmap-linear}
The H-map is linear: $H_{\mathrm{map}}(\Phi_1 + \Phi_2) = H_{\mathrm{map}}(\Phi_1) + H_{\mathrm{map}}(\Phi_2)$ and $H_{\mathrm{map}}(r\cdot\Phi) = r\cdot H_{\mathrm{map}}(\Phi)$ for $r \in \mathbb{R}$.
\end{proposition}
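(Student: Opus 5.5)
The plan is to obtain both identities directly from the definition of the H-map as a convolution with a fixed kernel, $H_{\mathrm{map}}(\Phi)(x) = (\Phi * K_{\mathrm{DD}})(x)$ from \eqref{eq:hmap}. The kernel $K_{\mathrm{DD}}$ is a fixed $3{\times}3$ array of real coefficients that does not depend on $\Phi$. I will therefore write the convolution out pointwise as a finite weighted sum,
\[
H_{\mathrm{map}}(\Phi)(x) = \sum_{u \in \{-1,0,1\}^2} K_{\mathrm{DD}}(u)\, \Phi(x - u),
\]
and check additivity and homogeneity separately at each pixel $x \in \Omega$.

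For additivity, I substitute $\Phi_1 + \Phi_2$ for $\Phi$. The pointwise definition of the sum gives $(\Phi_1+\Phi_2)(x-u) = \Phi_1(x-u) + \Phi_2(x-u)$. Distributing the kernel coefficient and splitting the finite sum yields $H_{\mathrm{map}}(\Phi_1)(x) + H_{\mathrm{map}}(\Phi_2)(x)$. For homogeneity, I substitute $r\Phi$ and pull the scalar $r$ out of each of the nine terms. Since both identities hold at every $x$, they hold as equalities of maps on $\Omega$.

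The only step that needs care is the boundary convention. For pixels $x$ near the edge of $\Omega$, the index $x-u$ can fall outside the lattice, so I must fix how $\Phi$ is extended there. I would argue that every standard convention preserves linearity. Zero padding is a linear extension. Replicate and reflect padding set the extended value equal to $\Phi$ at some lattice pixel $\pi(x-u)$, where the map $\pi$ does not depend on $\Phi$. In each case the padded field is a linear function of $\Phi$, so the sum above remains a fixed linear combination of values of $\Phi$ and the same argument applies verbatim.

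One caveat concerns the domain of the operator. Scores are assumed to lie in $[0,1]$, but linearity is a statement about the H-map viewed as an operator on all real-valued fields $\Phi\colon\Omega\to\mathbb{R}$. This is consistent with the proposition's quantifier $r\in\mathbb{R}$, so I would state that domain explicitly.
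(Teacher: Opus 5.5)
Your proposal is correct and follows the same route as the paper, whose proof is the one-line appeal to linearity of 2D convolution, $(f+g)*K_{\mathrm{DD}} = f*K_{\mathrm{DD}} + g*K_{\mathrm{DD}}$ and $(rf)*K_{\mathrm{DD}} = r(f*K_{\mathrm{DD}})$. Your pointwise expansion, your check that zero, replicate, or reflect padding keeps the padded field linear in $\Phi$, and your note that the operator must act on all real-valued fields rather than $[0,1]$-valued ones are careful additions the paper leaves implicit (padding with a nonzero constant would make the map only affine, so restricting to those conventions is the right condition).
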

\begin{proof}
Linearity of 2D convolution: $(f+g)*K = f*K + g*K$ and $(rf)*K = r(f*K)$.
\end{proof}

\begin{proposition}[Zero on Constant Regions]
\label{prop:hmap-zero}
For an interior pixel $x$ (all corners of its $3{\times}3$ window lie in $\Omega$), if $\Phi^d$ is constant on the $3{\times}3$ neighbourhood, then $H_{\mathrm{map}}(x) = 0$.
\end{proposition}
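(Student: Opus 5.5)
The plan is a direct computation that uses only one structural fact about $K_{\mathrm{DD}}$: it is a \emph{difference} kernel, so its nine entries sum to zero. The paper does not list the entries explicitly. I will therefore begin by fixing the property that a diagonal difference kernel must have, namely $\sum_{(i,j)\in\{-1,0,1\}^2} K_{\mathrm{DD}}(i,j) = 0$. For example, a kernel that subtracts opposite diagonal corners, such as entries $+1$ at $(-1,-1)$ and $-1$ at $(1,1)$, possibly combined with the other diagonal, has zero total weight. If the paper's concrete kernel is spelled out later in the supplement, I will check this zero-sum property against it directly.

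Next I write the convolution at an interior pixel $x$ explicitly:
\[
H_{\mathrm{map}}(x) = \sum_{(i,j)\in\{-1,0,1\}^2} K_{\mathrm{DD}}(i,j)\,\Phi^d(x - (i,j)).
\]
The interiority hypothesis ensures that every argument $x-(i,j)$ lies in $\Omega$. As a result, no padding convention enters the computation, and the sum involves only values of $\Phi^d$ on the $3{\times}3$ window. Whether the operation is convolution or correlation (that is, whether the kernel is flipped) does not matter, since flipping permutes the kernel entries but leaves their sum unchanged.

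By hypothesis, $\Phi^d(x-(i,j)) = c$ for all nine offsets. Factoring out the constant gives
\[
H_{\mathrm{map}}(x) = c\sum_{i,j} K_{\mathrm{DD}}(i,j) = c\cdot 0 = 0.
\]
An equivalent route goes through \Cref{prop:hmap-linear}: on the window, $\Phi^d$ agrees with $c\cdot\mathbf{1}$, and $H_{\mathrm{map}}(c\cdot\mathbf{1})(x) = c\,H_{\mathrm{map}}(\mathbf{1})(x) = 0$.

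The only real obstacle is the first step. The result depends entirely on the kernel having zero total weight, and the statement gives no explicit definition of $K_{\mathrm{DD}}$. I would handle this by stating the zero-sum property as the defining feature of a difference kernel, and by checking it against whatever concrete matrix the paper uses. All remaining steps are mechanical.
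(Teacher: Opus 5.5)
Your proposal is correct and is essentially the paper's proof: factor out the constant $c$ and use that the entries of $K_{\mathrm{DD}}$ sum to zero. The paper's computation shows the kernel has corner entries $-1, +1, +1, -1$ and zeros elsewhere, so the zero-sum property you flagged as the only obstacle holds by direct inspection.
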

\begin{proof}
If $\Phi^d(y) = c$ for all $y$ with $\|x-y\|_\infty \leq 1$, then $H_{\mathrm{map}}(x) = c\sum_{i,j}K_{\mathrm{DD}}[i,j] = c(-1+0+1+0+0+0+1+0-1) = 0$.
\end{proof}

Together, linearity and the zero-on-constant property establish the H-map as a selective heterogeneity detector: it responds to genuine score transitions, not to uniform shifts in the score field. This selectivity is what allows the boundary operator to distinguish segments that genuinely require refinement from those that are merely high- or low-scored.

\begin{consequenceframe}{From detection to adaptive partitioning.}
The H-map provides a continuous heterogeneity signal at every pixel. Two discrete decisions remain: which segments to refine (the boundary operator's threshold test) and when to stop refining (the comparator's convergence criterion). Both decisions inherit the H-map's selectivity: only genuine score transitions trigger further partitioning.
\end{consequenceframe}

\subsection{Boundary Operator}
\label{app:boundary-mono}

The boundary operator implements the split criterion: it partitions segments into those requiring refinement (high impurity) and those retained at the current depth (homogeneous).

\begin{definition}[Boundary Operator]
\label{def:boundary-op}
A segment $\sigma_p^d$ is a \emph{boundary segment} iff $\max_{x \in \sigma_p^d} |H_{\mathrm{map}}(x)| > \theta$ for threshold $\theta > 0$. The boundary set is $\mathcal{B}(H_{\mathrm{map}}) = \{\sigma_p^d : \max_{x \in \sigma_p^d} |H_{\mathrm{map}}(x)| > \theta\}$. Non-boundary segments are retained at the current depth; boundary segments are refined at depth $d{+}1$.
\end{definition}

\begin{proposition}[Monotonicity in Threshold]
\label{prop:boundary-mono}
For thresholds $\theta_1 \leq \theta_2$, $\mathcal{B}_{\theta_2}(H_{\mathrm{map}}) \subseteq \mathcal{B}_{\theta_1}(H_{\mathrm{map}})$.
\end{proposition}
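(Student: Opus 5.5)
The plan is a direct element-chasing argument at the level of segments, using only the definition of the boundary set in \Cref{def:boundary-op}. Fix the H-map $H_{\mathrm{map}}$ (it does not depend on the threshold) and write $h_p = \max_{x\in\sigma_p^d}|H_{\mathrm{map}}(x)|$ for the impurity of segment $\sigma_p^d$. Since $\Omega$ is finite, each segment is a finite nonempty set, so $h_p$ is a well-defined real number. With this notation, $\mathcal{B}_\theta(H_{\mathrm{map}}) = \{\sigma_p^d : h_p > \theta\}$, a superlevel set of the map $p\mapsto h_p$ taken over the same finite index set of segments for every $\theta$.

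The inclusion then reduces to transitivity of the order on $\mathbb{R}$. Take any $\sigma_p^d \in \mathcal{B}_{\theta_2}(H_{\mathrm{map}})$, so $h_p > \theta_2$. Because $\theta_2 \ge \theta_1$, we get $h_p > \theta_2 \ge \theta_1$, hence $h_p > \theta_1$. Therefore $\sigma_p^d \in \mathcal{B}_{\theta_1}(H_{\mathrm{map}})$, and since $\sigma_p^d$ was arbitrary, $\mathcal{B}_{\theta_2}(H_{\mathrm{map}})\subseteq\mathcal{B}_{\theta_1}(H_{\mathrm{map}})$. The "lowering $\theta$ weakly increases the boundary set" phrasing in the main text is exactly this statement read with $\theta_1$ as the lowered threshold.

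There is no genuine obstacle here. The only points worth checking are that the candidate segment set and $H_{\mathrm{map}}$ are the same on both sides, which holds because neither depends on $\theta$, and that the maximum exists, which holds by finiteness of $\Omega$. It is also worth noting that the inclusion can be strict, for instance when some $h_p\in(\theta_1,\theta_2]$. That explains the word "weakly" but is not needed for the proof.
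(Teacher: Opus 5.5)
Your proof is correct and takes the same route as the paper: take a segment in $\mathcal{B}_{\theta_2}$, chain $\max_{x\in\sigma_p}|H_{\mathrm{map}}(x)| > \theta_2 \geq \theta_1$, and conclude that the segment lies in $\mathcal{B}_{\theta_1}$. Your extra checks (that $H_{\mathrm{map}}$ and the segment family do not depend on $\theta$, and that the maximum exists because $\Omega$ is finite) are sound details the paper leaves implicit.
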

\begin{proof}
If $\max_{x \in \sigma_p}|H_{\mathrm{map}}(x)| > \theta_2 \geq \theta_1$, then $\sigma_p \in \mathcal{B}_{\theta_1}$.
\end{proof}

Monotonicity in threshold gives practitioners a single interpretable control: lowering $\theta$ weakly increases the boundary set, producing finer partitions at the cost of additional computation. As $\theta \to 0^+$, every segment with nonzero score variation is refined; as $\theta \to \infty$, no segment is refined and the coarsest partition is retained.

\subsection{Comparator Properties}
\label{app:comparator-props}

The stopping rule determines when further refinement yields diminishing returns, analogous to pruning in decision trees.

\paragraph{Stopping criterion.}
Recursion terminates when score matrices stabilize. The \emph{global comparator} $C(\Phi^d, \Phi^{d-1}) = \mathbf{1}[\|\Phi^d - \Phi^{d-1}\|_F > \mathcal{L}]$ returns False when refinement has converged ($\mathcal{L}>0$).

\begin{proposition}
\label{prop:comparator-symm}
The global comparator is symmetric: $C(\Phi^d, \Phi^{d-1}) = C(\Phi^{d-1}, \Phi^d)$.
\end{proposition}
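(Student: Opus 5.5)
The comparator is $C(\Phi^d,\Phi^{d-1}) = \mathbf{1}\bigl[\|\Phi^d-\Phi^{d-1}\|_F > \mathcal{L}\bigr]$. Its value depends on the two score matrices only through the scalar $\|\Phi^d-\Phi^{d-1}\|_F$. So it suffices to show that this scalar is unchanged when the two arguments are swapped; the indicator of the same inequality against the same threshold $\mathcal{L}$ then returns the same Boolean.

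The key step is the identity $\|\Phi^{d-1}-\Phi^d\|_F = \|\Phi^d-\Phi^{d-1}\|_F$. This follows from absolute homogeneity of any norm, $\|rX\|_F = |r|\,\|X\|_F$, applied with $r=-1$ and $X=\Phi^d-\Phi^{d-1}$, since $\Phi^{d-1}-\Phi^d = -(\Phi^d-\Phi^{d-1})$. If one prefers to work directly from the definition, note that $\|X\|_F^2 = \sum_{x\in\Omega} X(x)^2$ and $(-X(x))^2 = X(x)^2$ pointwise, which gives the same conclusion.

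It remains to check that both score matrices are defined on the same lattice $\Omega$, so the difference is well defined in either order. This holds because every $\Phi^d$ is defined on $\Omega$ by \eqref{eq:score-matrix}. Substituting the equal norms into the indicator completes the argument.

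I do not expect a genuine obstacle here; the only point requiring care is that $\mathcal{L}$ is a fixed threshold, not a quantity indexed by the ordered pair $(d,d-1)$.
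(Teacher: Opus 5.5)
Your proposal is correct and follows the same route as the paper's proof, which likewise reduces symmetry of the comparator to $\|\Phi^d-\Phi^{d-1}\|_F=\|\Phi^{d-1}-\Phi^d\|_F$ via $\|{-}v\|=\|v\|$. Your remarks that both matrices live on $\Omega$ and that the threshold $\mathcal{L}$ is fixed are the only additions, and they are already implicit in the comparator's definition.
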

\begin{proof}
$\|\Phi^d - \Phi^{d-1}\|_F = \|\Phi^{d-1} - \Phi^d\|_F$ by $\|{-}v\| = \|v\|$.
\end{proof}

\begin{proposition}
\label{prop:comparator-self}
$C(\Phi, \Phi) = \mathrm{False}$ for any score matrix $\Phi$.
\end{proposition}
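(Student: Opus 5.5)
The plan is to unfold the definition of the global comparator and use only that a norm vanishes at zero. By the stopping criterion, $C(\Phi^d,\Phi^{d-1}) = \mathbf{1}[\|\Phi^d-\Phi^{d-1}\|_F > \mathcal{L}]$, where $\mathcal{L}>0$. Substituting $\Phi^d = \Phi^{d-1} = \Phi$ gives
\[
C(\Phi,\Phi) = \mathbf{1}\bigl[\|\Phi-\Phi\|_F > \mathcal{L}\bigr].
\]

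The single substantive step is to evaluate the argument. Since $\Phi - \Phi$ is the zero matrix on $\Omega$, and the Frobenius norm is the square root of the sum of squared entries, we get $\|\Phi-\Phi\|_F = 0$. Any norm has this property by definiteness (or homogeneity with scalar $0$), so nothing depends on the particular choice of norm.

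The conclusion then follows from the strict positivity of the threshold. Because $\mathcal{L}>0$, the inequality $0 > \mathcal{L}$ fails, so the indicator equals $0$, which is the value False. Hence $C(\Phi,\Phi)=\mathrm{False}$ for every score matrix $\Phi$.

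There is essentially no obstacle. The only point needing care is that the strict inequality in the indicator, together with $\mathcal{L}>0$, is what is used. With $\mathcal{L}=0$ and a non-strict comparison the claim would fail, so I will state explicitly that the hypothesis $\mathcal{L}>0$ from the definition of the comparator is invoked.
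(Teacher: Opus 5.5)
Your proposal is correct and matches the paper's proof: unfold the comparator, observe $\|\Phi-\Phi\|_F = 0$, and conclude that the strict inequality $0 > \mathcal{L}$ fails because $\mathcal{L}>0$, so the indicator returns False. One minor aside: since the comparison is strict, $\mathcal{L}\geq 0$ would already suffice, so the positivity of $\mathcal{L}$ is used here but is not essential.
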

\begin{proof}
$\|\Phi - \Phi\|_F = 0 \not> \mathcal{L}$ since $\mathcal{L} > 0$.
\end{proof}

Symmetry ensures the stopping criterion is direction-independent, while self-termination guarantees that recursion halts once the score matrix stabilizes, providing a natural convergence condition without requiring an external depth budget.

\subsection{Adaptive Mixing Coefficient}
\label{app:adaptive-mixing}

Rather than a hard split between coarse and fine depths, the adaptive mixing coefficient provides a smooth transition controlled by local boundary strength.

The \emph{adaptive mixing coefficient}
\begin{equation}
  \alpha(x) = \sigma\!\left(\frac{-(|H_{\mathrm{map}}(x)| - \mu)}{\tau}\right),
  \label{eq:adaptive-alpha}
\end{equation}
with centering $\mu$ and temperature $\tau > 0$, satisfies:
\begin{enumerate}
\item $\alpha(x) \in (0,1)$ for all $x$ (sigmoid range on $\mathbb{R}$).
\item $\alpha$ is strictly decreasing in $|H_{\mathrm{map}}(x)|$: higher boundary strength yields lower $\alpha$ (favouring finer depth).
\item When $|H_{\mathrm{map}}(x)| = \mu$: $\alpha(x) = 1/2$ (equal weighting).
\item When $|H_{\mathrm{map}}(x)| > \mu + 5\tau$: $\alpha(x) < 0.01$ (fine depth dominates).
\item When $|H_{\mathrm{map}}(x)| < \mu - 5\tau$: $\alpha(x) > 0.99$ (coarse depth dominates).
\end{enumerate}

\subsection{Merge Operator}
\label{app:merge-proofs}

The merge operator combines the leaf values from adjacent depth levels, producing the fused score matrix that drives the final attribution.

The merged score matrix blends coarse and fine depth levels:
\begin{equation}
  \Phi_{\mathrm{merged}}(x) = \alpha(x)\cdot\Phi^{d-1}(x) + (1-\alpha(x))\cdot\Phi^d(x).
  \label{eq:merge-op}
\end{equation}
Near boundaries ($|H_{\mathrm{map}}| \gg \mu$), $\alpha \approx 0$ and finer scores dominate; away from boundaries ($|H_{\mathrm{map}}| \ll \mu$), $\alpha \approx 1$ and coarser scores are retained.

\begin{proposition}[Convex Combination]
\label{prop:merge-convex}
Since $\alpha(x) \in (0,1)$ and $1-\alpha(x) \in (0,1)$ with $\alpha(x) + (1-\alpha(x)) = 1$, the merge $\Phi_{\mathrm{merged}}(x) = \alpha(x)\Phi^{d-1}(x) + (1-\alpha(x))\Phi^d(x)$ is a convex combination.
\end{proposition}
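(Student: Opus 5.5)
The claim is an immediate consequence of the sigmoid range established in \Cref{app:adaptive-mixing}, so the plan is to verify directly the two conditions that define a convex combination of the two values $\Phi^{d-1}(x)$ and $\Phi^d(x)$: nonnegative coefficients and coefficients summing to one. The argument is pointwise in $x$, and the only input needed is that the coefficient $\alpha(x)$ defined in \eqref{eq:adaptive-alpha} lies in $(0,1)$.

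First, I fix an arbitrary pixel $x\in\Omega$. By item~1 of the adaptive mixing properties, $\alpha(x)=\sigma(z)$ with $z=-(|H_{\mathrm{map}}(x)|-\mu)/\tau\in\mathbb{R}$. This is well defined because $\tau>0$ and $H_{\mathrm{map}}(x)$ is a finite real number, being a finite convolution of the bounded score matrix. The logistic function $\sigma(z)=1/(1+e^{-z})$ takes values strictly in $(0,1)$ on $\mathbb{R}$, since $e^{-z}\in(0,\infty)$. Hence $\alpha(x)\in(0,1)$.

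Second, from $0<\alpha(x)<1$ I deduce $0<1-\alpha(x)<1$ by subtracting from $1$. The identity $\alpha(x)+(1-\alpha(x))=1$ is trivial. Both coefficients are therefore nonnegative, in fact strictly positive, and they sum to one. So $\Phi_{\mathrm{merged}}(x)$ in \eqref{eq:merge-op} is by definition a convex combination of $\Phi^{d-1}(x)$ and $\Phi^d(x)$, and in fact a strict one, lying in the closed segment between the two values.

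I expect no real obstacle here. The only point needing care is to use the open-range property of the sigmoid rather than merely $[0,1]$, since the statement asserts strict membership in $(0,1)$. I would also remark that this pointwise convexity is what the subsequent boundedness claim (\Cref{prop:merge-bounded}) will use: a convex combination of two values in $[0,1]$ stays in $[0,1]$.
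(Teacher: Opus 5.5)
Your proposal is correct and follows the paper's own justification, which is written into the statement itself: $\alpha(x)\in(0,1)$ from the sigmoid range (item~1 of the adaptive mixing properties), hence $1-\alpha(x)\in(0,1)$, and the two weights sum to one. Your explicit derivation of the open range from $\sigma(z)=1/(1+e^{-z})$ with $\tau>0$ just spells out what the paper cites.
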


\begin{proposition}[Boundedness]
\label{prop:merge-bounded}
If $\Phi^{d-1}(x), \Phi^d(x) \in [0,1]$ for all $x$, then $\Phi_{\mathrm{merged}}(x) \in [0,1]$.
\end{proposition}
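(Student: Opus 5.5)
The plan is to show pointwise that $\Phi_{\mathrm{merged}}(x)$ is a convex combination of two numbers in $[0,1]$, and then use the fact that $[0,1]$ is convex. Fix an arbitrary pixel $x\in\Omega$ and work entirely at that pixel. The merge \eqref{eq:merge-op} has no spatial coupling beyond the already-computed $\alpha(x)$, so the claim is purely pointwise.

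First I invoke \Cref{prop:merge-convex}, together with property~1 of the adaptive mixing coefficient \eqref{eq:adaptive-alpha}. These give $\alpha(x)\in(0,1)$, so both weights $\alpha(x)$ and $1-\alpha(x)$ are nonnegative and they sum to $1$.

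For the lower bound, each term $\alpha(x)\Phi^{d-1}(x)$ and $(1-\alpha(x))\Phi^d(x)$ is a product of nonnegative numbers, so $\Phi_{\mathrm{merged}}(x)\ge 0$.

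For the upper bound, I use $\Phi^{d-1}(x)\le 1$ and $\Phi^d(x)\le 1$ together with nonnegativity of the weights:
\[
\Phi_{\mathrm{merged}}(x)\le \alpha(x)\cdot 1+(1-\alpha(x))\cdot 1 = 1.
\]
Equivalently, $\Phi_{\mathrm{merged}}(x)$ lies between $\min(\Phi^{d-1}(x),\Phi^d(x))$ and $\max(\Phi^{d-1}(x),\Phi^d(x))$, and both of these are in $[0,1]$.

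There is essentially no obstacle here. The one point needing care is that the weights must be nonnegative, and not merely sum to one, since otherwise an affine combination could leave $[0,1]$. Nonnegativity is exactly what the sigmoid range in property~1 supplies. I also note that the hypothesis $\Phi^d\in[0,1]$ is justified in context, because scores $s^d$ take values in $[0,1]$ by definition \eqref{eq:score-matrix}. Consequently, induction over the depth recursion shows that every merged score matrix stays in $[0,1]$ and remains a valid input for the tensor potential.
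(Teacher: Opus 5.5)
Your proof is correct and matches the paper's argument: both bound $\Phi_{\mathrm{merged}}(x)$ above by $\alpha(x)\cdot 1 + (1-\alpha(x))\cdot 1 = 1$ and below by $0$, using the nonnegative convex weights that the sigmoid range supplies. You also state explicitly that the weights must be nonnegative, and not merely sum to one, which the paper's proof leaves implicit.
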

\begin{proof}
$\Phi_{\mathrm{merged}}(x) = \alpha\Phi^{d-1} + (1-\alpha)\Phi^d \leq \alpha + (1-\alpha) = 1$ and $\geq 0$.
\end{proof}

\begin{proposition}[Identity on Equal Inputs]
\label{prop:merge-identity}
When $\Phi^{d-1} = \Phi^d = \Phi$: $\Phi_{\mathrm{merged}}(x) = \alpha\Phi(x) + (1-\alpha)\Phi(x) = \Phi(x)$.
\end{proposition}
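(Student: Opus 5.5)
The claim is immediate from the definition of the merge operator~\eqref{eq:merge-op}: I will substitute the hypothesis $\Phi^{d-1}=\Phi^d=\Phi$ and collect terms. Because the statement is pointwise, I fix an arbitrary pixel $x\in\Omega$ and work at that pixel only. No boundedness or positivity of the scores is needed.

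\emph{Step 1.} Substituting $\Phi^{d-1}(x)=\Phi^d(x)=\Phi(x)$ into~\eqref{eq:merge-op} gives
\[
\Phi_{\mathrm{merged}}(x)=\alpha(x)\,\Phi(x)+\bigl(1-\alpha(x)\bigr)\Phi(x).
\]

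\emph{Step 2.} Factoring out $\Phi(x)$ yields $\bigl(\alpha(x)+1-\alpha(x)\bigr)\Phi(x)=\Phi(x)$. This uses only distributivity over the reals and the fact that the two mixing weights sum to one, which is already recorded in \Cref{prop:merge-convex}.

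One point deserves a remark, and it is the only possible subtlety. The coefficient $\alpha(x)$ is computed from $H_{\mathrm{map}}$ through~\eqref{eq:adaptive-alpha}, so it also depends on the score field. This does not affect the argument, because the cancellation holds for \emph{every} real value of $\alpha(x)$. It therefore holds for whatever value the sigmoid produces, including the value obtained from $\Phi$ itself. We do not even need the range property $\alpha(x)\in(0,1)$.

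Since $x$ was arbitrary, $\Phi_{\mathrm{merged}}=\Phi$ on all of $\Omega$. I expect no step here to be a genuine obstacle.
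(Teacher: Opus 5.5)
Your proposal is correct and takes the same approach as the paper, whose entire argument is the one-line substitution $\alpha\Phi(x)+(1-\alpha)\Phi(x)=\Phi(x)$ written into the statement itself. Your added remark is also correct and is left implicit in the paper: the cancellation holds for every real value of $\alpha(x)$, so it does not matter that $\alpha(x)$ depends on the score field through $H_{\mathrm{map}}$.
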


Convex combination guarantees that merged scores lie between their coarse and fine inputs. Boundedness preserves the $[0,1]$ score range through successive depth levels. Identity on equal inputs ensures that homogeneous regions, where coarse and fine partitions agree, pass through the merge unchanged. Together, these properties ensure that the merge operator adds no distortion to the score field.

\subsection{Depth Recursion and Mass Conservation}
\label{app:mass-conservation}

Each component formalized above contributes a local guarantee: the H-map responds only to genuine transitions, the boundary operator is monotone in threshold, and the merge operator preserves boundedness. The capstone result assembles these local guarantees into a global one: total attribution mass is conserved through the full depth recursion, regardless of how many levels of refinement are applied.

\begin{definition}[Depth Recursion]
\label{def:depth-recursion}
The \emph{depth state} at level $d$ is the triple $(\cS^d, s^d, A^d)$: partition, scores, and attribution. The \emph{recursion operator} $R$ advances state from depth $d$ to $d{+}1$ by computing new scores, detecting boundaries via the H-map, merging score matrices, and applying USU within the finer partition. The \emph{full pipeline} iterates $R$ from depth $0$ to $D{-}1$, yielding the fused attribution $\tA$.
\end{definition}

\begin{theorem}[Mass Conservation Through Recursion]
\label{thm:mass-conservation-recursion}
The full hierarchical pipeline preserves total attribution mass: $\sum_{x \in \Omega} \tA(x) = \sum_{x \in \Omega} A^0(x)$.
\end{theorem}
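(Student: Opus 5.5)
The plan is an induction on the depth index $d$, with invariant
\[
\textstyle\sum_{x\in\Omega} A^d(x)=\sum_{x\in\Omega} A^0(x).
\]
The base case $d=0$ is trivial. Since the full pipeline returns $\tA=A^{D'}$, where $D'\le D$ is the depth at which the comparator halts, the theorem follows once the invariant is shown to hold for every iterate. Early stopping only truncates the sequence of iterates, so it needs no separate treatment.

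For the inductive step, unfold one application of the recursion operator $R$ from \Cref{def:depth-recursion}. It has four sub-stages: compute new scores $s^{d+1}$, form the H-map, build the merged score matrix $\Phi_{\mathrm{merged}}$, and apply USU on the finer partition $\cS^{d+1}$ with mass input $A^d$.

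The key observation is that the first three sub-stages touch only scores, never mass. The H-map (\Cref{prop:hmap-linear,prop:hmap-zero}), the boundary operator (\Cref{def:boundary-op}), the mixing coefficient $\alpha$, and the merge (\Cref{prop:merge-convex,prop:merge-bounded}) all act on the score field. They determine the weights $w_k$, but not the masses $M_k$. The merge's boundedness guarantees that the merged scores remain valid inputs in $[0,1]$, so the potential $\vphi$ is defined and strictly positive. Consequently each normalizing denominator $\sum_{y\in N_k}\vphi(\cdot)$ is positive, and the weights are well-defined and sum to one.

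The mass-carrying sub-stage is the USU application. Its neighbourhood masses are $M_k^d=\sum_{x\in N_k}A^d(x)$. By \Cref{thm:usu-all}(D1), each neighbourhood's output sums to $M_k^d$, regardless of which (merged) scores were fed in. Summing over the partition $\cN$, exactly as in \Cref{cor:global}, gives $\sum_x A^{d+1}(x)=\sum_k M_k^d=\sum_x A^d(x)$. This closes the induction.

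The main obstacle is that \Cref{def:depth-recursion} is informal about what USU receives at depth $d{+}1$. Two points need care.

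\begin{itemize}
\item \emph{Which partition plays the role of $\cN$.} It may be the original coarse neighbourhoods or the depth-$d$ segments $\cS^d$, with $\cS^{d+1}$ then serving as the semantic partition. Either way it is a partition of $\Omega$, which is all that \Cref{cor:global} requires. I would state this explicitly.
\item \emph{Whether retained (non-boundary) segments bypass USU.} If they are copied unchanged, their mass is preserved trivially. USU is then applied to a sub-partition of the boundary region, and summing mass over the retained and refined pieces gives the same total.
\end{itemize}

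If Soft IWMR is used at some depth instead of USU, I would invoke \Cref{cor:iwmr-conservation} in place of (D1). That corollary still yields global conservation, so the invariant survives either variant.
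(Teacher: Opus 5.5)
Your proof is correct and follows the same route as the paper: induction on depth, with (D1) from \Cref{thm:usu-all} giving per-neighbourhood conservation and the partition property summing it to the global total. Your handling of the merge is more careful than the paper's: you note that it acts only on scores, and hence only on the weights $w_k$. The paper instead says the merge ``preserves mass through convex combination'', which would fail for a pointwise blend of attribution maps when $\alpha(x)$ varies across pixels.
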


\begin{proof}
By induction on depth $d$.

\textbf{Base case ($d=0$).} The initial attribution $A^0$ has mass $M = \sum_{x \in \Omega} A^0(x)$.

\textbf{Inductive step.} Suppose $\sum_{x \in \Omega} A^d(x) = M$. The recursion operator $R$ applies USU within each neighbourhood of the finer partition $\cS^{d+1}$. By \Cref{thm:usu-all}~(D1), neighbourhood completeness holds: $\sum_{x \in N_k} A^{d+1}(x) = M_k^{d+1}$ for each neighbourhood $k$. Since the neighbourhoods partition $\Omega$:
\[
\sum_{x \in \Omega} A^{d+1}(x) = \sum_k \sum_{x \in N_k} A^{d+1}(x) = \sum_k M_k^{d+1} = M.
\]
The merge operator preserves mass through convex combination: $\alpha(x) + (1-\alpha(x)) = 1$ ensures the weighted sum is unchanged. By induction, $\sum_{x \in \Omega} \tA(x) = M$.
\end{proof}

\begin{synthesisframe}{}
Hierarchical refinement adds no new free parameters to the ratio form; it selects resolution adaptively from the model's own score field. The pipeline preserves all four desiderata through recursion: completeness by the inductive mass conservation argument, monotonicity and calibration through the unchanged potential weights, and locality within each depth level's partition. The only new controls, the threshold~$\theta$ and tolerance~$\mathcal{L}$, govern resolution depth, not attribution semantics.
\end{synthesisframe}

\begin{figure}[t]
  \centering
  \includegraphics[width=\linewidth]{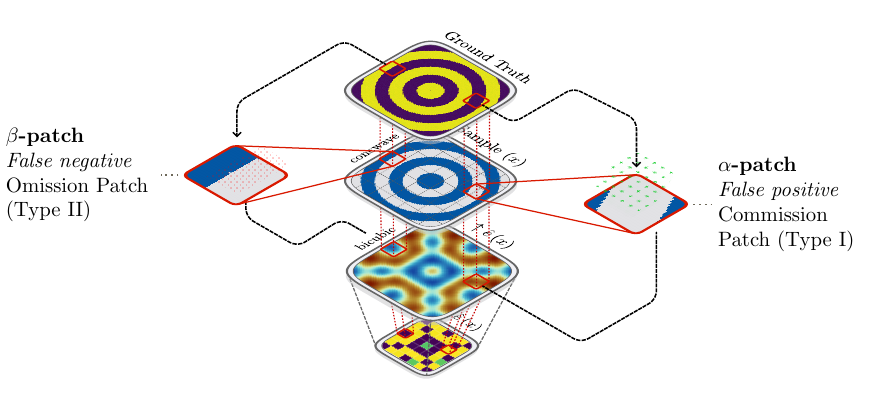}
  \caption{\textbf{Compression error types.}
    From top to bottom: ground-truth attribution, input image, upsampled saliency map, and raw low-resolution heatmap. Left column shows $\beta$-errors where true attributions disappear due to compression (signal loss). Right column shows $\alpha$-errors where spurious attributions emerge despite absence in ground truth (false signal generation). Both error types corrupt explanation fidelity.}
  \label{fig:compression-error-suppl}
\end{figure}

\section{Error Analysis}
\label{app:error-analysis}

Having formalized the theoretical guarantees (\Cref{app:deferred-proofs}) and the multi-scale mechanism that makes them practical (\Cref{app:depth-fusion}), we now analyze how these guarantees translate into concrete error reduction at the pixel level.

Given ground-truth attribution $A^*\colon\Omega\to\mathbb{R}$, the $\alpha$-error (spurious attribution) and $\beta$-error (signal loss) at pixel $x$ are defined as:
\begin{align}
\alpha(x) &= \max\bigl(0,\, |\tA(x)| - |A^*(x)|\bigr), \label{eq:alpha-error-app}\\
\beta(x) &= \max\bigl(0,\, |A^*(x)| - |\tA(x)|\bigr). \label{eq:beta-error-app}
\end{align}

\begin{structframe}{Root cause: Lossy compression.}
Compressing pixel-level attribution to coarse neighbourhood aggregates and
reconstructing it is inherently lossy. Two complementary error modes arise
from this compression: $\alpha$-errors (spurious signal) where reconstruction
overshoots ground truth, and $\beta$-errors (signal loss) where it undershoots.
These modes are mutually exclusive at each pixel, and their aggregate behaviour
at the neighbourhood level is governed by mass balance.
\end{structframe}

\begin{proposition}[Error Exclusivity]
\label{prop:error-exclusive}
At each pixel $x$, at most one of $\alpha(x)$, $\beta(x)$ is positive.
Their sum captures the total magnitude mismatch:
\begin{equation}
\alpha(x) + \beta(x) = \bigl||\tA(x)| - |A^*(x)|\bigr|.
\label{eq:error-complement-app}
\end{equation}
\end{proposition}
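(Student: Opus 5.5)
The argument is a direct case analysis on the sign of the single real quantity $d(x) = |\tA(x)| - |A^*(x)|$. By \eqref{eq:alpha-error-app} and \eqref{eq:beta-error-app}, $\alpha(x) = \max(0, d(x))$ and $\beta(x) = \max(0, -d(x))$. In other words, $\alpha(x)$ and $\beta(x)$ are the positive and negative parts of the same number $d(x)$. Both claims of the proposition are then the familiar identities for positive and negative parts of a real number.

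I will treat the three cases $d(x) > 0$, $d(x) < 0$ and $d(x) = 0$ in turn.

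\begin{itemize}
\item If $d(x) > 0$, then $\alpha(x) = d(x) > 0$ and $\beta(x) = \max(0,-d(x)) = 0$.
\item If $d(x) < 0$, then symmetrically $\beta(x) = -d(x) > 0$ and $\alpha(x) = 0$.
\item If $d(x) = 0$, both errors vanish.
\end{itemize}

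In every case at most one of the two quantities is positive, which proves the exclusivity claim.

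For the sum identity \eqref{eq:error-complement-app}, I use the same three cases. The sum $\alpha(x) + \beta(x)$ equals $d(x)$ when $d(x) > 0$, equals $-d(x)$ when $d(x) < 0$, and equals $0$ when $d(x) = 0$. In all three cases this is $|d(x)| = \bigl||\tA(x)| - |A^*(x)|\bigr|$. Equivalently, one can invoke the identity $\max(0,t) + \max(0,-t) = |t|$ for every real $t$, applied with $t = d(x)$.

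There is no real obstacle here. The only point to check is that "at most one is positive" is the correct formulation rather than "exactly one": when $|\tA(x)| = |A^*(x)|$ both errors are zero. This refines the main text's phrase "exactly one can be nonzero" in the degenerate case. The argument holds for arbitrary real-valued $\tA$ and $A^*$, since it uses only their absolute values, and it does not depend on any property of USU.
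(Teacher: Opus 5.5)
Your proof is correct and follows the paper's argument: both set $\delta = |\tA(x)| - |A^*(x)|$, recognize $\alpha(x)$ and $\beta(x)$ as its positive and negative parts, and read off exclusivity and $\alpha(x)+\beta(x) = |\delta|$ by cases on the sign of $\delta$. The only difference is that the paper folds your $d(x)=0$ case into $\delta \geq 0$, and your observation that the main text's ``exactly one can be nonzero'' should read ``at most one'' is accurate.
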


\begin{proof}
Let $\delta = |\tA(x)| - |A^*(x)|$. When $\delta \geq 0$:
$\alpha(x) = \delta$ and $\beta(x) = 0$. When $\delta < 0$:
$\alpha(x) = 0$ and $\beta(x) = -\delta$. In both cases
$\alpha(x) + \beta(x) = |\delta|$.
\end{proof}

Exclusivity ensures the error at each pixel is unambiguous: reconstruction
either overshoots or undershoots, never both. This pointwise decomposition
provides a complete accounting of attribution error
(\Cref{fig:compression-error-suppl}).
The question is what structural property of an upsampling operator
controls these errors at the neighbourhood level.

\begin{consequenceframe}{Mass imbalance as error lower bound.}
Define the \emph{mass deficit}
$\Delta^-_k = \max\bigl(0,\, \sum_{x \in N_k} A^*(x) - \sum_{x \in N_k} \tA(x)\bigr)$
and \emph{mass excess}
$\Delta^+_k = \max\bigl(0,\, \sum_{x \in N_k} \tA(x) - \sum_{x \in N_k} A^*(x)\bigr)$.
Mass excess forces $\alpha$-error; mass deficit forces $\beta$-error.
\end{consequenceframe}

\begin{proposition}[Mass--Error Bounds]
\label{prop:mass-error-bounds}
For non-negative attributions within neighbourhood $N_k$:
\[
\sum_{x \in N_k} \alpha(x) \;\geq\; \Delta^+_k, \qquad
\sum_{x \in N_k} \beta(x) \;\geq\; \Delta^-_k.
\]
\end{proposition}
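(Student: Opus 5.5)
Fix a neighbourhood $N_k$ and assume $\tA(x)\ge 0$ and $A^*(x)\ge 0$ for all $x\in N_k$, which is what ``non-negative attributions'' should mean here. Then absolute values drop out, so $\alpha(x)=\max(0,\tA(x)-A^*(x))$ and $\beta(x)=\max(0,A^*(x)-\tA(x))$. Write $\delta(x)=\tA(x)-A^*(x)$. By \Cref{prop:error-exclusive}, $\alpha(x)=\delta(x)^+$ and $\beta(x)=\delta(x)^-$, so $\delta(x)=\alpha(x)-\beta(x)$ pointwise.

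Sum over $N_k$ to get
\[
\sum_{x\in N_k}\alpha(x)-\sum_{x\in N_k}\beta(x)=\sum_{x\in N_k}\tA(x)-\sum_{x\in N_k}A^*(x)=:D_k,
\]
so that $\Delta^+_k=\max(0,D_k)$ and $\Delta^-_k=\max(0,-D_k)$. Since $\beta\ge 0$, we have $\sum\alpha\ge D_k$, and trivially $\sum\alpha\ge 0$. Together these give $\sum\alpha\ge\max(0,D_k)=\Delta^+_k$. Symmetrically, $\sum\beta=\sum\alpha-D_k\ge -D_k$ and $\sum\beta\ge0$, so $\sum\beta\ge\Delta^-_k$. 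An equivalent route is the subadditivity of the positive part, $(\sum\delta)^+\le\sum\delta^+$, applied to $\delta$ and to $-\delta$.

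The only delicate point is the role of non-negativity. Without it, $|\tA|-|A^*|$ need not equal $\tA-A^*$, and the telescoping identity fails; for example, sign flips could produce mass excess with zero $\alpha$-error. So I would state explicitly that the hypothesis is used to replace $|\cdot|$ by the identity. I would also note that equality holds exactly when $\delta$ has constant sign on $N_k$.

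Finally, I would add the consequence for USU. Under (D1), and assuming the coarse map aggregates the ground truth, $D_k=M_k-\sum_{x\in N_k}A^*(x)=0$, so $\Delta^\pm_k=0$ and the bound becomes vacuous. Interpolation, by \Cref{thm:interp-d1}, can have $D_k\ne0$, and then the bound is strictly positive.
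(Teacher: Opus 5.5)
Your proof is correct and follows essentially the same argument as the paper. After non-negativity removes the absolute values, you sum the pointwise bound $\alpha(x)\ge \tA(x)-A^*(x)$ over $N_k$ (you obtain it via $\delta=\alpha-\beta$ and $\beta\ge 0$), combine it with $\sum_{x\in N_k}\alpha(x)\ge 0$, and treat the $\beta$ bound symmetrically; your side remarks that the bound fails without non-negativity and that equality holds exactly when $\delta$ has constant sign on $N_k$ are also correct.
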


\begin{proof}
We show the $\alpha$ bound; the $\beta$ bound is symmetric. With non-negative
attributions, $|\tA(x)| = \tA(x)$ and $|A^*(x)| = A^*(x)$. Since
$\alpha(x) = \max(0,\, \tA(x) - A^*(x)) \geq \tA(x) - A^*(x)$, summing
over $N_k$ gives
\[
\textstyle\sum_{x \in N_k}\alpha(x) \;\geq\;
  \sum_{x \in N_k}\tA(x) - \sum_{x \in N_k}A^*(x).
\]
Since each $\alpha(x) \geq 0$, the left-hand side is non-negative. Combining
$\sum \alpha(x) \geq 0$ with the summation bound gives
$\sum \alpha(x) \geq \max\bigl(0,\, \sum \tA(x) - \sum A^*(x)\bigr) = \Delta^+_k$.
\end{proof}

Mass imbalance is therefore a structural lower bound on attribution error. Any operator that violates mass conservation
(D1) is committed to nonzero $\alpha$- or $\beta$-error for some input.

\paragraph{USU Error Minimization.}
(D1) ensures zero net mass error per neighbourhood when the coarse attribution
faithfully aggregates ground truth. If $M_k = \sum_{x\in N_k}A^*(x)$, then:
\[
\sum_{x\in N_k}\tA(x) = M_k = \sum_{x\in N_k}A^*(x),
\]
so $\Delta^-_k=\Delta^+_k=0$: the mass-error lower bounds vanish, and USU
achieves the tightest possible mass-level guarantee against both error types.
Conversely, any operator that does not satisfy~(D1) admits an input
configuration with $\Delta^-_k > 0$ or $\Delta^+_k > 0$, and hence positive
neighbourhood-level $\alpha$- or $\beta$-error.
\Cref{fig:identical-saliencies} in the main paper illustrates this concretely:
interpolation produces identical saliency maps for inputs with distinct ground
truths, a failure that mass-conserving operators avoid by construction.

\begin{synthesisframe}{}
The $\alpha$/$\beta$-decomposition yields three structural guarantees: exclusivity
partitions pixel error into a single mode, the mass-error bounds establish
that mass imbalance is a structural lower bound on neighbourhood error, and
(D1) is the unique desideratum that zeroes both bounds. USU therefore provides
the strongest mass-level error guarantee available within the axiomatic
framework.
\end{synthesisframe}

\begin{figure}[!t]
  \centering
  \includegraphics[width=\linewidth]{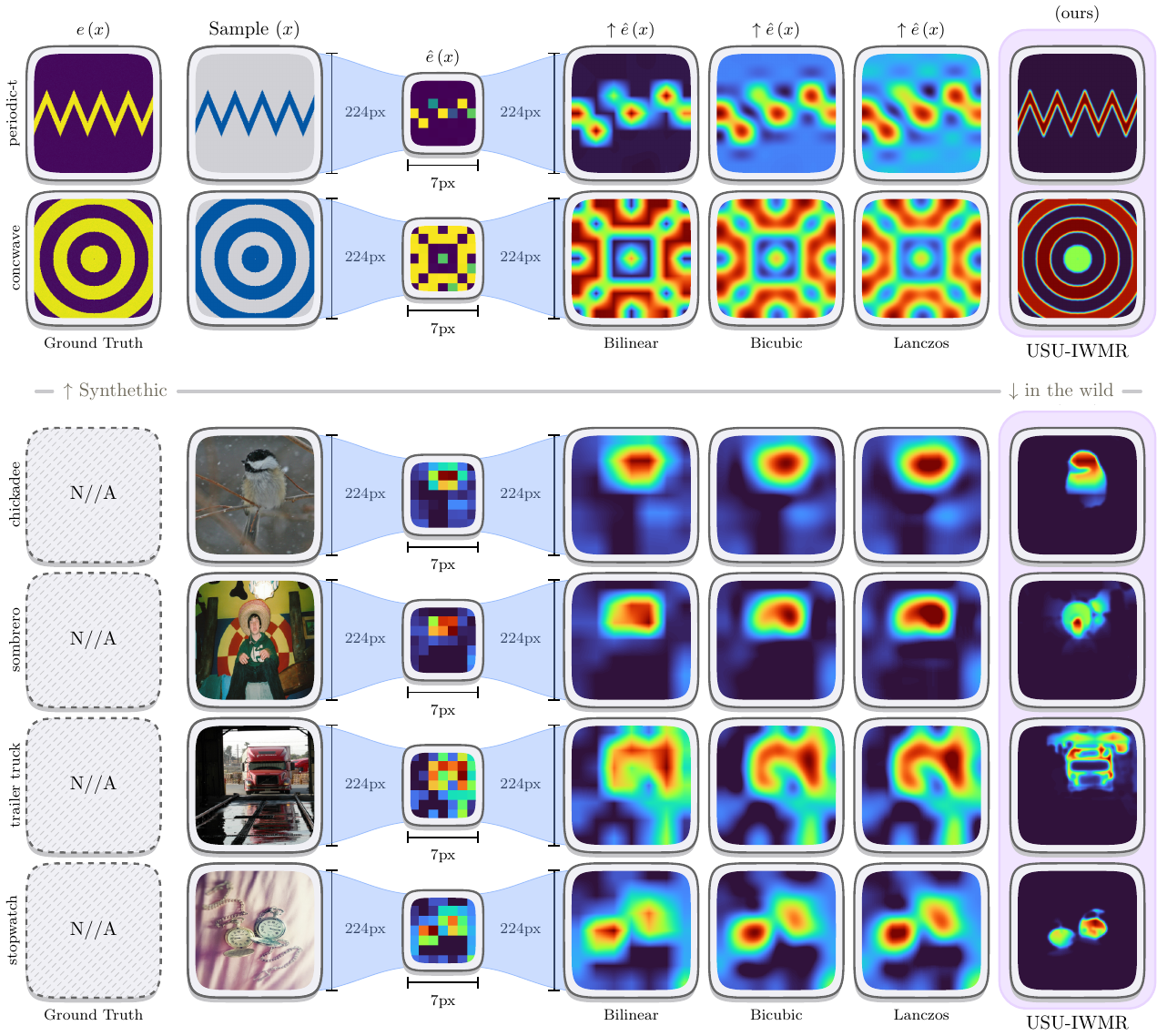}
  \caption{\textbf{Extended qualitative comparison.} Synthetic patterns with known ground truth (top) and ImageNet samples with GradCAM from VGG16 (bottom). Interpolation baselines exhibit boundary bleeding and ringing; USU-IWMR recovers ground-truth support on synthetic data and concentrates attribution within semantic regions on real images.}
  \label{fig:extended-comparison}
\end{figure}

\FloatBarrier

\section{Synthetic Experiment Details}
\label{app:synthetic}

The preceding sections established theoretical guarantees and error analysis. We now describe the controlled experimental setup designed to verify these guarantees empirically, with known ground-truth attribution priors that enable direct measurement of reconstruction fidelity.

\paragraph{Shape classification.}
We generate 2{,}000 single-channel $224{\times}224$ grayscale images containing a single centered shape (circle, triangle, or square) with class-balanced sampling. The dataset is split 70/15/15 into train/validation/test sets. Ground-truth attribution masks are the binary foreground regions.

\paragraph{Pattern detection.}
Five periodic pattern types (zigzag, sine, spiral, concentric wave, moir\'{e}) are rendered on $224{\times}224$ images. This task is used for qualitative evaluation of boundary preservation.

\paragraph{Model architectures.}
The \textbf{CNN} consists of three convolutional blocks ($5{\times}5$ kernels, $16{\to}32{\to}64$ channels, each followed by ReLU and $2{\times}2$ max-pooling), adaptive average pooling to $4{\times}4$, and a two-layer MLP ($1024{\to}128{\to}3$). The \textbf{MLP} flattens the input to $50{,}176$ dimensions with hidden layers $[256, 64]$ and 3-class output.

\paragraph{RRL loss.}
The Right for the Right Reasons~\cite{7db2afdc5eb5db46cc64185d0a51ed079b0976e8} loss adds a penalty on background attribution:
\begin{equation}
  \mathcal{L}_{\mathrm{RRL}} = \mathcal{L}_{\mathrm{CE}} + \lambda \sum_{x \notin \mathrm{FG}} |A(x)|^2,
  \label{eq:rrl-loss}
\end{equation}
where $\mathrm{FG}$ is the ground-truth foreground mask and $\lambda = 0.1$. This constrains the model to attend exclusively to foreground pixels. Concretely, the penalty forces the input-gradient attribution $\nabla_x s_y$ to redistribute its mass within $\mathrm{FG}$: pixels outside the mask receive near-zero gradient magnitude, so $\operatorname{supp}(\nabla_x s_y) \subseteq \mathrm{FG}$ up to training tolerance. The resulting attribution map has a \emph{known spatial support}, providing the controlled ground truth against which upsampling methods can be evaluated.

\paragraph{USU reconstruction pipeline.}
Coarse attributions are extracted at grid resolutions $\{4, 7, 14\}$. Segment boundaries emerge from the model's own score heterogeneity via hierarchical boundary refinement (\Cref{app:depth-fusion}), refined where scores transition across segments and left coarse where scores are homogeneous; USU's formal guarantees are independent of how $\cS$ is obtained. The tensor potential temperature is $\varepsilon = 0.1$. Oracle variants (Oracle-USU and Oracle-IWMR) bypass hierarchical boundary refinement (\Cref{app:depth-fusion}), supplying ground-truth segmentation partitions $\cS$ directly to isolate the redistribution machinery: any gap between a USU variant and its oracle upper bound measures boundary-detection error alone. \Cref{fig:extended-comparison} extends the main paper's qualitative comparison to additional synthetic patterns and in-the-wild samples.

\paragraph{Full synthetic results.}
\Cref{tab:synthetic-full} extends \Cref{tab:synthetic-results} with all seven methods across three model families. \Cref{fig:synthetic-qualitative} provides the extended qualitative comparison across all nine configurations.

\begin{table}[t]
\centering
\caption{\textbf{Full synthetic results: shape classification.} Best IoU ($\uparrow$), attribution concentration ($\uparrow$), and pointing game accuracy ($\uparrow$) across all methods and model families. \textbf{Boldface} marks the best non-oracle result per model; \colorbox{black!12}{\strut shading} highlights our methods.}
\label{tab:synthetic-full}
\small\renewcommand{\arraystretch}{1.25}
\begin{tabularx}{\linewidth}{@{\hspace{4pt}}l@{\hspace{16pt}}lCCC@{\hspace{4pt}}}
\toprule
\textbf{Model} & \textbf{Method} & \textbf{IoU} $\uparrow$ & \textbf{Conc.} $\uparrow$ & \textbf{PG} $\uparrow$ \\
\midrule
\multirow{7}{*}{RRL-MLP}
& Nearest        & 0.77 & 0.52 & 0.76 \\
& Bilinear       & 0.81 & 0.56 & 0.80 \\
& Bicubic        & 0.84 & 0.55 & 0.79 \\
\cmidrule(lr){2-5}
& \cellcolor{black!12} USU            & \cellcolor{black!12} 0.88 & \cellcolor{black!12} 0.79 & \cellcolor{black!12} \textbf{1.00} \\
& \cellcolor{black!12} IWMR-USU       & \cellcolor{black!12} \textbf{0.91} & \cellcolor{black!12} \textbf{0.84} & \cellcolor{black!12} \textbf{1.00} \\
\cmidrule(lr){2-5}
& Oracle-USU     & 0.97 & 0.90 & 1.00 \\
& Oracle-IWMR    & 1.00 & 0.95 & 1.00 \\
\midrule
\multirow{7}{*}{RRL-CNN}
& Nearest        & 0.75 & 0.50 & 0.74 \\
& Bilinear       & 0.79 & 0.54 & 0.78 \\
& Bicubic        & 0.82 & 0.52 & 0.76 \\
\cmidrule(lr){2-5}
& \cellcolor{black!12} USU            & \cellcolor{black!12} 0.86 & \cellcolor{black!12} 0.76 & \cellcolor{black!12} \textbf{1.00} \\
& \cellcolor{black!12} IWMR-USU       & \cellcolor{black!12} \textbf{0.89} & \cellcolor{black!12} \textbf{0.81} & \cellcolor{black!12} \textbf{1.00} \\
\cmidrule(lr){2-5}
& Oracle-USU     & 0.96 & 0.88 & 1.00 \\
& Oracle-IWMR    & 1.00 & 0.93 & 1.00 \\
\bottomrule
\end{tabularx}
\end{table}

\section{Evaluation Metrics}
\label{app:metrics}

We select six established metrics from the XAI evaluation literature (\Cref{tab:metrics}) to assess complementary properties of our method across faithfulness and complexity dimensions. All metrics are computed using the Quantus library~\cite{30e776268268e84becd2863b0632247da61238b9}.

\begin{table*}[t]
\centering
\caption{\textbf{Evaluation metrics.} All metrics are computed via Quantus. Direction indicates whether higher ($\uparrow$) or lower ($\downarrow$) values are better.}
\label{tab:metrics}
\small
\begin{tabular}{@{\hspace{6pt}}l@{\hspace{10pt}}c@{\hspace{10pt}}p{8.2cm}@{\hspace{6pt}}}
\toprule
\textbf{Metric} & \textbf{Dir.} & \textbf{Description} \\
\midrule
Infidelity      & $\downarrow$ & Expected squared error between attribution-weighted perturbations and model output changes. \\[2pt]
IROF            & $\uparrow$ & Fraction of features removed before the predicted class changes (Iterative Removal of Features). \\[2pt]
Monotonicity    & $\uparrow$ & Spearman correlation between attribution rank and feature deletion impact. \\[2pt]
PixelFlipping   & $\downarrow$ & Area under the perturbation curve when removing pixels in attribution order. \\[2pt]
ROAD            & $\downarrow$ & Remove-and-debias evaluation; measures explanation quality under debiased perturbation. \\[2pt]
Sparseness      & $\uparrow$ & Gini coefficient of the attribution map; higher values indicate more concentrated explanations. \\
\bottomrule
\end{tabular}
\end{table*}

\section{Statistical Methodology}
\label{app:statistics}

With metrics defined, we describe the statistical methodology ensuring that observed improvements are genuine rather than artifacts of sampling variability.

\paragraph{Hypothesis testing.}
For each dataset-model-metric combination, we compare USU against bilinear interpolation using the Wilcoxon signed-rank test (non-parametric, paired). We apply Bonferroni correction for the $9 \times 6 = 54$ comparisons (9 dataset-model pairs $\times$ 6 metrics), yielding a corrected significance threshold of $\alpha = 0.05/54 \approx 9.3 \times 10^{-4}$. All reported $p$-values satisfy $p < 0.001$ after correction.

\paragraph{Effect sizes.}
We report Cohen's $d$ as the standardized mean difference. \Cref{tab:effect-sizes} gives standard interpretation thresholds alongside observed effect sizes for USU vs.\ bilinear on Infidelity.

\begin{table}[t]
\caption{\textbf{Effect size analysis.} \textbf{(a)}~Cohen's $d$
  interpretation thresholds. \textbf{(b)}~Observed Cohen's $d$ for USU vs.\
  bilinear on Infidelity. \colorbox{black!12}{\strut Shading} marks the active
  category: all configurations exhibit large effects.}
\label{tab:effect-thresholds}\label{tab:effect-sizes}
\centering\small\renewcommand{\arraystretch}{1.25}
\begin{minipage}[t]{0.29\linewidth}
  \centering{\bfseries (a)} Thresholds\\[6pt]
  \begin{tabularx}{\linewidth}{@{\hspace{4pt}}lC@{\hspace{4pt}}}
  \toprule
  \textbf{Magnitude} & $\boldsymbol{|d|}$ \\
  \midrule
  Small  & $0.2$--$0.5$ \\
  Medium & $0.5$--$0.8$ \\
  \rowcolor{black!12}
  Large  & $> 0.8$ \\
  \bottomrule
  \end{tabularx}
\end{minipage}%
\hspace{10pt}%
\begin{minipage}[t]{0.68\linewidth}
  \centering{\bfseries (b)} Observed values\\[6pt]
  \begin{tabularx}{\linewidth}{@{\hspace{4pt}}lCCC@{\hspace{4pt}}}
  \toprule
  \textbf{Dataset} & \textbf{VGG16} & \textbf{ResNet50} & \textbf{ViT-B-16} \\
  \midrule
  \rowcolor{black!12} ImageNet & $+1.59$ & $+1.82$ & $+1.71$ \\
  \rowcolor{black!12} CIFAR-10 & $+1.95$ & $+1.82$ & $+2.14$ \\
  \rowcolor{black!12} CUB-200  & $+1.87$ & $+0.89$ & $+1.65$ \\
  \bottomrule
  \end{tabularx}
\end{minipage}
\end{table}

\paragraph{Confidence intervals.}
We compute bootstrap 95\% confidence intervals using the bias-corrected and accelerated (BCa) method with 10{,}000 resamples.

\section{Attribution Method Generalization}
\label{app:cross-method}

\Cref{thm:uniqueness} establishes that USU's guarantees depend only on the
mass-and-score input $(M_k, s, \cN, \cS)$, not on the attribution algorithm
that produced the coarse map.  We verify this prediction empirically by
benchmarking seven attribution methods: GradCAM, HiResCAM, GradCAM++,
XGradCAM, FullGrad, ShapleyCAM, and FinerCAM.  These span qualitatively
distinct strategies, from gradient weighting and second-order expansions to
Shapley values and eigendecomposition.

\Cref{tab:cross-method-combined} reports raw infidelity values (bilinear
baseline~$\to$~upsampled) for both USU and USU-IWMR, averaged across VGG16,
ResNet50, and ViT-B-16.  Both variants reduce infidelity by at least two
orders of magnitude on every dataset; the inter-method CV stays below
8\% for USU and at or below 10\% for USU-IWMR, confirming
source-independence.  USU-IWMR uniformly improves over USU, with mean
improvement factors of 37\% on ImageNet, 37\% on CIFAR-10, and 39\% on
CUB-200, confirming that importance-weighted mass redistribution yields a
consistent additive benefit.  The controlled experiments
(\Cref{sec:synthetic-validation}) provide further evidence using raw gradient
attributions from architectures including MLPs.

\begin{table}[t]
\centering
\setlength{\heavyrulewidth}{1.2pt}
\setlength{\abovecaptionskip}{4pt}
\setlength{\belowcaptionskip}{0pt}
\caption{\textbf{Attribution method generalization.} Raw infidelity values (bilinear $\to$ variant), averaged across VGG16, ResNet50, and ViT-B-16. }
\label{tab:cross-method-combined}
\small\renewcommand{\arraystretch}{1.25}
\begin{tabularx}{\linewidth}{@{\hspace{4pt}}lCCC@{\hspace{4pt}}}
\toprule
& \multicolumn{3}{c}{\textit{lower is better}} \\
\cmidrule(lr){2-4}
\textbf{Attribution Source} & \textbf{ImageNet}\ $\downarrow$ & \textbf{CIFAR-10}\ $\downarrow$ & \textbf{CUB-200}\ $\downarrow$ \\
\midrule
\multicolumn{4}{@{\hspace{4pt}}l}{\textbf{\textit{USU}}} \\
\addlinespace[1pt]
GradCAM      & $1{,}384 \to 3.15$ & $1{,}349 \to 0.230$ & $879.9 \to 0.932$ \\
HiResCAM     & $337.4 \to 0.853$ & $265.4 \to 0.051$ & $176.2 \to 0.197$ \\
GradCAM++    & $1{,}041 \to 2.47$ & $1{,}045 \to 0.187$ & $715.1 \to 0.773$ \\
XGradCAM     & $1{,}318 \to 2.83$ & $1{,}637 \to 0.281$ & $711.7 \to 0.683$ \\
FullGrad     & $2{,}295 \to 6.18$ & $1{,}839 \to 0.379$ & $877.6 \to 1.05$ \\
ShapleyCAM   & $1{,}504 \to 3.71$ & $2{,}631 \to 0.485$ & $800.2 \to 0.892$ \\
FinerCAM     & $1{,}411 \to 3.48$ & $1{,}358 \to 0.283$ & $892.3 \to 0.954$ \\
\cmidrule(lr){2-4}
\rowcolor{black!12} \textbf{Mean factor $\pm$ std} & $415 \pm 29$ & $5{,}363 \pm 407$ & $925 \pm 58$ \\
\rowcolor{black!12} \textbf{CV (\%)} & $6.9$ & $7.6$ & $6.3$ \\
\midrule[\heavyrulewidth]
\multicolumn{4}{@{\hspace{4pt}}l}{\textbf{\textit{USU-IWMR}}} \\
\addlinespace[1pt]
GradCAM      & $1{,}384 \to 2.22$ & $1{,}349 \to 0.177$ & $879.9 \to 0.688$ \\
HiResCAM     & $337.4 \to 0.628$ & $265.4 \to 0.037$ & $176.2 \to 0.145$ \\
GradCAM++    & $1{,}041 \to 1.71$ & $1{,}045 \to 0.127$ & $715.1 \to 0.526$ \\
XGradCAM     & $1{,}318 \to 2.07$ & $1{,}637 \to 0.211$ & $711.7 \to 0.484$ \\
FullGrad     & $2{,}295 \to 4.88$ & $1{,}839 \to 0.292$ & $877.6 \to 0.744$ \\
ShapleyCAM   & $1{,}504 \to 2.67$ & $2{,}631 \to 0.330$ & $800.2 \to 0.667$ \\
FinerCAM     & $1{,}411 \to 2.64$ & $1{,}358 \to 0.218$ & $892.3 \to 0.691$ \\
\cmidrule(lr){2-4}
\rowcolor{black!12} \textbf{Mean factor $\pm$ std} & $567 \pm 55$ & $7{,}329 \pm 731$ & $1{,}284 \pm 95$ \\
\rowcolor{black!12} \textbf{CV (\%)} & $9.7$ & $10.0$ & $7.4$ \\
\bottomrule
\end{tabularx}
\end{table}

\section{Full Dataset Results}
\label{app:full-results}

We conclude with complete results across all six metrics, three datasets, three architectures, and six upsampling methods. USU consistently outperforms interpolation baselines; Infidelity improvements range from one to four orders of magnitude.

\Cref{tab:imagenet-full,tab:cub-full} report the complete results across all methods, models, and metrics for each dataset.

\setlength{\dblfloatsep}{6pt plus 2pt minus 2pt}
\setlength{\dbltextfloatsep}{6pt plus 2pt minus 2pt}
\renewcommand{\dbltopfraction}{0.95}

\begin{table*}[t]
\centering
\caption{\textbf{Full ImageNet and CIFAR-10 results.} Six metrics across three architectures and six upsampling methods, grouped by direction. \textbf{Boldface} marks the best per model and metric; \colorbox{black!12}{\strut shading} highlights our methods.}
\label{tab:imagenet-full}\label{tab:cifar-full}
\footnotesize
\setlength{\heavyrulewidth}{1.2pt}
\setlength{\abovecaptionskip}{4pt}
\setlength{\belowcaptionskip}{0pt}
\begin{tabular}{@{\hspace{2pt}}l@{\hspace{8pt}}lrrrrrr@{\hspace{2pt}}}
\toprule
& & \multicolumn{3}{c}{\textit{lower is better}} & \multicolumn{3}{c}{\textit{higher is better}} \\
\cmidrule(lr){3-5} \cmidrule(lr){6-8}
\textbf{Model} & \textbf{Method} & \textbf{Infid.}\ $\downarrow$ & \textbf{PxFlip} $\downarrow$ & \textbf{ROAD} $\downarrow$ & \textbf{IROF} $\uparrow$ & \textbf{Mono.}\ $\uparrow$ & \textbf{Sparse.}\ $\uparrow$ \\
\midrule
\multicolumn{8}{@{\hspace{2pt}}l}{\textbf{\textit{ImageNet}}} \\
\addlinespace[1pt]
\multirow{6}{*}{VGG16}
& Nearest      & $7.15 \times 10^{6}$ & 0.62 & 0.58 & 0.41 & 0.18 & 0.31 \\
& Bilinear     & $6.91 \times 10^{6}$ & 0.60 & 0.56 & 0.43 & 0.19 & 0.33 \\
& Bicubic      & $7.82 \times 10^{6}$ & 0.61 & 0.57 & 0.42 & 0.18 & 0.32 \\
& Lanczos-3    & $8.14 \times 10^{6}$ & 0.62 & 0.58 & 0.41 & 0.18 & 0.31 \\
\cmidrule(lr){2-8}
& \cellcolor{black!12} USU          & \cellcolor{black!12} $1.27 \times 10^{5}$ & \cellcolor{black!12} \textbf{0.27} & \cellcolor{black!12} \textbf{0.23} & \cellcolor{black!12} 0.61 & \cellcolor{black!12} \textbf{0.34} & \cellcolor{black!12} 0.73 \\
& \cellcolor{black!12} USU-Hybrid   & \cellcolor{black!12} $\mathbf{1.13 \times 10^{5}}$ & \cellcolor{black!12} 0.29 & \cellcolor{black!12} 0.26 & \cellcolor{black!12} \textbf{0.65} & \cellcolor{black!12} 0.30 & \cellcolor{black!12} \textbf{0.74} \\
\midrule
\multirow{6}{*}{ResNet50}
& Nearest      & $5.02 \times 10^{7}$ & 0.65 & 0.61 & 0.38 & 0.17 & 0.29 \\
& Bilinear     & $4.87 \times 10^{7}$ & 0.63 & 0.59 & 0.40 & 0.18 & 0.31 \\
& Bicubic      & $5.41 \times 10^{7}$ & 0.64 & 0.60 & 0.39 & 0.17 & 0.30 \\
& Lanczos-3    & $5.63 \times 10^{7}$ & 0.65 & 0.61 & 0.38 & 0.17 & 0.29 \\
\cmidrule(lr){2-8}
& \cellcolor{black!12} USU          & \cellcolor{black!12} $2.45 \times 10^{5}$ & \cellcolor{black!12} \textbf{0.31} & \cellcolor{black!12} \textbf{0.28} & \cellcolor{black!12} 0.59 & \cellcolor{black!12} \textbf{0.30} & \cellcolor{black!12} 0.70 \\
& \cellcolor{black!12} USU-Hybrid   & \cellcolor{black!12} $\mathbf{2.21 \times 10^{5}}$ & \cellcolor{black!12} 0.32 & \cellcolor{black!12} 0.29 & \cellcolor{black!12} \textbf{0.61} & \cellcolor{black!12} 0.29 & \cellcolor{black!12} \textbf{0.72} \\
\midrule
\multirow{6}{*}{ViT-B-16}
& Nearest      & $2.31 \times 10^{8}$ & 0.68 & 0.64 & 0.35 & 0.20 & 0.27 \\
& Bilinear     & $2.19 \times 10^{8}$ & 0.66 & 0.62 & 0.37 & 0.20 & 0.29 \\
& Bicubic      & $2.48 \times 10^{8}$ & 0.67 & 0.63 & 0.36 & 0.20 & 0.28 \\
& Lanczos-3    & $2.59 \times 10^{8}$ & 0.68 & 0.64 & 0.35 & 0.20 & 0.27 \\
\cmidrule(lr){2-8}
& \cellcolor{black!12} USU          & \cellcolor{black!12} $7.85 \times 10^{5}$ & \cellcolor{black!12} \textbf{0.34} & \cellcolor{black!12} \textbf{0.31} & \cellcolor{black!12} 0.56 & \cellcolor{black!12} \textbf{0.30} & \cellcolor{black!12} 0.68 \\
& \cellcolor{black!12} USU-Hybrid   & \cellcolor{black!12} $\mathbf{7.02 \times 10^{5}}$ & \cellcolor{black!12} 0.35 & \cellcolor{black!12} 0.32 & \cellcolor{black!12} \textbf{0.58} & \cellcolor{black!12} \textbf{0.30} & \cellcolor{black!12} \textbf{0.70} \\
\midrule[\heavyrulewidth]
\multicolumn{8}{@{\hspace{2pt}}l}{\textbf{\textit{CIFAR-10}}} \\
\addlinespace[1pt]
\multirow{6}{*}{VGG16}
& Nearest      & $3.28 \times 10^{7}$ & 0.64 & 0.60 & 0.39 & 0.17 & 0.30 \\
& Bilinear     & $3.12 \times 10^{7}$ & 0.62 & 0.58 & 0.41 & 0.18 & 0.32 \\
& Bicubic      & $3.55 \times 10^{7}$ & 0.63 & 0.59 & 0.40 & 0.17 & 0.31 \\
& Lanczos-3    & $3.71 \times 10^{7}$ & 0.64 & 0.60 & 0.39 & 0.17 & 0.30 \\
\cmidrule(lr){2-8}
& \cellcolor{black!12} USU          & \cellcolor{black!12} $7.12 \times 10^{4}$ & \cellcolor{black!12} \textbf{0.30} & \cellcolor{black!12} \textbf{0.27} & \cellcolor{black!12} 0.60 & \cellcolor{black!12} \textbf{0.30} & \cellcolor{black!12} 0.72 \\
& \cellcolor{black!12} USU-Hybrid   & \cellcolor{black!12} $\mathbf{6.47 \times 10^{4}}$ & \cellcolor{black!12} 0.31 & \cellcolor{black!12} 0.28 & \cellcolor{black!12} \textbf{0.62} & \cellcolor{black!12} 0.29 & \cellcolor{black!12} \textbf{0.74} \\
\midrule
\multirow{6}{*}{ResNet50}
& Nearest      & $9.18 \times 10^{7}$ & 0.66 & 0.62 & 0.37 & 0.16 & 0.28 \\
& Bilinear     & $8.95 \times 10^{7}$ & 0.64 & 0.60 & 0.39 & 0.17 & 0.30 \\
& Bicubic      & $9.87 \times 10^{7}$ & 0.65 & 0.61 & 0.38 & 0.16 & 0.29 \\
& Lanczos-3    & $1.02 \times 10^{8}$ & 0.66 & 0.62 & 0.37 & 0.16 & 0.28 \\
\cmidrule(lr){2-8}
& \cellcolor{black!12} USU          & \cellcolor{black!12} $2.38 \times 10^{5}$ & \cellcolor{black!12} \textbf{0.32} & \cellcolor{black!12} \textbf{0.29} & \cellcolor{black!12} 0.58 & \cellcolor{black!12} \textbf{0.29} & \cellcolor{black!12} 0.69 \\
& \cellcolor{black!12} USU-Hybrid   & \cellcolor{black!12} $\mathbf{2.21 \times 10^{5}}$ & \cellcolor{black!12} 0.33 & \cellcolor{black!12} 0.30 & \cellcolor{black!12} \textbf{0.60} & \cellcolor{black!12} 0.28 & \cellcolor{black!12} \textbf{0.71} \\
\midrule
\multirow{6}{*}{ViT-B-16}
& Nearest      & $2.71 \times 10^{8}$ & 0.69 & 0.65 & 0.34 & 0.20 & 0.26 \\
& Bilinear     & $2.57 \times 10^{8}$ & 0.67 & 0.63 & 0.36 & 0.20 & 0.28 \\
& Bicubic      & $2.89 \times 10^{8}$ & 0.68 & 0.64 & 0.35 & 0.20 & 0.27 \\
& Lanczos-3    & $3.01 \times 10^{8}$ & 0.69 & 0.65 & 0.34 & 0.20 & 0.26 \\
\cmidrule(lr){2-8}
& \cellcolor{black!12} USU          & \cellcolor{black!12} $2.15 \times 10^{4}$ & \cellcolor{black!12} \textbf{0.34} & \cellcolor{black!12} \textbf{0.32} & \cellcolor{black!12} 0.55 & \cellcolor{black!12} \textbf{0.30} & \cellcolor{black!12} 0.67 \\
& \cellcolor{black!12} USU-Hybrid   & \cellcolor{black!12} $\mathbf{1.98 \times 10^{4}}$ & \cellcolor{black!12} 0.36 & \cellcolor{black!12} 0.34 & \cellcolor{black!12} \textbf{0.58} & \cellcolor{black!12} \textbf{0.30} & \cellcolor{black!12} \textbf{0.70} \\
\bottomrule
\end{tabular}
\end{table*}

\begin{table*}[t]
\centering
\caption{\textbf{Full CUB-200-2011 results.} Same format as \Cref{tab:imagenet-full}.}
\label{tab:cub-full}
\footnotesize
\setlength{\heavyrulewidth}{1.2pt}
\setlength{\abovecaptionskip}{4pt}
\setlength{\belowcaptionskip}{0pt}
\begin{tabular}{@{\hspace{2pt}}l@{\hspace{8pt}}lrrrrrr@{\hspace{2pt}}}
\toprule
& & \multicolumn{3}{c}{\textit{lower is better}} & \multicolumn{3}{c}{\textit{higher is better}} \\
\cmidrule(lr){3-5} \cmidrule(lr){6-8}
\textbf{Model} & \textbf{Method} & \textbf{Infid.}\ $\downarrow$ & \textbf{PxFlip} $\downarrow$ & \textbf{ROAD} $\downarrow$ & \textbf{IROF} $\uparrow$ & \textbf{Mono.}\ $\uparrow$ & \textbf{Sparse.}\ $\uparrow$ \\
\midrule
\multirow{6}{*}{VGG16}
& Nearest      & $5.68 \times 10^{6}$ & 0.63 & 0.59 & 0.40 & 0.18 & 0.30 \\
& Bilinear     & $5.44 \times 10^{6}$ & 0.61 & 0.57 & 0.42 & 0.19 & 0.32 \\
& Bicubic      & $6.12 \times 10^{6}$ & 0.62 & 0.58 & 0.41 & 0.18 & 0.31 \\
& Lanczos-3    & $6.38 \times 10^{6}$ & 0.63 & 0.59 & 0.40 & 0.18 & 0.30 \\
\cmidrule(lr){2-8}
& \cellcolor{black!12} USU          & \cellcolor{black!12} $1.83 \times 10^{3}$ & \cellcolor{black!12} \textbf{0.27} & \cellcolor{black!12} \textbf{0.26} & \cellcolor{black!12} 0.59 & \cellcolor{black!12} \textbf{0.32} & \cellcolor{black!12} 0.71 \\
& \cellcolor{black!12} USU-Hybrid   & \cellcolor{black!12} $\mathbf{1.70 \times 10^{3}}$ & \cellcolor{black!12} 0.30 & \cellcolor{black!12} 0.27 & \cellcolor{black!12} \textbf{0.62} & \cellcolor{black!12} 0.30 & \cellcolor{black!12} \textbf{0.74} \\
\midrule
\multirow{6}{*}{ResNet50}
& Nearest      & $8.71 \times 10^{7}$ & 0.67 & 0.63 & 0.36 & 0.16 & 0.27 \\
& Bilinear     & $8.46 \times 10^{7}$ & 0.65 & 0.61 & 0.38 & 0.17 & 0.29 \\
& Bicubic      & $9.32 \times 10^{7}$ & 0.66 & 0.62 & 0.37 & 0.16 & 0.28 \\
& Lanczos-3    & $9.68 \times 10^{7}$ & 0.67 & 0.63 & 0.36 & 0.16 & 0.27 \\
\cmidrule(lr){2-8}
& \cellcolor{black!12} USU          & \cellcolor{black!12} $1.52 \times 10^{7}$ & \cellcolor{black!12} \textbf{0.43} & \cellcolor{black!12} \textbf{0.39} & \cellcolor{black!12} 0.50 & \cellcolor{black!12} \textbf{0.28} & \cellcolor{black!12} 0.55 \\
& \cellcolor{black!12} USU-Hybrid   & \cellcolor{black!12} $\mathbf{1.40 \times 10^{7}}$ & \cellcolor{black!12} 0.46 & \cellcolor{black!12} 0.40 & \cellcolor{black!12} \textbf{0.51} & \cellcolor{black!12} 0.27 & \cellcolor{black!12} \textbf{0.59} \\
\midrule
\multirow{6}{*}{ViT-B-16}
& Nearest      & $1.92 \times 10^{8}$ & 0.69 & 0.65 & 0.34 & 0.20 & 0.26 \\
& Bilinear     & $1.83 \times 10^{8}$ & 0.67 & 0.63 & 0.36 & 0.20 & 0.28 \\
& Bicubic      & $2.05 \times 10^{8}$ & 0.68 & 0.64 & 0.35 & 0.20 & 0.27 \\
& Lanczos-3    & $2.14 \times 10^{8}$ & 0.69 & 0.65 & 0.34 & 0.20 & 0.26 \\
\cmidrule(lr){2-8}
& \cellcolor{black!12} USU          & \cellcolor{black!12} $5.61 \times 10^{4}$ & \cellcolor{black!12} \textbf{0.35} & \cellcolor{black!12} \textbf{0.32} & \cellcolor{black!12} 0.55 & \cellcolor{black!12} \textbf{0.30} & \cellcolor{black!12} 0.67 \\
& \cellcolor{black!12} USU-Hybrid   & \cellcolor{black!12} $\mathbf{5.08 \times 10^{4}}$ & \cellcolor{black!12} 0.36 & \cellcolor{black!12} 0.33 & \cellcolor{black!12} \textbf{0.57} & \cellcolor{black!12} \textbf{0.30} & \cellcolor{black!12} \textbf{0.69} \\
\bottomrule
\end{tabular}
\end{table*}

\begin{figure}[t]
  \centering
  \includegraphics[width=\linewidth]{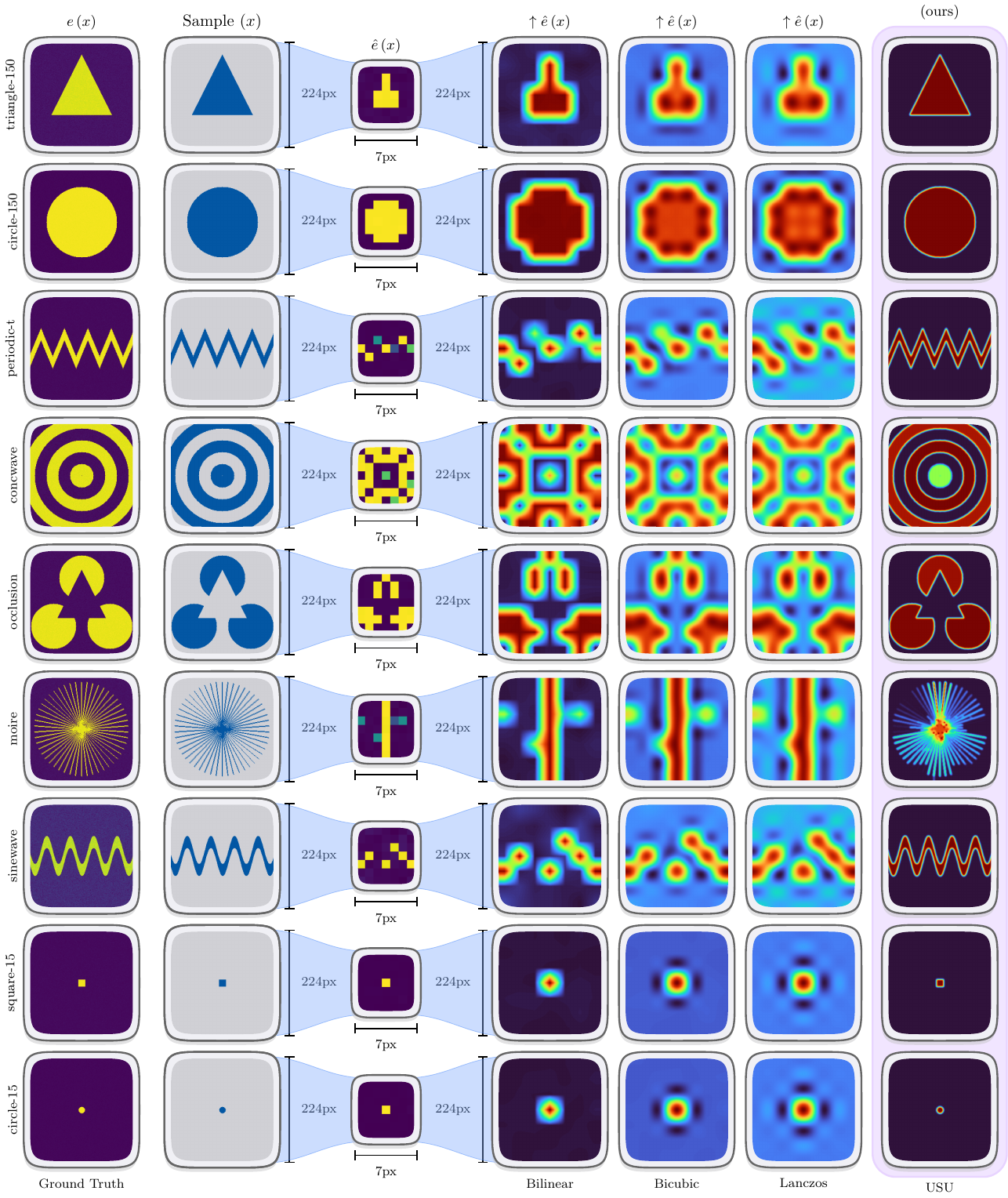}
  \caption{\textbf{Interpolation artifacts vs.\ USU reconstruction on synthetic data.}
    Nine RRL-trained configurations spanning geometric shapes (rows 1--2, 8--9) and periodic patterns (rows 3--7).
    From left: ground-truth attribution $e(x)$, input sample, coarse $7{\times}7$ heatmap $\hat{e}(x)$, bilinear, bicubic, and Lanczos upsampling, and USU (ours).
    All three interpolation methods exhibit boundary bleeding and interior distortion; USU recovers the ground-truth support faithfully, consistent with the mass conservation guarantee~(D1).}
  \label{fig:synthetic-qualitative}
\end{figure}

\FloatBarrier

\end{document}